  \providecommand\BibTeX{{%
    \normalfont B\kern-0.5em{\scshape i\kern-0.25em b}\kern-0.8em\TeX}}}
\newtheorem*{newthm*}{\normalfont\scshape Newtheorem}
\pgfplotsset{compat=1.9}
\newtheorem{assumption}{Assumption}
\newtheorem{definition}{Definition}
\newtheorem{theorem}{Theorem}
\newtheorem{lemma}{Lemma}
\newcommand{\sys}{\mbox{UDP-FL}\xspace}
\begin{document}

\title{Universally Harmonizing Differential Privacy Mechanisms for Federated Learning:  Boosting Accuracy and Convergence} 

\author{Shuya Feng}
\affiliation{%
  \institution{University of Connecticut}
  \city{Storrs}
  \state{CT}
  \country{USA}
}

\author{Meisam Mohammady}
\affiliation{%
  \institution{Iowa State University}
  \city{Ames}
  \state{IA}
  \country{USA}
}

\author{Hanbin Hong, Shenao Yan}
\affiliation{%
  \institution{University of Connecticut}
  \city{Storrs}
  \state{CT}
  \country{USA}
}

\author{Ashish Kundu}
\affiliation{%
  \institution{Cisco Research}
  \city{San Jose}
  \state{CA}
  \country{USA}
}

\author{Binghui Wang}
\affiliation{%
  \institution{Illinois Institute of Technology}
  \city{Chicago}
  \state{IL}
  \country{USA}
}

\author{Yuan Hong}
\affiliation{%
  \institution{University of Connecticut}
  \city{Storrs}
  \state{CT}
  \country{USA}
}

\begin{abstract}

Differentially private federated learning (DP-FL) is a promising technique for collaborative model training while ensuring provable privacy for clients. However, optimizing the tradeoff between privacy and accuracy remains a critical challenge. To our best knowledge, we propose the first DP-FL framework (namely \sys), which universally harmonizes any randomization mechanism (e.g., an optimal one) with the Gaussian Moments Accountant (viz. DP-SGD) to significantly boost accuracy and convergence. 
Specifically, 
\sys demonstrates enhanced model performance by mitigating the reliance on Gaussian noise. The key mediator variable in this transformation is the R\'enyi Differential Privacy notion, which is carefully used to harmonize privacy budgets. We also propose an innovative method to theoretically analyze the convergence for DP-FL (including our \sys) based on mode connectivity analysis. Moreover, we evaluate our \sys through extensive experiments benchmarked against state-of-the-art (SOTA) methods, demonstrating superior performance on both privacy guarantees and model performance. Notably, \sys exhibits substantial resilience against different inference attacks, indicating a significant advance in safeguarding sensitive data in federated learning environments.\footnote{Code for \sys is available at \url{https://github.com/RainyDayChocolate/UDP-FL}.}

\end{abstract}

\keywords{Differential Privacy, Federated Learning, Convergence Analysis, Privacy Attacks}

\settopmatter{printacmref=false}

\renewcommand\footnotetextcopyrightpermission[1]{} 
\pagestyle{plain} 

\maketitle

\section{Introduction}
\label{intro}

As the volume of data generated by emerging applications continues to grow exponentially, data analysis tasks are often outsourced to cloud servers or distributed storage. However, the sharing of data is a critical issue due to its sensitivity, and any leakage could result in significant losses. Federated learning (FL)~\cite{mcmahan_2018a} has emerged as a solution that allows multiple parties to train a machine learning model jointly without exchanging local data. Despite the absence of local data exposure, sensitive information about the data can still be leaked through the exchanged model parameters via, e.g., membership inference attacks \cite{shokri2017membership, hayes2019logan,carlini2022membership, zhang2020gan, nasr2019comprehensive}, data reconstruction attacks \cite{jin2021cafe, gong2023gradient, geiping2020inverting, li2022auditing}, and attribute inference attacks \cite{attrinf, ganju2018property, chen2022practical, lyu2021novel,ArevaloNDHW24}.

Differential privacy (DP) has been proposed to provide rigorous privacy guarantees, ensuring that any data sample or user's data at any client cannot influence the output of a function (e.g., the gradient or model parameters in FL) \cite{abadi2016deep, adnan2022federated, li2020privacy, xiao2010differential}. 
However, directly applying existing DP mechanisms and accounting approaches to FL can result in excessive noise addition and loose privacy guarantees. 

Recent techniques, such as the advanced accounting of privacy~\cite{abadi2016deep, pmlr-v151-zhu22c, wang2022analytical, wang2019subsampled} and R\'enyi Differential Privacy (RDP)~\cite{van2014renyi}, have shown remarkable results in optimizing accounting for privacy loss in machine learning. These techniques significantly improved the tradeoff between data privacy and model utility. Nevertheless, applying the Moments Accountant (MA) as a budget economic solution to other DP mechanisms can be challenging. One reason is that deriving the moments accountant for each DP mechanism often requires a heavy analysis of the tails in their probability density function (PDF), which can be difficult and may differ from one mechanism to another. Accountants rely heavily on the Gaussian mechanism to ensure DP, but this often results in excessive perturbation of gradients. This can make it hard to achieve a satisfactory balance between privacy and utility in existing DP-FL methods \cite{agarwal2021skellam,zheng2021federated,triastcyn2019federated,9685644,9069945,geyer2017differentially,hu2020personalized,dpfl-review}, which are dominantly based on the Gaussian mechanism and the DP-SGD variants.

Building upon the limitations of existing DP-FL methods \cite{agarwal2021skellam,zheng2021federated,triastcyn2019federated,9685644,9069945,geyer2017differentially,hu2020personalized}, we propose a novel universal solution for DP-FL called \sys, which offers a comprehensive approach for achieving DP in FL that extends beyond the popular DP-SGD algorithm (Gaussian). It universally adopts different DP mechanisms (e.g., Staircase \cite{staircase} \emph{which greatly outperforms Gaussian in terms of noise magnitude}\footnote{The Staircase mechanism \cite{staircase} has been proven to be optimal for $\ell_1$ and $\ell_2$ metrics for a wide range of privacy budget $\epsilon$ \cite{staircase,mohammady2020r2dp}.}) and harmonizes privacy guarantees under a unified framework. It allows for tighter budget accounting and comparison of privacy guarantees between different DP-FL techniques, e.g., the Gaussian, Laplace and Staircase noise additive mechanisms, using the R\'enyi DP notion as a mediator variable \cite{mironov2017renyi,wang2019subsampled}. This approach provides greater flexibility and generalizability for real-world scenarios. Specifically,

\vspace{0.05in}

\noindent \textbf{DP-noise Harmonizer.} To address the challenges of distributed learning systems, we rely on the Harmonizer component to enhance the budget accountant and management and improve the applicability and convergence of DP-FL across diverse scenarios. It harmonizes different DP mechanisms with the R\'enyi divergence, providing a generalized, flexible, and universal approach to DP-FL that adapts to various requirements, applications, and scenarios while ensuring the best privacy-utility trade-off for each case. Moreover, the Harmonizer maps the Gaussian Moments Accountant \cite{abadi2016deep} to the corresponding R\'enyi DP of other DP mechanisms for FL, allowing it to measure privacy loss using R\'enyi divergence \cite{abadi2016deep, mironov2017renyi, wang2019subsampled}. This algorithm calculates privacy leakage for each training round and guarantees that the leakage of the adopted DP mechanism (e.g., Staircase mechanism \cite{staircase}) does not exceed the Gaussian version (viz. DP-SGD and other variants). 

\vspace{0.05in}

\noindent \textbf{Universal Convergence Analysis}. In this work, we also take the first cut to use the concept of mode connectivity\footnote{Loss surfaces of deep neural networks can have many connected regions or ``modes''. These modes can be connected by paths of low loss, which may be possible to move between during training.} 
for universally analyzing the \emph{convergence of different DP-FL frameworks} (including \sys). This analysis is operationalized through a series of transformative steps: transitioning from DP mechanisms to mode connectivity \cite{garipov_2018a,zhao_2020a, gotmare_2018a}, conducting a mode connectivity-based convergence analysis, and then relating the findings back to a broader range of DP mechanisms. Notably, \sys demonstrates a markedly accelerated convergence rate compared to baselines (e.g., NbAFL \cite{9069945}, DP-SGD \cite{abadi2016deep}) and sometimes even non-private aggregation algorithm FedAvg (No DP) \cite{mcmahan2017communication}, underscoring the effectiveness of our methodological innovations in convergence analysis.

\vspace{0.05in}

\noindent \textbf{Robustness against Privacy Attacks}. \sys demonstrates significant resilience against a spectrum of privacy attacks, aligning with advanced theories presented in recent work \cite{salem2023sok}. Through extensive empirical studies, our results reveal a substantial reduction in the success rate of Membership Inference Attacks (i.e., LiRA~\cite{carlini2022membership}) compared with non-private FL, highlighting the framework's effectiveness in preserving privacy. While DP is not inherently designed to combat Attribute Inference Attacks, we observed a reduced correlation in feature learning. Against Data Reconstruction Attacks, \sys has proven to be adept at preventing the reconstruction of original training data, thereby reinforcing its robustness in protecting data privacy within DP-FL environments. 

\vspace{0.05in}

Thus, the key contributions of this work are summarized below:

\begin{enumerate}

\item To our best knowledge, we propose the first DP-FL framework (\sys), which universally harmonizes different differential privacy mechanisms with tighter privacy bounds and faster convergence compared to SOTA methods. The tighter privacy bounds and faster convergence of \sys significantly improve the model accuracy (given the same privacy guarantees) and efficiency (i.e., reducing the computation and communication overheads) of different DP solutions for federated learning. Table \ref{tab:method-comparison} shows the superior performance of our \sys framework compared to existing DP-FL methods.

\vspace{0.05in}

\item To our best knowledge, we also take the first step to introduce a mode connectivity-based method for analyzing the convergence of DP-FL models. This approach provides novel insights into the complex dynamics of privacy-utility trade-offs, and our mode connectivity-based method bridges the gap between theoretical analyses and practical applications for DP-FL (including UDP-FL) on convergence. 

\vspace{0.05in}

\item We comprehensively evaluate the robustness of our \sys on defending against advanced privacy attacks, including the membership inference \cite{shokri2017membership, hayes2019logan}, data reconstruction \cite{fredrikson2015model, zhu2019deep}, and attribute inference attacks \cite{attrinf, ganju2018property}. To our best knowledge, these are not fully explored for SOTA methods.  

\end{enumerate}

\vspace{0.1in}

\begin{table}[!h]
\centering
\footnotesize
\caption{Comparison of representative DP-FL methods on accuracy, privacy, and convergence (observed based on theoretical analyses and empirical results in all the works).}
\label{tab:method-comparison}
\setlength{\tabcolsep}{3pt} 
\begin{tabular}{c|cccc}
\hline
Method    & Noise & Accuracy & Privacy & Conv.\\ \hline
DP-FedAvg \cite{geyer2017differentially,mcmahan_2018a} & Gaussian & High & DP & Slow \\
NbAFL \cite{9069945} & Gaussian & Low & DP & Slow \\
RDP-PFL \cite{10177379} & Gaussian & High & DP & Slow \\
ALS-DPFL \cite{ling2023adaptive} & Gaussian & Low & DP & Slow \\
LDP-Fed \cite{sun2020ldp} & Rand. Response & Low & LDP-Shuffle & N/A \\
CLDP-SGD \cite{girgis2021shuffled} & Rand. Response & Low & LDP-Shuffle & N/A \\
LDP-FL \cite{girgis2021renyi} & Rand. Response & Low & LDP-Shuffle & N/A \\
COFEL-AVG \cite{lian2021cofel} & Laplace & Low & DP & Slow \\
\hline
\textbf{UDP-FL (Ours)} &
  \textbf{Universal} &
  \textbf{High} &
  \textbf{DP} &
  \textbf{Fast} \\ \hline
\end{tabular}
\vspace{-3mm}
\end{table}

\section{Preliminaries}
\label{Preliminary}

\subsection{System and Adversaries}

In this work, we follow the standard semi-honest adversarial setting for differentially private federated learning (DP-FL) where the adversary can possess arbitrary background knowledge. The server is honest-but-curious by following the protocol but attempting to derive private information about the client's data from the exchanged messages during the training process. Clients are also categorized as ``honest-but-curious'', by strictly adhering to the protocol without deviating from established procedures. Key responsibilities include refraining from manipulating local model updates and avoiding the use of poisoned or false data in the training. Upholding these guidelines is essential for maintaining the integrity and security of the global model, ensuring its reliability and robustness.

In terms of privacy, both \sys (across all mechanisms) and the DP-SGD are adding noise into local gradients during the training process. Despite this, the disclosure of trained model parameters is proven to preserve $(\epsilon,\delta)$-DP \cite{abadi2016deep}. The model parameters, viewed as post-processed results of the DP guaranteed noisy gradients, do not affect the privacy leakage. 

Despite the distributed nature of federated learning offering collaborative model training, the challenge of preserving data privacy persists. The privacy concerns stem from the potential leakage of sensitive information through clients' local model updates. We also empirically evaluate the performance of \sys against privacy attacks, including membership inference attacks (MIAs) \cite{shokri2017membership, hayes2019logan}, data reconstruction attacks (DRAs) \cite{fredrikson2015model, zhu2019deep}, and attribute inference attacks (AIAs) \cite{attrinf, ganju2018property}. Their settings (which are different from DP-FL) will be discussed in Section \ref{sec:experiment}.

\subsection{Federated Learning}\label{sec:Federated Learning}
FL is an emerging distributed learning approach that enables a central server to coordinate multiple clients to jointly train a model without accessing to the raw data. 
Assuming the FL system has $N$ clients $\mathcal{C} = \{C_1, C_2, \cdots, C_N\}$ and each client $C_k$ owns a private training dataset $\mathcal{D}_k = \{({\bf x}_j^k, y_j^k)\}$ with $|\mathcal{D}_k|$ samples and each sample $\mathbf{x}_j^k$ has a label $y_j^k$. 
Then, FL considers the following distributed optimization problem:  
\begin{align}
\label{eqn:disopt}
\min_{w} F(w) = \sum_{k=1}^N p_k F_k(w),
\end{align}
where $p_k \geq 0$ is the client $C_k$'s weight and $\sum_{k=1}^N p_k = 1$; 
Each client $C_k$'s local objective is defined by 
$F_k(w) = \frac{1}{|\mathcal{D}_k|} \sum_{j=1}^{|\mathcal{D}_k|} \ell(w; ({\bf x}_j^k, y_j^k))$, with $\ell(\cdot;\cdot)$ a user-specified loss function, e.g., cross-entropy loss. 

FedAvg~\cite{mcmahan2017communication} is the \emph{de facto} FL algorithm to solve Equation~\eqref{eqn:disopt} in an iterative way. 
It has the following steps: 
\begin{enumerate}
 \vspace{0.05in}
\item {\bf Global Model Initialization.} The server initializes a global model $w^0$, selects a random subset $\mathcal{S}_n$ of $n$ clients from $\mathcal{C}$,  and broadcasts  $w^0$ to all clients in $\mathcal{S}_n$.

\vspace{0.05in}

\item {\bf Local Model Update.} In each global epoch $t$, each client $C_{k}$ receives the global model $ w^{t}$, initializes its local model $w_{k}^{t}$ as $w^{t}$, and updates the local model by minimizing $F_k(w^t)$ on the local dataset $D_{k}$. 
E.g., when running SGD, we have: $w_{k}^{t} \gets w_{k}^{t} - \eta_t \nabla_{w_k^t} F_k(w^t)$, where  $\eta_t$ is the learning rate in the $t$-th epoch. 

\vspace{0.05in}

\item {\bf Global Model Update.}  The server collects the updated  client models $\{w_{k}^{t}\}$ and updates the global model $w^{t+1}$ for the next round via an aggregation algorithm. For instance, when using  FedAvg~\cite{mcmahan2017communication}, the updated global model is: $ w^{t+1}\gets  \frac{N}{n} \sum_{C_k \in \mathcal{S}_n} p_k w_{k}^{t}$, which is then broadcasted to clients for the next round.

\vspace{0.05in}

\item Repeat Steps 2 and 3 until the global model converges. 
\end{enumerate}

\subsection{Differential Privacy and R\'enyi  Accountant}
The use of DP in FL enhances the benefits of collaborative model training with the need for protecting data privacy. It ensures that each data sample or user's contribution to the model training process is indistinguishable from others, and it can be implemented by adding noise to the gradients or parameters of the model or by using secure aggregation techniques. The notion of DP can be defined as below. 

\begin{definition}[$(\epsilon,\delta)$-Differential Privacy~\cite{dwork2006calibrating,dwork2006differential}] A randomization algorithm $\mathcal{A}$ is $(\epsilon,\delta)$-differentially private if for any adjacent databases $d, d'$ that differ on a single element, and for any output set $\Omega\subseteq range(\mathcal{A})$, we have
$Pr[\mathcal{A} (d)\in \Omega]\leq e^{\epsilon}{Pr[\mathcal{A} (d')\in \Omega]}+\delta$, and vice versa. 
\label{def:eddp} 
\end{definition}

FL with $(\epsilon,\delta)$-DP generally requires hundreds of training rounds to obtain a satisfactory model. R\'enyi accountant \cite{wang2022analytical,wang2019subsampled} via 
R\'enyi Differential Privacy (RDP) \cite{mironov2017renyi} has been proposed to provide 
tighter privacy bounds on the privacy loss than the standard DP. 
RDP is defined over the R\'enyi divergence \cite{van2014renyi}. Recall that for two probability distributions $P$ and $Q$, their R\'enyi divergence is defined as $\mathcal{D}_\alpha(P||Q)=\frac{1}{\alpha-1}\log\mathbf{E}_{x\sim Q}(\frac{P(x)}{Q(x)})^\alpha$ where $x$ denotes a random variable and $\alpha>1$ is the R\'enyi divergence order. Thus, the RDP can be defined as below.  

\begin{definition}[$(\alpha,\gamma)$-R\'enyi Differential Privacy \cite{mironov2017renyi}]A randomized mechanism $\mathcal{A}$ is said to have $\gamma$-R\'enyi differential privacy of order $\alpha$, if for any adjacent datasets $d, d'$ that differ on a single element, and for any output set $\Omega\subseteq range(\mathcal{A})$, the R\'enyi divergence $\mathcal{D}_\alpha [\mathcal{A}(d)=\Omega||\mathcal{A}(d')=\Omega] \leq \gamma$ holds.
\label{def:rdp}
\end{definition}

R\'enyi accountant \cite{abadi2016deep,wang2019subsampled,mironov2019r} is a method for managing and assessing the cumulative privacy loss in a sequence of DP operations. It operates by tracking the cumulant generating function (CGF) of the privacy loss random variable over the sequence of operations. Specifically, the R\'enyi accountant evaluates the CGF at a series of fixed points corresponding to different orders of R\'enyi divergence, thus enabling the calculation of an overall privacy guarantee for the sequence. This overall guarantee is expressed in terms of R\'enyi Differential Privacy, providing a more nuanced and tighter estimation of privacy loss compared to traditional methods. The R\'enyi accountant is particularly effective in complex scenarios, such as those encountered in machine learning algorithms, where multiple DP operations are composed over time. 

Although the efficacy of the R\'enyi accountant in providing a refined estimation of privacy loss becomes increasingly significant when addressing privacy loss in FL, several challenges still exist. 
These include non-optimal noise mechanisms like Gaussian or Laplace that degrade accuracy, loose DP guarantees in complex systems, difficulty in tracking privacy loss across diverse clients, and reduced convergence speed. These limitations underscore the need for developing an enhanced DP-FL framework that \emph{universally ensures tighter privacy out of diverse DP mechanisms} while maintaining fast convergence and accuracy. The R\'enyi accountant \cite{mironov2019r} adopted in \sys helps to address these challenges by providing tighter accounting of privacy loss across numerous training rounds. Moreover, other recent accountants \cite{pmlr-v151-zhu22c,abadi2016deep,wang2022analytical,wang2019subsampled} can also act as viable alternatives, offering flexibility in the choice of DP composition. Our framework's design is orthogonal to the specific choice of accountant, meaning it is adaptable and could incorporate even tighter accounting methods as they become available in the future. 

\subsection{Mode Connectivity}
\label{section:background_mode_connectivity_definition}
Machine learning involves searching for a model parameter that minimizes a trainer-specified loss function \cite{bishop_2006a}. We can visualize this process geometrically as moving on a loss surface to search the global minima. However, recovering the global minima becomes harder as the dimension of the model parameters increases, making it more likely that training terminates at a local minimum for deep models \cite{auer_1995a, choromanska_2015a}. Recent research has shown that local minima on the loss surface (aka. \textit{modes} of the loss function) are connected by simple curves, a phenomenon known as mode connectivity \cite{garipov_2018a}. Mode connectivity is a strong ensembling technique (aka Fast Geometric Ensembling) developed to facilitate transporting from one mode (local minima) to another mode with possibly lower loss. 

To find a curve between two modes, we perform a \textit{curve finding procedure} similar to \cite{garipov_2018a}. Let $w_{1}$ and $w_{2}$ be the model parameters in $\mathbb{R}^{d}$ for two local minima, and $\ell(w)$ be the cross-entropy loss function. Furthermore, let $\phi_{\theta}:[0, 1] \rightarrow \mathbb{R}^{d}$ be a continuous piece-wise smooth parametric curve, with parameters $\theta$, such that the curve endpoints are $\phi_{\theta}(0)=w_{1}$ and $\phi_{\theta}(1)=w_{2}$. Our goal is to find the parameters $\theta$ that minimize the expectation of the loss $\ell(\phi_{\theta}(p))$ ($p \in [0, 1]$) w.r.t. a uniform distribution on the \textit{curve parameter}:
\begin{equation}
L(\theta) = \mathbb{E}_{p \sim U(0, 1)}[\ell(\phi_{\theta}(p))]
\label{equation:mc_loss}
\end{equation}

The generic curve function $\phi_{\theta}(p)$ has been characterized as a polygonal chain or Bezier curve in prior works \cite{park2019deepsdf, chen2018neural}. An example of the polygonal chain characterization can be seen in Equation \ref{equation:polygonal_chain}. The bend in the curve is parameterized by $\theta$, and the curve finding procedure minimizes Equation \eqref{equation:mc_loss}.
\begin{equation}
\phi_{\theta}(p)=
\begin{cases}
2(p\theta+(0.5-p)w_{1}), & 0 \le p \le 0.5 \\
2((p-0.5)w_{2} + (1-p)\theta), & 0.5 \le p \le 1
\end{cases}
\label{equation:polygonal_chain}
\end{equation}

The curve finding procedure begins by initializing $\theta$ as the midpoint of the line segment between $w_{1}$ and $w_{2}$. Then, in each training round, we sample $\hat{p}$ from a uniform distribution, and a gradient step is applied to $\theta$ with respect to $\ell(\phi_{\theta}(\hat{p}))$. This effectively fixes the curve endpoints $w_{1}$ and $w_{2}$ while allowing the midpoint (i.e., parameters $\theta$) to move in the weight space. To recover the model parameters from each point of a trained curve, we first train a polygonal curve using the described procedure. Once we obtain the optimal midpoint parameters $\theta$, we can recover the model parameters by setting a value for the curve parameter $p$ and evaluating the curve at that point. For instance, if we set $p=0.75$, then the point $\phi_{\theta}(0.75)$ corresponds to the midpoint of the line segment between $\theta$ and the original curve endpoint $w_{2}$, which can be expressed as $0.5w_{2} + 0.5\theta$. Consequently, the value at each dimension of the recovered model is equal to the average value of that dimension between $w_{2}$ and the optimal parameters $\theta^{*}$.

\section{\sys Framework}
\label{framework}
In this section, we propose a comprehensive framework, called universal DP-FL (\sys), for achieving superior privacy-utility tradeoff and faster convergence in FL. 

\subsection{Building Blocks of \sys}
\label{sec:buildingblock}

\noindent\textbf{DP Mechanisms}. \sys's ability to integrate diverse DP mechanisms makes it a universal framework for FL applications. It harmonizes different DP mechanisms (e.g., Laplace, Staircase \cite{staircase}, and also Gaussian) to provide strong privacy guarantees while preserving the quality of training outcomes. 
This diversity in DP mechanisms caters to the specific requirements of different FL applications, offering a more adaptable and effective approach than Gaussian-only methods. The Staircase mechanism, for instance, proves its optimality w.r.t. $\ell_1$ and $\ell_2$ metrics (except the very small $\epsilon$) \cite{staircase,mohammady2020r2dp}, showcasing its superiority in scenarios demanding stronger privacy guarantees without sacrificing model accuracy. This multi-mechanism strategy enhances \sys's flexibility and robustness, making it a versatile and powerful tool in DP-FL. In this paper, due to the optimality under specific $\epsilon$ settings, we adopt the Staircase mechanism \cite{staircase} as our instantiated DP mechanism for FL, along with the Gaussian and Laplace mechanisms. \sys with Gaussian are equivalent to DP-SGD in FedAvg which is further used as one of the baselines in the experiment section. Other alternative optimal or complex DP mechanisms \cite{mohammady2020r2dp,chanyaswad2018mvg} would also work in a similar manner. The choice of mechanism depends on the specific requirements of the FL task, such as the nature of the data, the desired privacy-utility trade-off, and the computational constraints.

\vspace{0.05in}

\noindent\textbf{Harmonizer.} The Harmonizer, pivotal in \sys, stands as a fundamental tool enabling accurate, customized privacy accounting for a diverse range of DP mechanisms in FL. Adapting from the popular Moments Accountant technique, originally formulated for Gaussian noise, the Harmonizer extends its applicability to mechanisms like the Staircase, Laplace, and more, thereby addressing both $\ell_1$ and $\ell_2$ data sensitivities. The Harmonizer in UDP-FL serves as a centralized budget management system akin to Cohere's budget control system. It allows clients to specify their own privacy budgets and noise mechanisms and then unifies these diverse preferences using Rényi divergence. The Harmonizer tracks the privacy budget consumption at a fine-grained level, considering both individual clients and the shuffler. This enables more efficient budget utilization and tighter privacy analysis through R\'enyi divergence.

In FL, where numerous training epochs are necessary, the Harmonizer plays a crucial role in balancing the privacy-utility tradeoff. It leverages the RDP order of $\alpha$ to track the privacy loss across multiple iterations. This flexibility is essential in determining when to halt training based on accumulated privacy loss, thus preventing excessive privacy leakage.

While the Gaussian mechanism is a common choice in moments accountant for noise generation, \sys primarily focuses on the Staircase mechanism for its superior adaptability and flexibility, especially in scenarios with limited knowledge about clients' private training data. Staircase does not undermine the utility of other mechanisms like Laplace, which can be optimal in specific cases, but positions Staircase as the more comprehensive choice in \sys. 

The noise multiplier is a scalar value that determines the magnitude of noise added to the model updates in differentially private federated learning. It is a crucial parameter that balances the trade-off between privacy and utility. The calculation of the noise multiplier prior to training, a key function of the Harmonizer, involves several steps: taking inputs like privacy parameters, training rounds, and client data sampling rate; initially setting and then adjusting the noise multiplier through trial runs; and computing R\'enyi divergence for each iteration to ensure privacy loss is within the set bounds (see detailed procedures in Algorithm \ref{algm:Harmonizer} in Appendix \ref{App:Def}). In addition, Table \ref{table:rdpsummarize2} presents the R\'enyi divergence for several example DP mechanisms (including the Laplace and Gaussian mechanisms as baselines). It exemplifies the Harmonizer's capability to harmonize various DP mechanisms, making \sys a robust and adaptable framework in the realm of differentially private FL.

Overall, the Harmonizer not only selects the optimal noise multipliers but also enables clients to customize their privacy settings based on their specific needs and risk tolerance. By using R\'enyi divergence, it harmonizes various privacy preferences and noise mechanisms, ensuring efficient management of the overall privacy budget across all participants in the federated learning process.

\subsection{\sys Framework}

In this section, we present the main steps in \sys. 
\begin{enumerate}

    \item \textbf{Local Data Preparation}. The clients collect and store their data locally. Once their data is ready, clients specify privacy parameters ($\epsilon, \delta$) and send them to the server.

\vspace{0.05in}

    \item \textbf{Global Model Initialization}. Server initializes a global model, selects a random subset of clients, and sends the current global model parameters to the selected clients.

\vspace{0.05in}
    
    \item \textbf{Local Model Update}. Selected clients update their model locally. First, initialize local model parameters with current global model parameters. Then, sample local data uniformly at random and split them into batches. Perform local model updates using the {\bf Harmonizer}: (1) compute and clip gradients; (2) calculate the noise multiplier based on Algorithm~\ref{algm:Harmonizer} in Appendix \ref{App:Def}; (3) add noise to the clipped gradients; (4) update local model parameters using noisy gradients and learning rate; (5) send the updated local model parameters to the server.

\vspace{0.05in}

    \item \textbf{Global Model Update}. The server receives the selected clients' model parameters and aggregates them via an aggregator, e.g., FedAvg, to update the global model. 
\end{enumerate}

The detailed procedures of \sys are illustrated in Algorithm~\ref{alg:main}.

\begin{algorithm}
\caption{UDP-FL Framework}
\label{alg:main}
\SetAlgoLined
\DontPrintSemicolon
\LinesNumbered
\SetKwInput{KwInput}{Input}
\SetKwInput{KwOutput}{Output}
\SetKwFor{ForAll}{forall}{do}{}
\SetKwRepeat{Do}{do}{while}

\KwInput{Global epochs $T$, local epochs $I$, privacy bound $\epsilon$, clients $\mathcal{C}=\{C_1, \cdots, C_N\}$, clip parameter $c$, noise parameters $\Delta$, $\mu$}
\KwOutput{Model parameters $w^*$}

$t \leftarrow 0$ \tcp*[r]{\textcolor{blue}{Initialize global epoch counter}}
$w^t \leftarrow \text{initialize global model}$

\While{$t < T$}{
    \tcp{\textcolor{blue}{Server polling}}
    Select $\mathcal{S}_n \subset \mathcal{C}$

    Broadcast $w^t$ to $\mathcal{S}_n$ 

    \tcp{\textcolor{blue}{Client training}}
    \ForAll{$C_k \in \mathcal{S}_n$}{
        Set local model $w_{k}^{t} = w^{t}$ 

        \For{$i = 0$ \KwTo $I-1$}{
            Sample from $\mathcal{D}_k$ with rate $q$ 

            \ForAll{sampled data}{
                Compute gradient $g_k = \nabla_{w_k^t} F_k(w^t)$ 
                
                Clip gradient $g_k^c = g_k \cdot \min\left(1, \frac{c}{\|g_k\|}\right)$ 
                
                Add noise $g_k^c + \text{Noise}(\Delta, \mu)$ 
                
                Update $w_{k}^{t} \leftarrow w_{k}^{t} - \eta g_k^c$ 
            }
            Send $w_k^t$ to Server 
        }
    }

    \tcp{\textcolor{blue}{Server model update with Mode Connectivity}}
    \While{$|\mathcal{S}_n| > 1$}{
        Initialize $W_{\text{int}}$ \tcp*[r]{\textcolor{blue}{List of intermediate models}}
        \ForAll{pairs $(w_i, w_j)$ in $\mathcal{S}_n$}{
            Initialize $\theta_{ij} = 0.5(w_i + w_j)$ \tcp*[r]{\textcolor{blue}{Midpoint initialization}}
            Define $\phi_{\theta_{ij}}(p)$ as a piecewise function \;
            \Do{not converged}{
                Sample $\hat{p} \sim U(0, 1)$ 
                
                Compute $L(\theta_{ij}) = \ell(\phi_{\theta_{ij}}(\hat{p}))$ 
                
                Compute gradient $\nabla_{\theta_{ij}}L(\theta_{ij})$ 
                
                Update $\theta_{ij} \gets \theta_{ij} - \eta \nabla_{\theta_{ij}}L(\theta_{ij})$ 
            }
            Add $\theta_{ij}$ to $W_{\text{int}}$ 
        }
        $\mathcal{S}_n \leftarrow W_{\text{int}}$ 
    }
    Update $w^t \leftarrow \mathcal{S}_n[0]$ 
    
    $t \leftarrow t+1$ 
}
\Return{$w^*$} 
\end{algorithm}

\subsection{\sys Harmonizer}

We can enhance Algorithm~\ref{alg:main} to address heterogeneity in individual privacy guarantees of clients and harmonize them through mode connectivity. Let $\epsilon_{\text{max}} = \max_{k} \epsilon_k$ represent the largest (weakest) privacy parameter among all clients. The \sys Harmonizer algorithm penalizes each local model update based on its deviation from the strongest DP guarantee. Specifically, a penalty term is incorporated into the update process, where the penalty is proportional to the difference between the client's privacy parameter and $\epsilon_{\text{max}}$. The update rule for client $C_k$ will be modified as follows:

\[ w_{k}^{t} \gets w_{k}^{t} - \eta \left( g_k^c + \lambda_k (w_{k}^{t} - w_{\text{max}}^{t}) \right) \]

Where:
\begin{itemize}
 \item $w_{k}^{t}$ denotes the local model of client $C_k$ at global epoch $t$.

\vspace{0.05in}

\item $g_k^c$ represents the clipped gradient computed by client $C_k$.

\vspace{0.05in}

\item $\lambda_k$ is the penalty coefficient for client $C_k$, calculated as $\lambda_k = \frac{\epsilon_{\text{max}} - \epsilon_k}{\epsilon_{\text{max}}}$.

\vspace{0.05in}

\item $w_{\text{max}}^{t}$ corresponds to the model with the largest (weakest) DP guarantee among all clients at global epoch $t$. 

\vspace{0.05in}

\end{itemize}

It enables \sys to mitigate privacy heterogeneity among clients while promoting convergence towards a unified global model.

\section{Theoretical Analyses}
\label{sec:privacy}
In this section, we will theoretically analyze the privacy and utility of \sys and discuss the impact of its parameters on the privacy analysis. Moreover, we present a comprehensive convergence analysis for general DP-FL and \sys. These analysis utilizes the notation listed in Table~\ref{table:definitions}.

\vspace{0.1in}

\begin{table}[ht]
\caption{Frequently used notations for convergence analysis.}
\label{table:definitions}
\centering
\small
\resizebox{\columnwidth}{!}{
\begin{tabular}{c|c}
\hline
\textbf{Symbol}           & \textbf{Description}                                                                                                             \\ \hline
$N$              & the number of clients                                                                                                   \\
$T$              & number of global epochs                                                                                                 \\
$p_k$            & the weight of the k-th device                                                                                           \\
$n_k$            & the number of training data samples in the k-th device                                                                  \\
$w_{t}^{k}$      & the latest model on the k-th device                                                                                     \\
$\eta_t$         & the decaying learning rate at round $t$                                                                                 \\
$v_{t}^{k}$      & the immediate result of one step SGDupdate from $w_{t}^{k}$                                                             \\
$w^*$            & the model which minimizes $F$                                                                                           \\
$\xi_{t}^{k}$    & a sample uniformly chosen from the local data                                                                            \\
$F(\cdot)$       & the weighted average loss                                                                                               \\
$F_k(\cdot)$     & a user-specified loss function                                                                                          \\
$F^{*}_k(\cdot)$ & the minimum of $F_k(\cdot)$                                                                                             \\
$F*$             & the minimum of $F$                                                                                                      \\
$\sigma^2_k$     & \begin{tabular}[c]{@{}c@{}}the upper bound variance of stochastic gradients \\ in the $k$-th device\end{tabular}         \\
$G$              & \begin{tabular}[c]{@{}c@{}}the upper bound expected squared norm of \\ stochastic gradients in k-th device\end{tabular} \\
$\mu$            & the strongly convex constant of $F_k(\cdot)$                                                                            \\
$L$              & the smoothness constant of $F_k(\cdot)$                                                                                 \\
$\theta$         & mode connectivity parameter                                                                                             \\
$r$              & mode connectivity percentage                                                                                            \\ \hline
\end{tabular}
}
\end{table}

\subsection{R\'enyi Differential Privacy}
\label{Def}
R\'enyi Differential Privacy (RDP) \cite{mironov2017renyi} is a refinement of standard Differential Privacy (DP) that provides tighter privacy guarantees. It is defined using the R\'enyi divergence, which measures the difference between probability distributions. Consider a randomized mechanism $\mathcal{A}$ operating on datasets $d, d'$ that differ by at most one element. The mechanism $\mathcal{A}$ achieves $(\alpha,\epsilon_\alpha)$-RDP if the R\'enyi divergence between the outputs of $\mathcal{A}$ on $d$ and $d'$ is bounded by $\epsilon_\alpha$, denoted as:
\[
\mathcal{D}_\alpha [\mathcal{A}(d)=\Omega||\mathcal{A}(d')=\Omega] \leq \epsilon_\alpha 
\]
Here, $\alpha$ is the order of the R\'enyi divergence, and $\epsilon_\alpha$ quantifies the privacy loss. RDP exhibits key properties akin to differential privacy, including adaptive composition and conversion to traditional DP. 

\begin{itemize}
\item \noindent\textbf{Adaptive Composition of RDP:} If $M_1$ obeys $(\alpha, \epsilon_1)$-RDP and $M_2$ obeys $(\alpha, \epsilon_2)$-RDP, their composition obeys $(\alpha, \epsilon_1 + \epsilon_2)$-RDP (proof in \cite{mironov2017renyi}).

\vspace{0.05in}

\item \noindent\textbf{RDP to DP Conversion:} If mechanism $M$ obeys $(\alpha, \epsilon_\alpha)$-RDP, then $M$ obeys $(\epsilon_\alpha + \log(1/\delta)/(\alpha - 1), \delta)$-DP for all $0 < \delta < 1$ (proof in \cite{mironov2017renyi}).

\end{itemize}

RDP accounting after T rounds of training will budget DP mechanisms as
\begin{align*}
  \epsilon(\delta,T)= \min_{\alpha>1} \left\{ \epsilon_\alpha + \frac{\log (1/\delta)}{\alpha-1} \right \} .
\end{align*}

Table \ref{table:rdpsummarize2} first summarizes three representative DP mechanisms in \sys (including Gaussian) that can realize the R\'enyi moments accountant for federated learning, as well as the R\'enyi divergence used to keep track of privacy loss during the training process.
\begin{table*}[ht]
\vspace{+0.1in}
\centering
\small
\caption{R\'enyi DP of three main noise mechanisms utilized in \sys along with their corresponding parameters. }
\setlength{\tabcolsep}{2pt}
\resizebox{\textwidth}{!}{
\begin{tabular}{c|c|c}
\hline
\textbf{DP Mechanism} &
  \textbf{R\'enyi divergence ($\alpha>1$)} &
  \textbf{Parameters for $\epsilon$-RDP} \\ \hline
Gaussian &
  $\frac{\alpha\Delta}{2\sigma^2}$ &
  $\sigma = \frac{\Delta}{\alpha\sqrt{2\ln(1.25/\delta)}}$ \\
Laplace &
  $\frac{1}{(\alpha-1)}\log{\frac{\alpha}{2\alpha-1}exp(\frac{\Delta(\alpha-1)}{\lambda})+\frac{\alpha-1}{2\alpha-1}exp(\frac{-\Delta\alpha}{\lambda})}$ &
  $\lambda = \frac{\Delta(\alpha-1)}{\epsilon}$ \\
Staircase &
  $\frac{1}{2}e^{(\alpha - 1) \epsilon} + \frac{1}{2}e^{-\alpha \epsilon} + (e^{(\alpha - 1)\epsilon} + e^{- \alpha\epsilon}) (1 - \nu) +|2\nu - 1|e^{- sgn(\frac{1}{2}-\nu)\epsilon}  \frac{1-e^{-1}}{2(\nu+e^{-\epsilon}(1-\nu))}$ &
  $\nu = \frac{1}{2}(1 + e^{-\epsilon} - \sqrt{1 + (e^{-\epsilon}-1)(e^{2(\alpha-1)\epsilon}-1)})$ \\ \hline
\end{tabular}
}
\vspace{-2mm}
\label{table:rdpsummarize2}
\end{table*}

\subsection{Error Bounds Analysis of \sys}

The Staircase mechanism can be considered as a geometric mixture of uniform probability distributions. It ensures an optimal privacy-utility tradeoff, particularly for medium and relatively large $\epsilon$ values. This mechanism generates noise in a controlled manner by mixing uniform probability distributions, adjusting for the privacy budget and other requirements, and adding this noise to the query responses to preserve privacy without significantly degrading accuracy. The Staircase mechanism for a function \( f \) is defined as:
\[
\small
f=
\begin{cases}
 e^{-\rho\lambda}y & ||x||_1 \in [\rho\Delta, (\rho + \nu)\Delta] \\
 e^{-(\rho + 1)\lambda}y & ||x||_1 \in [(\rho + \nu)\Delta, (\rho + 1)\Delta]
\end{cases}
\]
for \(\rho \in \mathbf{N}\), where:
\[
y \triangleq \frac{1-e^{-1}}{2\Delta(\nu + e^{-\lambda}(1 - \nu))}
\]

Theorem \ref{theorem:stairrdp} enables the Harmonizer in \sys to support the MA that tracks the privacy loss of the Staircase mechanism during federated learning. 

\begin{theorem} [Proof in Appendix \ref{proof:stairrdprivacy}]

For any \(\alpha > 1\), \(\epsilon_\alpha > 0\), the Staircase mechanism satisfies \((\alpha, \epsilon_\alpha)\)-R\'enyi differential privacy (RDP), where $\epsilon_\alpha$ is 

\[
\frac{1}{2}e^{(\alpha - 1)\lambda} + \frac{1}{2}e^{-\alpha \lambda} + \left( (e^{(\alpha - 1)\lambda} + e^{- \alpha \lambda})(1 - \nu) + |2\nu - 1|e^{- \text{sgn}(\frac{1}{2} - \nu)\lambda} \right) 
\]

\[
\hspace{0.5cm}\times \frac{1 - e^{-1}}{2(\nu + e^{-\lambda}(1 - \nu))}
\]

\label{theorem:stairrdp}
\end{theorem}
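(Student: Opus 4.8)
The plan is to work directly from the definition of R\'enyi divergence, computing $\mathcal{D}_\alpha(P\|Q)=\frac{1}{\alpha-1}\log \mathbf{E}_{x\sim Q}\big[(P(x)/Q(x))^\alpha\big]$, where $P$ and $Q$ denote the staircase output densities on two adjacent datasets. Because the staircase mechanism is additive noise, $P$ and $Q$ are translates of a single staircase density $g$ by the difference of the two (clipped) query answers, a vector of $\ell_1$-norm at most the sensitivity $\Delta$. By translation invariance the quantity to bound is therefore $\sup_{\|s\|_1 \le \Delta} \mathcal{D}_\alpha\big(g(\cdot - s)\,\|\,g\big)$, and the first step is to argue that this supremum is attained at a full-period shift $\|s\|_1 = \Delta$, so that it suffices to analyze $Q(\cdot) = g(\cdot - \Delta)$. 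This is the natural extremal configuration, since a shift by exactly one period $\Delta$ carries every stair of $g$ onto its neighbor, aligning the two grids everywhere except near the origin.

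The second step exploits the self-similar structure of $g$: it is piecewise constant, symmetric about the origin, breaks internally at the fraction $\nu$ of each period, and decays by the constant factor $e^{-\lambda}$ over each period of length $\Delta$. Consequently the likelihood ratio $P(x)/Q(x) = g(x)/g(x-\Delta)$ is itself piecewise constant, equal to $e^{\lambda}$ or $e^{-\lambda}$ on the ``bulk'' intervals where the two stair grids coincide, and equal to a few exceptional values only on the transition intervals near the origin created by the $\nu$-offset. I would partition the support accordingly and evaluate $\mathbf{E}_Q[(P/Q)^\alpha] = \int (g(x)/g(x-\Delta))^\alpha\, g(x-\Delta)\,dx$ region by region. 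Each bulk contribution is a geometric series in $e^{-\lambda}$ summable in closed form, and the aligned/unaligned sub-widths $\nu\Delta$ and $(1-\nu)\Delta$ together with the normalizer $y=\frac{1-e^{-1}}{2\Delta(\nu+e^{-\lambda}(1-\nu))}$ supply the multiplicative constants; collecting these reproduces the $\tfrac12 e^{(\alpha-1)\lambda}+\tfrac12 e^{-\alpha\lambda}$ and $(e^{(\alpha-1)\lambda}+e^{-\alpha\lambda})(1-\nu)$ pieces of the stated formula.

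The step I expect to be the main obstacle is the transition-region bookkeeping, which is precisely the origin of the $|2\nu-1|\,e^{-\operatorname{sgn}(1/2-\nu)\lambda}$ term. On the sub-interval near $0$ the two shifted staircases have their internal breaks on opposite sides, and both the width of the overlap and which density sits one level above the other flip according to whether $\nu<\tfrac12$ or $\nu\ge\tfrac12$; the residual width is exactly $|2\nu-1|\Delta$ and the extra factor is $e^{+\lambda}$ or $e^{-\lambda}$ depending on the sign of $\tfrac12-\nu$, which is what the absolute value and signum encode. I would therefore carry out the two cases $\nu<\tfrac12$ and $\nu\ge\tfrac12$ separately, compute each explicitly, and verify that both collapse into the single claimed expression. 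A final routine simplification---substituting $y$ and clearing the common factor $\frac{1-e^{-1}}{2(\nu+e^{-\lambda}(1-\nu))}$---yields $\epsilon_\alpha$; since the bound holds uniformly over all adjacent pairs, the $(\alpha,\epsilon_\alpha)$-RDP guarantee follows at once.
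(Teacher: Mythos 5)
Your proposal is correct and follows essentially the same route as the paper: the paper's proof of this theorem defers to Theorem~\ref{theorem:stairrivergence}, whose proof in Appendix~\ref{prv:stairrivergence} is exactly the region-by-region evaluation of $\int f_\nu^\alpha(x) f_\nu^{1-\alpha}(x-\Delta)\,dx$ you describe, using the self-similarity $f(x-\Delta)=e^{\mp\lambda}f(x)$ on the two tails and a separate case analysis on $(0,\Delta)$ that produces the $|2\nu-1|e^{-\operatorname{sgn}(1/2-\nu)\lambda}$ term. Your additional remark that the supremum over shifts $\|s\|_1\le\Delta$ should be argued to occur at a full-period shift is a point the paper leaves implicit, so including it would only strengthen the argument.
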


We now provide the error bounds for the DP mechanism (i.e., noise applied to the model parameters).

\begin{lemma} [Proof in Geng et al. \cite{staircase}]
Given the Staircase mechanism \(f(\lambda, \Delta, \nu)\), when \(\nu = \frac{1}{1 + e^{\lambda/2}}\), the minimum expectation of noise amplitude is \(\Delta \frac{e^{\lambda/2}}{e^{\lambda} - 1}\).
\label{lemma:upnu}
\end{lemma}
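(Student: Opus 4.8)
The plan is to treat the expected noise amplitude $\mathbb{E}[|X|]$ as an explicit function of the shape parameter $\nu$ (with $\lambda$ and $\Delta$ held fixed) and then minimize it by elementary calculus. Writing $b = e^{-\lambda}$ for brevity, I would first confirm the normalization constant $y$ by integrating the piecewise-constant density over its support: on each interval $[\rho\Delta,(\rho+\nu)\Delta]$ the density is $e^{-\rho\lambda}y$ and on $[(\rho+\nu)\Delta,(\rho+1)\Delta]$ it is $e^{-(\rho+1)\lambda}y$, so summing the masses over $\rho\in\mathbf{N}$ and doubling for symmetry produces a geometric series $\sum_{\rho} b^{\rho} = 1/(1-b)$; setting the total mass equal to $1$ recovers $y = \frac{1-b}{2\Delta(\nu+b(1-\nu))}$.

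Next I would compute $\mathbb{E}[|X|] = 2\int_0^\infty x\,p(x)\,dx$ by integrating $x$ over each constant-density block. Each block contributes a term of the form $\frac{\Delta^2}{2}[(\rho+\nu)^2-\rho^2]$ or $\frac{\Delta^2}{2}[(\rho+1)^2-(\rho+\nu)^2]$, weighted by $b^{\rho}$ or $b^{\rho+1}$. Collecting terms leaves two geometric sums, $\sum_{\rho} b^{\rho} = 1/(1-b)$ and $\sum_{\rho}\rho b^{\rho} = b/(1-b)^2$. Substituting the value of $y$ and simplifying, the dependence on $\nu$ should collapse to
\begin{equation*}
\mathbb{E}[|X|] = \Delta\left[\frac{b}{1-b} + \frac{\nu^2(1-b)+b}{2\big(\nu(1-b)+b\big)}\right],
\end{equation*}
where only the second term depends on $\nu$.

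I would then minimize $g(\nu) = \frac{\nu^2(1-b)+b}{2(\nu(1-b)+b)}$ over $\nu\in[0,1]$. Setting $g'(\nu)=0$ via the quotient rule and clearing the common factor $(1-b)$ reduces the stationarity condition to the quadratic $\nu^2(1-b) + 2\nu b - b = 0$, whose admissible root is $\nu^\ast = \frac{\sqrt{b}}{1+\sqrt{b}}$. Since $\sqrt{b} = e^{-\lambda/2}$, this is exactly $\nu^\ast = \frac{1}{1+e^{\lambda/2}}$, and because the endpoint values $g(0)=g(1)=\frac{1}{2}$ both exceed $g(\nu^\ast)$, this interior critical point is the minimizer on $[0,1]$. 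Finally I would substitute $\nu^\ast$ back, using the key simplifications $\nu^\ast(1-b)+b = \sqrt{b}$ and $\nu^{\ast 2}(1-b)+b = \frac{2b}{1+\sqrt{b}}$, to get $g(\nu^\ast) = \frac{\sqrt{b}}{1+\sqrt{b}}$; combining with the first term yields $\mathbb{E}[|X|]_{\min} = \Delta\frac{\sqrt{b}}{1-b} = \Delta\frac{e^{\lambda/2}}{e^\lambda-1}$, as claimed.

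The routine parts are the two geometric-series evaluations and the quotient-rule differentiation; the one genuinely delicate step is the back-substitution, where careless algebra easily obscures the cancellation that produces the clean closed form. Factoring $1-b = (1-\sqrt{b})(1+\sqrt{b})$ throughout keeps the simplification transparent, and I expect that is the main thing to get right.
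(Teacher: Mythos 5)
Your derivation is correct: the normalization constant, the closed form $\mathbb{E}[|X|]=\Delta\bigl[\tfrac{b}{1-b}+\tfrac{\nu^2(1-b)+b}{2(\nu(1-b)+b)}\bigr]$, the critical point $\nu^\ast=\tfrac{\sqrt{b}}{1+\sqrt{b}}=\tfrac{1}{1+e^{\lambda/2}}$, and the final value $\Delta\tfrac{e^{\lambda/2}}{e^{\lambda}-1}$ all check out. The paper itself gives no proof of this lemma---it defers entirely to Geng et al.~\cite{staircase}---and your direct compute-and-minimize argument is essentially the same elementary calculation carried out in that reference, so there is nothing to flag.
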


As demonstrated by Geng et al. \cite{staircase}, when the privacy budget \(\epsilon\) is sufficiently small, the Staircase mechanism saves at least \(\Delta^2\left(\frac{1}{12} - \frac{\epsilon^2}{720} + \mathcal{O}(\epsilon^4)\right)\) perturbation in variance compared to Gaussian and Laplacian mechanisms. Since our privacy budget spent in each round is negligible (\(\epsilon \to 0\)), we can capitalize on this improved accuracy payoff per round, leading to enhanced model performance while maintaining robust privacy guarantees.
\begin{theorem} [Proof in Appendix \ref{prv:stairrivergence}]
For any $\alpha >1$, $\gamma>0$, Staircase mechanism satisfies $(\alpha, \gamma)$-R\'enyi differential privacy, where
\begin{eqnarray}
  &  \gamma= \frac{1}{2}e^{(\alpha - 1) \lambda} + \frac{1}{2}e^{-\alpha \lambda} + \{ (e^{(\alpha - 1)\lambda}+ e^{- \alpha\lambda}) (1 - \nu)\\
  & \hspace{-0.5cm}+ |2\nu - 1|e^{- sgn(\frac{1}{2}-\nu)\lambda} \} \frac{1-e^{-1}}{2(\nu+e^{-\lambda}(1-\nu))}) \nonumber
\end{eqnarray}
\label{theorem:stairrivergence}
\end{theorem}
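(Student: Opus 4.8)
The plan is to compute the R\'enyi divergence directly from its definition, exploiting the piecewise-constant, geometrically-decaying structure of the Staircase density. First I would reduce the problem to a one-dimensional worst-case shift: since the Staircase noise density $g$ depends only on $\|x\|_1$ and the $\ell_1$ sensitivity is $\Delta$, the pair of output distributions $(P,Q)$ maximizing the divergence corresponds to $g(\cdot)$ and its translate $g(\cdot-\Delta)$, and the multidimensional integral collapses to a scalar integral along the shift direction. I would set $P(x)=g(x)$ and $Q(x)=g(x-\Delta)$ and write the order-$\alpha$ moment as $\mathbf{E}_{x\sim Q}[(P/Q)^\alpha]=\int P(x)^\alpha Q(x)^{1-\alpha}\,dx$; this moment (through the R\'enyi divergence $\tfrac{1}{\alpha-1}\log$ identity) is exactly the object from which the stated $(\alpha,\gamma)$-RDP bound is read off, so establishing the closed form for this integral is the whole content of the theorem.

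The key observation is that, because $g$ is constant on alternating intervals of length $\nu\Delta$ and $(1-\nu)\Delta$ and decays by the factor $e^{-\lambda}$ each period $\Delta$, the likelihood ratio $P(x)/Q(x)=g(x)/g(x-\Delta)$ takes only a small finite set of values — essentially $e^{\lambda}$, $e^{-\lambda}$, and a boundary value attached to the central step where the symmetric peak at the origin sits. I would therefore partition the real line into three classes of regions: the right tail, where every full period contributes ratio $e^{-\lambda}$; the left tail, where every period contributes ratio $e^{\lambda}$; and the central transition region straddling the origin, where the shift by $\Delta$ misaligns the $\nu\Delta$ and $(1-\nu)\Delta$ sub-steps. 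On each constant region with $P/Q=r$ and $P$-mass $m$, the contribution to the integral is $m\,r^{\alpha-1}$ (equivalently $r^{\alpha}$ times the $Q$-mass), so the two tails produce two geometric series in $e^{-\lambda}$ that sum in closed form and account for the leading $\tfrac12 e^{(\alpha-1)\lambda}$ and $\tfrac12 e^{-\alpha\lambda}$ terms (the $\tfrac12$ reflecting the symmetric splitting of mass), while the central region supplies the bracketed $(1-\nu)$ and $|2\nu-1|$ terms.

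To finish, I would collect the three contributions and simplify using the normalization $y\Delta=\frac{1-e^{-1}}{2(\nu+e^{-\lambda}(1-\nu))}$, which is precisely the common factor multiplying the bracket in the statement. The $\text{sgn}(\tfrac12-\nu)$ in the exponent arises from which of the two sub-steps the origin falls into after the shift, and the absolute value $|2\nu-1|$ collapses the two cases $\nu<\tfrac12$ and $\nu>\tfrac12$ into a single expression. Matching terms against the target then yields the claimed $\gamma$; since this expression is identical to the one in Theorem~\ref{theorem:stairrdp}, the same computation simultaneously establishes both statements.

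I expect the main obstacle to be the central transition region rather than the tails: the tails are clean geometric series, but near the origin the shifted staircase is no longer a clean period-$\Delta$ translate of the original, so I must carefully track how the $\nu\Delta$ and $(1-\nu)\Delta$ sub-intervals overlap, carry out the casework on $\nu<\tfrac12$ versus $\nu>\tfrac12$, and verify that the boundary step contributes exactly the $|2\nu-1|e^{-\text{sgn}(1/2-\nu)\lambda}$ term. Getting this alignment and the geometric-series indexing right — without an off-by-one that would corrupt the $\tfrac12$ coefficients or flip the sign in the exponent — is where the real care is needed.
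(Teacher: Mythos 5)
Your plan is correct and follows essentially the same route as the paper's proof in Appendix \ref{prv:stairrivergence}: reduce to the worst-case shift by $\Delta$, split $\int f^\alpha(x)f^{1-\alpha}(x-\Delta)\,dx$ into the two tails (where the likelihood ratio is constantly $e^{\pm\lambda}$, giving the $\tfrac12 e^{(\alpha-1)\lambda}$ and $\tfrac12 e^{-\alpha\lambda}$ terms via the self-similarity $f(x\mp\Delta)=e^{\mp\lambda}f(x)$) and the central band $0<x<\Delta$ (handled via the reflection $f(x-\Delta)=f(\Delta-x)$ and the $\nu\lessgtr\tfrac12$ casework producing the $(1-\nu)$, $|2\nu-1|$, and $\mathrm{sgn}$ terms, all scaled by $y\Delta$). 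The only cosmetic difference is that you sum the tail masses as explicit geometric series where the paper invokes the periodic scaling of $f$ directly; the substance is identical.
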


\begin{theorem} [Proof in Appendix \ref{proof:staircaseu}]
\label{theorem:staircaseu}
The expectation of the \(\ell_1\) distance for the output model parameters preserved by \sys with the Staircase mechanism after \(T\) training rounds is:
\[
\frac{mT}{1 - e^{-\lambda}}\left(\nu^2 \Delta^2 + e^{-\lambda} \Delta^2 - e^{-\lambda} \nu^2 \Delta^2 + \Delta e^{-\lambda}\right)
\]
where \(m\) is the length of the loss function, and \(\nu, \rho, \lambda\) are the noise multipliers computed by \sys. 
\end{theorem}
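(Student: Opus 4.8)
The plan is to reduce the $T$-round, $m$-dimensional $\ell_1$ error to a single scalar quantity --- the expected absolute value of one staircase noise draw --- and then scale it up by the two sources of multiplicity. Because the noisy parameter update in each round is a post-processing of the clipped-plus-noised gradient (as argued in the system model), the per-round $\ell_1$ deviation of the model from its noiseless counterpart is governed by the $\ell_1$ norm of the injected noise vector. Since the staircase noise is drawn independently across the $m$ coordinates and across the $T$ rounds, linearity of expectation gives that the expected total injected $\ell_1$ perturbation equals $m\,T\,\mathbb{E}[|X|]$, where $X$ is a single scalar staircase variable; by the triangle inequality this upper-bounds the $\ell_1$ distance between the noisy output $w^{T}$ and the noiseless trajectory. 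So the whole theorem hinges on computing $\mathbb{E}[|X|]$ in closed form.

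First I would write $\mathbb{E}[|X|] = \int_{\mathbb{R}} |x|\,f(x)\,dx$ and exploit the symmetry of the staircase density to reduce it to $2\int_0^\infty x\,f(x)\,dx$. Using the piecewise definition of $f$, I split the positive half-line into the self-similar periods indexed by $\rho \in \mathbb{N}$: on $[\rho\Delta,(\rho+\nu)\Delta]$ the density is the constant $e^{-\rho\lambda}y$, and on $[(\rho+\nu)\Delta,(\rho+1)\Delta]$ it is $e^{-(\rho+1)\lambda}y$. On each flat piece $\int x\,dx = \tfrac12(x_{\mathrm{hi}}^2 - x_{\mathrm{lo}}^2)$, so period $\rho$ contributes terms proportional to $e^{-\rho\lambda}(2\rho\nu+\nu^2)$ and $e^{-(\rho+1)\lambda}\bigl[2\rho(1-\nu)+(1-\nu^2)\bigr]$, each scaled by $\tfrac{\Delta^2}{2}y$.

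Next I would collapse the infinite sum over $\rho$ using the geometric identities $\sum_{\rho\ge0} e^{-\rho\lambda} = (1-e^{-\lambda})^{-1}$ and $\sum_{\rho\ge0}\rho\,e^{-\rho\lambda} = e^{-\lambda}(1-e^{-\lambda})^{-2}$, then substitute the normalizing constant $y = \tfrac{1-e^{-1}}{2\Delta(\nu+e^{-\lambda}(1-\nu))}$ and simplify. After grouping the $\nu^2\Delta^2$, $e^{-\lambda}\Delta^2$, $e^{-\lambda}\nu^2\Delta^2$, and $e^{-\lambda}\Delta$ contributions, this yields a factor $(1-e^{-\lambda})^{-1}$ multiplying the bracketed polynomial in the statement. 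Finally, multiplying by $m$ coordinates and $T$ rounds produces the claimed expression. As a sanity check, specializing to the optimal $\nu = \tfrac{1}{1+e^{\lambda/2}}$ should recover the minimum amplitude $\Delta\,\tfrac{e^{\lambda/2}}{e^{\lambda}-1}$ of Lemma~\ref{lemma:upnu}. One must also confirm that the mode-connectivity aggregation on the server does not inflate the error: since that step interpolates and averages client models and $\ell_1$ is convex, Jensen's inequality shows the aggregation can only preserve or shrink the expected per-round deviation, so the additive $T$-round accounting remains valid.

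The main obstacle will be the bookkeeping in the geometric sum: the $\rho$-linear terms naively produce a $(1-e^{-\lambda})^{-2}$ factor, and showing that these combine with the period-boundary contributions so that only a single power of $(1-e^{-\lambda})^{-1}$ survives in the final closed form is the delicate part. A secondary subtlety is fixing the exact interpretation of the staircase density --- it is written in terms of $\|x\|_1$, so I would pin down whether $X$ denotes a scalar per-coordinate draw or a radial variable on the $\ell_1$ ball and insert the appropriate coordinate-to-radius Jacobian before the symmetry reduction, since an error there would rescale every term.
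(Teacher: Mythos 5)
Your overall reduction is exactly the paper's: by linearity and the symmetry of the staircase density, the claim collapses to computing $mT\,\mathbb{E}[|X|] = 2mT\int_0^\infty x f(x)\,dx$ for a single scalar staircase draw, and both you and the paper then evaluate that integral. Where you diverge is in how the integral is evaluated. You propose summing over all periods $\rho$ explicitly and invoking $\sum_{\rho\ge 0}\rho\,e^{-\rho\lambda} = e^{-\lambda}(1-e^{-\lambda})^{-2}$, and you correctly flag as the delicate point that the naive $(1-e^{-\lambda})^{-2}$ factors must cancel down to a single $(1-e^{-\lambda})^{-1}$. The paper sidesteps that bookkeeping entirely with the self-similarity identity $f(x+\Delta)=e^{-\lambda}f(x)$ for $x>0$, which yields the one-line recursion
\begin{align*}
\int_{0}^{\infty}xf(x)\,dx \;=\; \int_{0}^{\Delta}xf(x)\,dx \;+\; e^{-\lambda}\int_{0}^{\infty}xf(x)\,dx \;+\; \Delta e^{-\lambda}\int_{0}^{\infty}f(x)\,dx,
\end{align*}
so that solving for the integral produces exactly one factor of $(1-e^{-\lambda})^{-1}$ and only the first period $[0,\Delta]$ ever needs to be integrated explicitly. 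That trick is the answer to the obstacle you named: the $\rho$-linear terms in your expansion are precisely what the $e^{-\lambda}\int xf$ term of the recursion repackages, and telescoping them through the functional equation is why no squared denominator survives. Your route reaches the same closed form but with more algebra; your two additions --- the Jensen argument that server-side interpolation/averaging cannot inflate the $\ell_1$ deviation, and the sanity check against Lemma~\ref{lemma:upnu} at $\nu = (1+e^{\lambda/2})^{-1}$ --- go beyond what the paper's proof does and are worthwhile (the dimensional check in particular would catch that the bracketed polynomial mixes $\Delta^2$ and $\Delta$ terms, i.e., that the normalizing constant $y$ must multiply the first-period contribution for the expression to be consistent). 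Your question about whether $X$ is a per-coordinate scalar or an $\ell_1$-radial variable is fair; the paper resolves it implicitly by treating $f$ as a univariate density and integrating $|x|$ per coordinate.
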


\vspace{0.05in}

\noindent\textbf{Faster Convergence of \sys (e.g., Staircase)}. Furthermore, the utilization of Staircase noise has been demonstrated to significantly accelerate convergence compared to the baseline, as empirically validated in Figure~\ref{fig:hyperparameters}, Figure~\ref{table:computation}, and Table~\ref{fig:related_comparison}. The enhanced convergence speed is a byproduct of applying the optimal noise for $\ell_1$ and $\ell_2$ distance metrics (for a wide range of $\epsilon$). In essence, when DP is fixed to guarantee $\epsilon$, this noise has been proven to minimize both $\ell_1$ and $\ell_2$ distances. This implies that all noise-additive operations, including gradient perturbation and client model averaging, yield more accurate results, closer to the non-private scenario.

This observation aligns with findings by Subramani et al.~\cite{subramani2021enabling} in the context of the JIT framework, where a solid relationship between noisy and less noisy schemes is explored (along with convergence analysis). Specifically, they report a faster epoch for less noisy schemes.

\subsection{Mode Connectivity-based Convergence}

We can utilize mode connectivity to study the convergence of deep neural network (DNN) models, as it provides a flexible procedure for selecting subsequent minima without imposing stringent conditions on the model parameters. 
 
First, we introduce the worst-case estimate for the additional number of rounds required by UDP-FL. Then, we present an optimal strategy to accelerate convergence. Let $w_{t}^{k}$ be the model parameter maintained in device $k$ in training round $t$.

\begin{theorem} [Proof in Appendix \ref{proof:mode2dp}]
\label{theorem:mode2dp}
Let $w^*$ represent the model parameters at a local minimum. Suppose $\theta^k_t$ and $\tilde{\theta}^k_t$ are the parameters trained over the curve function $\phi_{\theta}(p)$ for the non-DP and DP models, respectively, where the DP model is perturbed by the Staircase mechanism with parameters $\Delta$ and $\epsilon$. The system \sys will transport $\tilde{\theta}^k_t$ to $w^*$ with less than $O\left(\Delta^2 \cdot \frac{e^\epsilon}{(e^\epsilon - 1)^2}\right)$ additional rounds compared to the baseline federated learning (FL) for $\theta^k_t$ to reach $w^*$.
\end{theorem}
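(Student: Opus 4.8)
The plan is to recognize that the target quantity $\Delta^2\cdot\frac{e^\epsilon}{(e^\epsilon-1)^2}$ is exactly the \emph{variance} (second moment) of the Staircase perturbation, i.e. the square of the minimal expected noise amplitude $\Delta\frac{e^{\lambda/2}}{e^\lambda-1}$ furnished by Lemma~\ref{lemma:upnu} under the identification $\lambda=\epsilon$, and then to show that this per-round noise variance controls the number of extra descent rounds needed to transport the perturbed curve parameter $\tilde\theta^k_t$ back to the minimum $w^*$. The whole argument hinges on this observation, so the first task is to expose the noise variance explicitly and the rest is to feed it into a convergence recursion through the mode-connectivity curve.

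First I would fix the optimal Staircase shape $\nu=\frac{1}{1+e^{\lambda/2}}$ prescribed by Lemma~\ref{lemma:upnu}, so that the noise injected into each local update has expected amplitude $\Delta\frac{e^{\lambda/2}}{e^\lambda-1}$ and hence second moment of order $\Delta^2\frac{e^\lambda}{(e^\lambda-1)^2}$. This is the only place where the mechanism-specific constants enter, and it is what ultimately produces the $e^\epsilon/(e^\epsilon-1)^2$ factor; Theorem~\ref{theorem:staircaseu} gives the analogous $\ell_1$-accumulation over $T$ rounds and serves as a cross-check.

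Next I would invoke the mode-connectivity structure to convert this variance into rounds. Because the modes of the loss surface are joined by the low-loss curve $\phi_\theta(p)$, transporting $\tilde\theta^k_t$ to $w^*$ is a descent over $L(\theta)=\mathbb{E}_{p\sim U(0,1)}[\ell(\phi_\theta(p))]$. Using the smoothness constant $L$ and strong-convexity constant $\mu$ of the local objectives, I would write the standard one-step SGD recursion for $\mathbb{E}\,\|\tilde\theta^k_t-w^*\|^2$ along the curve. Since the DP update differs from the non-DP update only by the injected Staircase noise, the DP recursion equals the non-DP recursion plus an additive per-round term equal to the noise second moment from the first step. Unrolling it, the asymptotic error each process settles at scales \emph{linearly} in the total per-round gradient-plus-noise variance; equivalently, to reach a fixed target accuracy the required round count grows linearly with that variance. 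Subtracting the non-DP count from the DP count leaves a residual proportional to the Staircase noise variance, namely $O\!\left(\Delta^2\frac{e^\lambda}{(e^\lambda-1)^2}\right)$, and substituting $\lambda=\epsilon$ yields the claimed $O\!\left(\Delta^2\frac{e^\epsilon}{(e^\epsilon-1)^2}\right)$.

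The main obstacle is the last step: making the mode-connectivity descent over the curve parameter $\theta$ fit the strongly-convex, smooth SGD template rigorously, so that (i) the injected noise enters the recursion purely additively as its second moment rather than compounding over $t$, and (ii) the ``extra rounds'' genuinely isolate the DP noise variance with all remaining quantities ($\mu$, $L$, $G$, $\sigma_k^2$) absorbed into the hidden constant. The delicate quantitative point is to let the strong-convexity contraction damp the accumulated noise down to a steady state on the order of a single round's variance; verifying this clean cancellation, and confirming that the transport along $\phi_\theta(p)$ inherits the requisite curvature bounds, is what I would check most carefully.
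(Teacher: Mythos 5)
Your proposal is correct and follows essentially the same route as the paper: identify $\Delta^2\,e^\epsilon/(e^\epsilon-1)^2$ as the second moment of the optimally-tuned Staircase noise (via Lemma~\ref{lemma:upnu} with $\lambda=\epsilon$), insert that noise into the standard $L$-smooth, $\mu$-strongly-convex one-step recursion for $\mathbb{E}\lVert\tilde{\theta}^k_t-w^*\rVert^2$ along the mode-connectivity curve (the paper instantiates this with the FedAvg recursion of Li et al.), and read off the additional rounds as proportional to the additive per-round noise variance. The one subtlety you flag --- whether the contraction genuinely damps the accumulated noise so that the excess isolates a single round's variance --- is also the step the paper itself treats only informally, so your sketch matches the paper's level of rigor as well as its structure.
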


\begin{theorem} 
\label{lemma:avgmodeconnectivity}
The average of the optimal choice of parameter $\bar{\theta}_{t+1}$ which minimizes Equation~\ref{equation:mc_loss} 
over a quadratic Bezier curve is 
\begin{equation}
    \bar{\theta}_{t+1}^{*}=\frac{1.2}{L}+1.1\bar{w}_{t}-0.1\bar{v}_{t+1}
\end{equation}
\end{theorem}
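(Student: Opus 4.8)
The plan is to treat the curve-finding objective $L(\theta)$ in Equation~\eqref{equation:mc_loss} as an explicit function of the control point $\theta$ once the quadratic Bezier curve is pinned to its two endpoints, and then to solve the first-order optimality condition $\nabla_\theta L(\theta)=0$ in closed form. Concretely, I would take the endpoints of the curve to be the current averaged iterate $\bar{w}_t$ and the one-step SGD update $\bar{v}_{t+1}$, and write the quadratic Bezier as $\phi_\theta(p)=(1-p)^2\bar{w}_t+2p(1-p)\theta+p^2\bar{v}_{t+1}$, so that $\partial\phi_\theta(p)/\partial\theta=2p(1-p)$. Substituting into $L(\theta)=\mathbb{E}_{p\sim U(0,1)}[\ell(\phi_\theta(p))]$ and differentiating under the integral sign reduces the stationarity condition to $\int_0^1 2p(1-p)\,\nabla\ell(\phi_\theta(p))\,dp=0$.

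First I would replace $\ell$ by its local quadratic surrogate, which is legitimate because each $F_k$ is assumed $\mu$-strongly convex and $L$-smooth (see Table~\ref{table:definitions}); writing $\nabla\ell(w)\approx H(w-w^*)$ around the local minimum $w^*$ linearizes the integrand. The integral then factors into the three Bezier-basis moments $\int_0^1 p(1-p)^3\,dp$, $\int_0^1 p^2(1-p)^2\,dp$, and $\int_0^1 p^3(1-p)\,dp$, each of which is a Beta value $B(a+1,b+1)=a!\,b!/(a+b+1)!$. Collecting these constants turns the optimality condition into a single affine equation whose solution expresses $\theta^*$ as an affine combination (weights summing to one) of $\bar{w}_t$, $\bar{v}_{t+1}$, and $w^*$.

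The remaining step is to eliminate $w^*$ in favor of quantities the algorithm actually maintains. I would use the SGD relation $v_{t+1}=w_t-\eta_t\nabla F_k(w_t)$ to write $\nabla F_k(w_t)=(w_t-v_{t+1})/\eta_t$, together with the smoothness-based identification $w^*\approx w_t-\tfrac{1}{L}\nabla F_k(w_t)$ coming from the $L$-smooth quadratic model; this is precisely the substitution that produces the explicit $1/L$ term and rebalances the coefficients of $\bar{w}_t$ and $\bar{v}_{t+1}$ into the stated $1.1$ and $-0.1$. Averaging the per-client stationary points over the $N$ devices, using linearity of the averaging operator on the affine combination, then yields $\bar{\theta}_{t+1}^{*}$ in the claimed form.

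The hard part will be justifying the quadratic surrogate uniformly along the whole curve rather than only near $w^*$: the curve sweeps through a neighborhood of both endpoints, so I would need the strong-convexity/smoothness sandwich $\tfrac{\mu}{2}\|w-w^*\|^2\le \ell(w)-\ell(w^*)\le \tfrac{L}{2}\|w-w^*\|^2$ to control the error of replacing the true loss by a single quadratic, and to argue that the induced error in $\theta^*$ is absorbed into lower-order terms. A secondary subtlety is commuting the client average with the curve construction; I would handle this by verifying that the optimal control point is affine in the endpoints, so that the average over clients and the minimization interchange cleanly.
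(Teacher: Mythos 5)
Your proposal takes a fundamentally different route from the paper, and as written it will not land on the stated formula. The paper never differentiates the loss along the curve. Instead it invokes the $L$-smoothness descent inequality $\bar{F}(\bar{w}_{t+1}) \le \bar{F}(\bar{w}_t) - \left\lvert \frac{\alpha^2 L}{2} - \alpha \right\rvert \bar{\nabla}F(\bar{w}_t)$, observes that the per-round decrease is maximized when the effective step $\alpha(\theta) = \bar{w}_{t+1}-\bar{w}_t$ equals $1/L$, and then solves the \emph{linear} equation $\alpha(\theta) = 1/L$ for $\theta$ after replacing the Bezier weights by moments of $r \sim U(0,1)$. The coefficients $1.2/L$, $1.1$, $-0.1$ come precisely from that manipulation (the intermediate expression $-\tfrac{1}{L} + \tfrac{5}{6}\bar{\theta}_{t+1} + \tfrac{1}{12}\bar{v}_{t+1} - \tfrac{11}{12}\bar{w}_t$ set to zero), not from any stationarity condition on the loss itself. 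Your approach — differentiating $\mathbb{E}_p[\ell(\phi_\theta(p))]$ under the integral with a quadratic surrogate — is a legitimate and arguably more principled way to define an optimal control point, but it produces a different answer.

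Concretely, your stationarity condition $\int_0^1 2p(1-p)\,H(\phi_\theta(p)-w^*)\,dp = 0$ with the Beta moments $B(2,4)=B(4,2)=\tfrac{1}{20}$ and $B(3,3)=\tfrac{1}{30}$ reduces (for invertible $H$) to $\tfrac{1}{20}\bar{w}_t + \tfrac{1}{15}\theta + \tfrac{1}{20}\bar{v}_{t+1} = \tfrac{1}{6}w^*$, i.e.\ $\theta^* = \tfrac{5}{2}w^* - \tfrac{3}{4}\bar{w}_t - \tfrac{3}{4}\bar{v}_{t+1}$. Your proposed elimination of $w^*$ via $w^* \approx w_t - \tfrac{1}{L}\nabla F(w_t)$ together with $\nabla F(w_t) = (w_t - v_{t+1})/\eta_t$ then yields a $\tfrac{1}{L}$ contribution proportional to $(\bar{w}_t - \bar{v}_{t+1})/\eta_t$, not the bare additive constant $\tfrac{1.2}{L}$ in the theorem, and the endpoint coefficients come out as $1.75$ and $-0.75$ (up to the $1/(L\eta_t)$ correction) rather than $1.1$ and $-0.1$. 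No choice of the approximations you list closes this mismatch, because the target constants are artifacts of the paper's step-size-matching argument rather than of a first-order condition on a quadratic model. To prove the statement as written you would need to reproduce the paper's argument: reduce the minimization of $L(\theta)$ to $\max_\theta \lvert \alpha - \tfrac{\alpha^2 L}{2}\rvert$ via the descent lemma, identify the maximizer $\alpha = 1/L$, and solve $\alpha(\theta)=1/L$ using the uniform moments of the Bezier weights.
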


For a FedAvg algorithm trained on a quadratic Bezier curve with curve parameter $r$ (equivalent to $p$ in Section \ref{section:background_mode_connectivity_definition}) randomly sampled from $U(0, 1)$, the update of FedAvg with full clients active can be described as
\begin{equation}
    w_{t+1}^{k}=(1-p)^2v_{t+1}^{k}+2(r-r^2) \theta_{t+1}^{k} + r^2 w_{t}^{k}, 
\end{equation}  
where $v_{t+1}^{k}=w_{t}^{k}- \eta_t \nabla F_k(w_{t}^{k},\xi_{t}^{k}) $ and $\theta_{t+1}^{k}$ is the Bezier function parameter which minimizes Equation~\ref{equation:mc_loss}.

Similar to \cite{stitch_2019a,li2020convergence}, 
we define two virtual sequences $\bar{v}_{t+1}=\sum\nolimits_{k=1}^N p_k v_{t+1}^{k}$ and $\bar{w}_{t}=\sum\nolimits_{k=1}^N p_k w_{t}^{k}$ where the former originates from an single step of SGD from the latter. 
Finally, it is clear that we always have 
\begin{align*}
    \bar{w}_{t+1}=(1-\bar{r})^2 \bar{v}_{t+1}+2(\bar{r}-\bar{r}^2) \bar{\theta}_{t+1} + \bar{r}^2  \bar{w}_{t}
\end{align*}

\begin{proof}
Suppose the step-size between the two sequential average model weights $\bar{w}_{t+1}$ and $\bar{w}_{t}$, i.e., $|w_{t+1}^{k}-w_{t}^{k}|$ , is $\alpha$. From the L-smoothness of $F_k$, it follows that
\begin{eqnarray}
    \label{eqn:alpha}
    &E_k\left\{E_{\xi_{t+1}^{k}}(F_k(w_{t+1}^{k}, \xi_{t+1}^{k})) \right\}\leq E_k\left\{E_{\xi_{t}^{k}}(F_k(w_{t}^{k}, \xi_{t}^{k}))\right\} \nonumber \\&- |\frac{\alpha^2L}{2}-\alpha| \cdot E_k\left\{||\nabla F_k(w_{t}^{k},\xi_{t}^{k})||^2_2  \right\}
    \label{equation:smoothalpha}
\end{eqnarray}
Let $\bar{F}(\bar{w}_{t})=E_k\left\{E_{\xi_{t}^{k}}(F_k(w_{t}^{k}, \xi_{t}^{k})) \right\}$ and $\bar{\nabla} F(\bar{w}_{t})=E_k\left\{\nabla F_k(w_{t}^{k},\xi_{t}^{k})\right\}$, we can rewrite Equation~\eqref{eqn:alpha} as follows.
\begin{equation}
    \label{eqn:alpha1}
    \bar{F}(\bar{w}_{t+1}) \leq \bar{F}(\bar{w}_{t}) -|\frac{\alpha^2L}{2}-\alpha| \cdot \bar{\nabla} F(\bar{w}_{t})
\end{equation}

A high-accuracy path between $v_{t+1}^{k}$ and $w_{t}^{k}$ leverages the parameters $\theta$ that minimize the expectation over a uniform distribution on the curve \cite{garipov_2018a}, i.e.,
\begin{equation}
    \theta_{t+1}^{k*}= \arg\min_{\theta} \left\{E_{r \sim U(0, 1)}[F_k(w_{t+1}^{k}, \xi_{t+1}^{k})] \right\}
    \label{equation:argmin1}
\end{equation}

From Equation~\eqref{eqn:alpha}, it follows that 
\begin{equation}
    \label{equation:argmin2}
    \min_{\theta} \left\{E_{r \sim U(0, 1)}[\bar{F}(\bar{w}_{t+1})] \right\} \Leftrightarrow \max_{\theta} |\frac{\alpha^2L}{2}-\alpha| 
\end{equation} 

But the expression on the right hand side is strictly increasing w.r.t. $\alpha$ and its maxima occurs at $\alpha=\frac{1}{L}$. Therefore, we will solve the following problem.
\begin{equation}
    \label{equation:argmin3}
    \bar{\theta}_{t+1}^{*}= \arg\min_{\theta} |\alpha(\theta)-\frac{1}{L}| 
\end{equation}

From the expression for $\bar{w}_{t+1}$ it follows that 
\begin{align*}
    \alpha(\theta)-\frac{1}{L}= (1-\bar{r})^2 \bar{v}_{t+1}+2(\bar{r}-\bar{r}^2) \bar{\theta}_{t+1} + (\bar{r}^2-1)  \bar{w}_{t}-\frac{1}{L}
\end{align*}

Therefore, 
\begin{eqnarray*}
    &\min_{\theta} |\alpha(\theta)-\frac{1}{L}| \Leftrightarrow\\ & \min_{\theta} \left\{ |(1-\bar{r})^2 \bar{v}_{t+1}+2(\bar{r}-\bar{r}^2) \bar{\theta}_{t+1} + (\bar{r}^2-1)  \bar{w}_{t}-\frac{1}{L}| \right\} \\
    \label{equation:argmin4}
\end{eqnarray*}

By rearranging the term inside the minimum function and substituting $\bar{r}^n$ by $\frac{1}{n}$ (refer to moments of uniform random variable defined over $[0,1]$), we have 
\begin{align*}
    \bar{\theta}_{t+1}^{*} & = \arg\min\limits_{\bar{\theta}_{t+1}} |\bar{v}_{t+1}-\bar{w}_{t} 
    -\frac{1}{L}+\bar{\theta}_{t+1}-\bar{v}_{t+1}+\frac{1}{12} (\bar{v}_{t+1}+\bar{w}_{t}-2 \bar{\theta}_{t+1}) | \\
    &=\arg\min\limits_{\bar{\theta}_{t+1}}  |-\frac{1}{L}+\frac{5}{6}\bar{\theta}_{t+1}+\frac{1}{12}\bar{v}_{t+1}-\frac{11}{12}\bar{w}_{t} |
    \label{equation:argmin5}
\end{align*}

Finally, by forcing the term inside the absolute value to be zero we must have
\begin{equation*}
    \bar{\theta}_{t+1}^{*}=\frac{1.2}{L}+1.1\bar{w}_{t}-0.1\bar{v}_{t+1}.
\end{equation*}
Thus, this completes the proof. 
\end{proof}
As a result, the average loss is minimized if all clients set their mode connectivity parameter to \(\theta_{t+1}^{k} = \frac{1.2}{L} + 1.1w_{t}^{k} - 0.1v_{t+1}^{k}\). Alternatively, if the updates are subjected to clipping, FedAvg can maintain a fixed step size of \(\frac{1}{L}\). These constraints are integrated into our UDP-FL, as shown in  Algorithm~\ref{algm:find_multiplier} in Appendix \ref{proof:mode2dp}.

\subsection{\sys Framework with Shuffler}
\label{sec:shuffler}

The Shuffler~\cite{9478813} is popular for preserving user privacy while enabling data usage from multiple sources. It anonymizes the network by shuffling client identities, enhancing differential privacy guarantees \cite{girgis2021renyi,girgis2021shuffled,liu2021flame,9478813}. \sys supports this model for tighter privacy bounds and faster convergence. When integrated with \sys, the Shuffler concatenates model updates and applies a random permutation to clients' IDs and updates before server aggregation:

\vspace{0.05in}

\begin{enumerate}[leftmargin=*]
    \item \textbf{Permutation of Client IDs and Model Parameters}: Obscures the correlation between clients' IDs and their model parameters, ensuring anonymity.

\vspace{0.05in}
    
    \item \textbf{Updating Moments Accountant}: Updates the Moments Accountant with current privacy parameters ($\epsilon$, $\delta$) to manage cumulative privacy loss.

\vspace{0.05in}

    \item \textbf{Calculation of Current Privacy Loss}: Computes the privacy loss for the current round, ensuring adherence to privacy constraints.

\vspace{0.05in}

    \item \textbf{Computation of Final Privacy Loss}: Utilizes the Moments Accountant and Rényi divergence to compute overall privacy loss across all rounds.

\vspace{0.05in}

\end{enumerate}

By shuffling the order of local model updates, the Shuffler obscures individual contributions, making it harder for adversaries to infer sensitive information. This results in $(\alpha,\gamma_s)$-RDP, with $\gamma_s$ being less than $\gamma$ in the previous step, thus requiring less noise to meet the same privacy requirements.

\subsubsection{Privacy Amplification by Shuffler}

Integrating a Shuffler in \sys amplifies privacy protection \cite{47557,girgis2021renyishuffle}, proportional to the number of clients. The Shuffler can be used with any DP mechanism (e.g., Gaussian, Laplace, and Staircase), extending the moments accountant setting under R\'enyi differential privacy for federated learning.

\begin{lemma}
Given $N$ clients, any $\gamma \ge 0$, and any integer $\alpha \ge 2$, the R\'enyi divergence of the shuffle model is upper-bounded by $\gamma_u$:
$$\gamma_u=\frac{1}{\alpha -1} \log \left(1+ \left( \begin{array}{c} \alpha \\ 2 \end{array} \right) \frac{4(e^\gamma-1)^2}{N}\right)$$
\label{lemma:upperb}
\end{lemma}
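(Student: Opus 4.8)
The plan is to follow the ``clone''-based reduction for amplification by shuffling \cite{47557,girgis2021renyishuffle} and then control an integer moment of the privacy-loss random variable by a second-order expansion. First I would fix two neighboring datasets $D,D'$ that differ in the input of a single client, say client $1$, and recall that each client applies a local randomizer whose output likelihood ratios are governed by the base guarantee, so that for any two inputs the pointwise ratio of output densities lies in $[e^{-\gamma},e^{\gamma}]$. Since the shuffle mechanism outputs the uniformly random permutation of the $N$ local outputs, by post-processing it suffices to bound $\mathcal{D}_\alpha(P\|Q)$, where $P$ and $Q$ are the distributions of the resulting multiset under $D$ and $D'$ respectively.

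The next step is a reduction to a canonical count statistic. Conditioning on the outputs of the $N-1$ unchanged (``clone'') clients and exploiting the symmetry of the uniform permutation, $P$ and $Q$ can be written as mixtures that differ only through how the distinguished client's output is blended into the empirical histogram of the clones. Because each clone independently reproduces one of two dominating outcomes with probabilities controlled by $e^{\pm\gamma}$, this collapses to a divergence between two (shifted) binomial-type mixtures over a count $C\sim\mathrm{Bin}(N-1,\cdot)$; moving the single client then perturbs this distribution by an amount of order $O\!\left((e^{\gamma}-1)/\sqrt{N}\right)$ in the relevant direction. This is the standard ``hiding among $N$ clones'' phenomenon that produces the $1/N$ amplification factor.

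I would then bound the integer moment $\mathbb{E}_{Q}\big[(P/Q)^{\alpha}\big]$ directly. Writing $P/Q = 1 + \Xi$, where $\Xi$ is the relative perturbation with $\mathbb{E}_Q[\Xi]=0$ by conservation of probability, the binomial expansion gives $\mathbb{E}_Q[(1+\Xi)^\alpha] = \sum_{j=0}^{\alpha}\binom{\alpha}{j}\mathbb{E}_Q[\Xi^j]$. The $j=0$ term is $1$, the $j=1$ term vanishes, and the leading surviving contribution is the $j=2$ term $\binom{\alpha}{2}\mathbb{E}_Q[\Xi^2]$. Bounding the second moment by $4(e^\gamma-1)^2/N$ — the factor $(e^\gamma-1)^2$ arising from the per-element privacy loss and the $1/N$ from averaging over the $N$ permutation slots — and absorbing the higher-order terms $j\ge 3$ into this estimate yields $\mathbb{E}_Q[(P/Q)^\alpha]\le 1+\binom{\alpha}{2}\frac{4(e^\gamma-1)^2}{N}$. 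Applying $\mathcal{D}_\alpha(P\|Q)=\frac{1}{\alpha-1}\log \mathbb{E}_Q[(P/Q)^\alpha]$ then gives precisely the claimed $\gamma_u$.

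The hard part will be this moment-bounding step: one must verify that the first-order term genuinely cancels, pin down the second moment with the correct constant $4$, and — most delicately — show that for every integer $\alpha\ge 2$ the higher-order terms $\binom{\alpha}{j}\mathbb{E}_Q[\Xi^j]$ with $j\ge 3$ do not inflate the bound beyond the clean second-order expression. Controlling these tails uniformly in $\alpha$, rather than only asymptotically in $N$, is where the binomial structure of the clone count together with the pointwise bound $|\Xi|\le e^\gamma-1$ must be carefully exploited.
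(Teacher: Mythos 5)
First, note that the paper does not prove this lemma itself: immediately after stating Lemmas~\ref{lemma:upperb} and \ref{lemma:lowerb} it writes that they are ``proven in Girgis et al.~\cite{girgis2021renyi}'', so the only in-paper ``proof'' is a citation. Your sketch does follow the same general strategy as that cited proof (clone reduction, then bounding the integer moment $\mathbb{E}_Q[(P/Q)^{\alpha}]$ via a binomial expansion), so the approach is the right one in spirit.

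However, there is a genuine gap at exactly the step you flag as ``the hard part,'' and it does not close the way you propose. With only the pointwise bound $|\Xi|\le e^{\gamma}-1$ and the second-moment bound $\mathbb{E}_Q[\Xi^2]\le 4(e^{\gamma}-1)^2/N$, the tail of the expansion satisfies only
\begin{equation*}
\sum_{j=3}^{\alpha}\binom{\alpha}{j}\,\mathbb{E}_Q\!\left[|\Xi|^{j}\right]\;\le\;\frac{4}{N}\sum_{j=3}^{\alpha}\binom{\alpha}{j}(e^{\gamma}-1)^{j},
\end{equation*}
which is not dominated by $\binom{\alpha}{2}\frac{4(e^{\gamma}-1)^2}{N}$ for moderate $\alpha$ or $\gamma$; the terms cannot simply be ``absorbed.'' Indeed, in Girgis et al.\ the actual upper bound is \emph{not} the clean expression in Lemma~\ref{lemma:upperb}: their theorem retains explicit higher-order summands of the form $\sum_{j\ge 3}\binom{\alpha}{j}\,j\,\Gamma(j/2)\bigl(\cdot/N\bigr)^{j/2}$ plus an exponentially small remainder, obtained by showing that the $j$-th moment of $\Xi$ decays like $N^{-j/2}$ (sub-Gaussian concentration of the clone count), not merely like $N^{-1}$. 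The statement as given in the paper is the leading-order term of that bound. So to make your argument rigorous you would need either (i) the refined $N^{-j/2}$ moment bounds and then an explicit argument for why the resulting tail is negligible in the regime considered, or (ii) to accept the weaker, multi-term bound of the cited work. Additionally, the cancellation $\mathbb{E}_Q[\Xi]=0$ and the constant $4$ in the second moment are asserted rather than derived; both require the explicit mixture decomposition of $P$ and $Q$ over the clone histogram, which your sketch gestures at but does not carry out.
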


\begin{lemma}
Given $N$ clients, any $\gamma \ge 0$, and any integer $\alpha \ge 2$, the RDP of the shuffle model is lower-bounded by $\gamma_l$:
$$\gamma_l=\frac{1}{\alpha -1} \log \left(1+ \left( \begin{array}{c} \alpha \\ 2 \end{array} \right) \frac{(e^\gamma-1)^2}{Ne^\gamma}\right)$$
\label{lemma:lowerb}
\end{lemma}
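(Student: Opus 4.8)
The plan is to prove the bound by exhibiting an explicit worst-case instance and showing that the shuffle model's R\'enyi divergence on that instance is already at least $\gamma_l$. Since the RDP guarantee is a supremum over neighboring datasets (and over admissible local randomizers at level $\gamma$), any single instance furnishes a valid lower bound and thereby certifies that the amplification cannot beat $\gamma_l$. The natural candidate is binary randomized response at local level $\gamma$: each client reports its true bit with probability $q=\frac{e^\gamma}{e^\gamma+1}$ and the flipped bit with probability $p=\frac{1}{e^\gamma+1}$, which is exactly $\gamma$-LDP since $q/p=e^\gamma$. I would take neighboring datasets $D=(0,\dots,0)$ and $D'=(1,0,\dots,0)$ that differ only in the first client's bit.

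The first substantive step is a reduction. Because every per-client output is a single bit and the shuffler destroys the association between clients and messages, the shuffled transcript is statistically equivalent to the scalar count $C=\sum_i y_i$ of ones, a sufficient statistic, so by the data-processing inequality it suffices to analyze the two count laws. Under $D$ the count is $P=\mathrm{Bin}(N,p)$, while under $D'$ it is $Q=\mathrm{Bin}(N-1,p)+\mathrm{Bern}(q)$, a single-coordinate perturbation of the same binomial. Consequently $\mathrm{RDP}(\alpha)\ge \mathcal{D}_\alpha(Q\|P)$, and the task reduces to lower-bounding the R\'enyi divergence between a binomial and its one-coin perturbation.

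Next I would expand $e^{(\alpha-1)\mathcal{D}_\alpha(Q\|P)}=\mathbb{E}_{P}\!\left[(Q/P)^\alpha\right]$ and isolate its leading behavior in $1/N$. Writing $x=Q/P-1$, one has $\mathbb{E}_P[x]=0$ and $\mathbb{E}_P[x^2]=\chi^2(Q\|P)$, and the crucial simplification is the exact cancellation $\frac{(q-p)^2}{p(1-p)}=\frac{(e^\gamma-1)^2}{e^\gamma}$, which collapses the chi-squared divergence to $\frac{(e^\gamma-1)^2}{Ne^\gamma}$ at the relevant order. Matching this against the binomial expansion $\mathbb{E}_P[(1+x)^\alpha]=\sum_{j}\binom{\alpha}{j}\mathbb{E}_P[x^j]$ (a finite sum precisely because $\alpha$ is an integer) and retaining the first nontrivial moment yields the term $\binom{\alpha}{2}\frac{(e^\gamma-1)^2}{Ne^\gamma}$, so that $\mathcal{D}_\alpha(Q\|P)\ge\frac{1}{\alpha-1}\log\!\big(1+\binom{\alpha}{2}\frac{(e^\gamma-1)^2}{Ne^\gamma}\big)=\gamma_l$.

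The hard part will be this last inequality for $\alpha\ge 3$. The naive convexity bound $(1+x)^\alpha\ge 1+\alpha x+\binom{\alpha}{2}x^2$ is false pointwise once the likelihood ratio dips below one (already $(1+x)^3=1+3x+3x^2+x^3$ fails when $x<0$), so I cannot simply discard the higher-order terms in a lower bound. I would instead argue that the aggregate tail $\sum_{j\ge3}\binom{\alpha}{j}\mathbb{E}_P[x^j]$ is nonnegative for this binomial-shift instance by exploiting the monotone/log-concave structure of $Q/P$, or, if that sign analysis proves delicate, replace the instance by a rare-event (``blanket'') variant in which the distinguishing event carries vanishing mass so that every higher central moment is provably $o$ of the chi-squared term and the matching lower bound holds in the limit. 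Controlling these higher moments uniformly in $\alpha$ is the main technical obstacle; everything else is a finite moment computation.
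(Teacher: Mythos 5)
The paper does not actually prove this lemma --- it is imported directly from Girgis et al. --- and your construction is essentially the proof given in that cited source: binary randomized response at local level $\gamma$, neighboring datasets differing in one client's bit, reduction to the count $C=\sum_i y_i$ as a sufficient statistic of the shuffled transcript, and a termwise binomial expansion of $\mathbb{E}_P[(Q/P)^\alpha]$ for integer $\alpha$. Your identification of the second-order term $\binom{\alpha}{2}\frac{(e^\gamma-1)^2}{Ne^\gamma}$ is correct, and in fact it is exact rather than merely leading-order: the likelihood ratio $Q(k)/P(k)$ is an affine function of the count $k$, say $1+b(k-Np)$ with $b=\frac{(e^\gamma-1)(e^\gamma+1)}{Ne^\gamma}$, so $\chi^2(Q\Vert P)=b^2\,\mathrm{Var}_P(k)=b^2\cdot\frac{Ne^\gamma}{(e^\gamma+1)^2}=\frac{(e^\gamma-1)^2}{Ne^\gamma}$ with no remainder.

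The step you flag as the main obstacle --- nonnegativity of $\sum_{j\ge3}\binom{\alpha}{j}\mathbb{E}_P[x^j]$ --- does close, and by the first of your two routes, so the rare-event/``blanket'' fallback is unnecessary. Since $x=b(k-Np)$ with $b>0$, each tail term equals $\binom{\alpha}{j}b^j\,\mathbb{E}_P[(k-Np)^j]$, and every central moment of $\mathrm{Bin}(N,p)$ with $p=\frac{1}{e^\gamma+1}\le\frac{1}{2}$ is nonnegative (even moments trivially; odd moments because the binomial is right-skewed for $p\le\frac{1}{2}$ --- this is precisely the auxiliary moment lemma in Girgis et al.). Hence the tail is nonnegative term by term, no uniform control in $\alpha$ is required, and $\mathbb{E}_P[(Q/P)^\alpha]\ge 1+\binom{\alpha}{2}\frac{(e^\gamma-1)^2}{Ne^\gamma}$ follows, giving $\gamma_l$. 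One point worth making explicit in a write-up: the argument genuinely needs $\alpha$ to be an integer so that the expansion $(1+x)^\alpha=\sum_{j=0}^{\alpha}\binom{\alpha}{j}x^j$ terminates with nonnegative coefficients, which is exactly why the lemma is stated only for integer $\alpha\ge2$.
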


These lemmas, proven in Girgis et al. \cite{girgis2021renyi}, indicate that the Shuffler amplifies privacy in proportion to the number of clients.

\subsubsection{Evaluation of \sys with Shuffler}
\label{sec:extraabexp}

Experiments, shown in Figure \ref{fig:ablation}, reveal that integrating a Shuffler marginally improves utility, accelerates convergence, and yields the best performance when combined with the Staircase mechanism. These benefits increase with the number of clients.

\section{Experiments}
\label{sec:experiment}
In this section, we will evaluate the performance of \sys on privacy, accuracy and efficiency. The key objectives of our evaluations are: (1) Assessing the accuracy of \sys in comparison with SOTA mechanisms. (2) Investigating the convergence behavior of \sys to understand how its hyperparameters influence the training performance. (3) Demonstrating \sys's computational and communication efficiency against baseline methods. (4) Rigorously testing \sys's resilience against common privacy attacks, including membership inference, data reconstruction, and attribute inference attacks, to validate its defense performance.

\subsection{Experiment Setup}

\noindent\textbf{\noindent\textbf{Datasets}.} We conduct experiments to evaluate \sys primarily on the following datasets:

\begin{itemize}
\item MNIST: 70,000 
images of handwritten digits $0-9$ \cite{lecun1998gradient}.

\vspace{0.05in}

\item Medical MNIST: 60,000 grayscale images across 10 medical conditions \cite{medmnist}.

\vspace{0.05in}

\item CIFAR-10: 60,000 $32\times32$ color images in 10 classes (50,000 training, 10,000 testing images) \cite{krizhevsky_2009a}.

\end{itemize}

\noindent\textbf{ML Models.} For the MNIST dataset, we utilize a two-layer CNN with ReLU activation, max pooling, and two fully connected layers. It processes single-channel images to predict class labels. In contrast, the Medical MNIST dataset employs a four-layer CNN with additional fully connected layers designed for 3-channel image classification. Both networks use ReLU activation and log softmax in the final output layer with cross-entropy loss. For CIFAR-10, we use the ResNet-18 architecture \cite{he2016deep} other than pre-trained models. This choice, focusing on initial training, may lead to lower accuracy but is crucial for assessing \sys's influence on early-stage learning and privacy in federated learning.

\vspace{0.05in}

\noindent\textbf{Parameters Setting.} We have used a learning rate of $0.01$ in all experiments. For the MNIST and Medical datasets, we have set the clipped gradient value of the $\ell_2$ norm to $1$ and $0.1$, respectively. For the CIFAR-10, and UTKFace datasets (to be used in the defense evaluation against privacy attacks in Section \ref{sec:defense}), the clipped gradient value is set to $0.01$. The default number of clients is set at $10$, and the sampling rate is set at $0.05$. We set the local communication round to be $150$ and the local training epoch to be $2$. 

Experiments are performed on two servers: one is equipped with two NVIDIA H100 Hopper GPUs (80GB each) while the other one is installed with multiple NVIDIA RTX A6000 GPUs (48GB each).

\begin{figure*}[!tbh]
	\centering
	\subfigure[MNIST: Accuracy vs $\epsilon$]{
		\includegraphics[angle=0, width=0.25\linewidth]{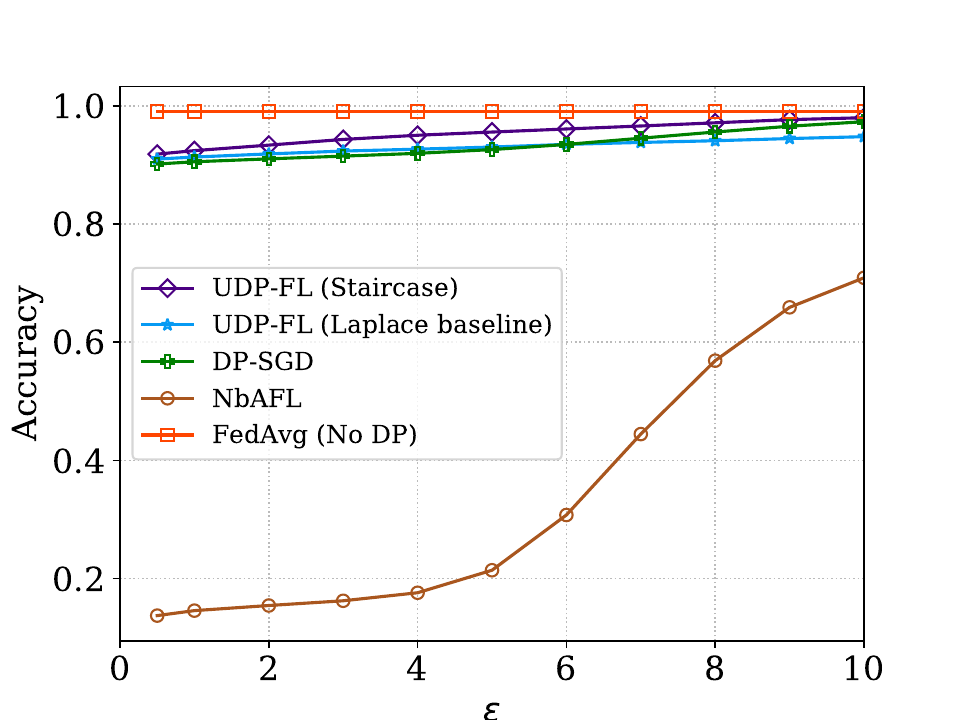}
		\label{fig:mnist_epsilon_comparison} }
		\hspace{-0.2in}
	\subfigure[Medical: Accuracy vs $\epsilon$]{
		\includegraphics[angle=0, width=0.25\linewidth]{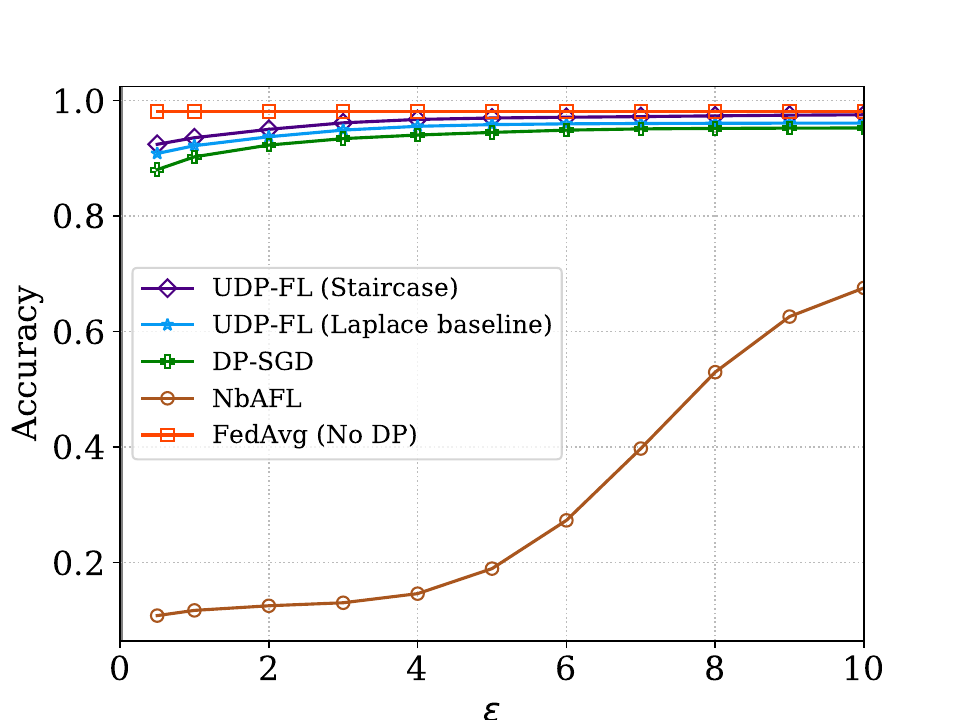}
		\label{fig:medical_epsilon_comparison}}
		\hspace{-0.2in}
	\subfigure[MNIST: Accuracy vs epochs]{
		\includegraphics[angle=0, width=0.25\linewidth]{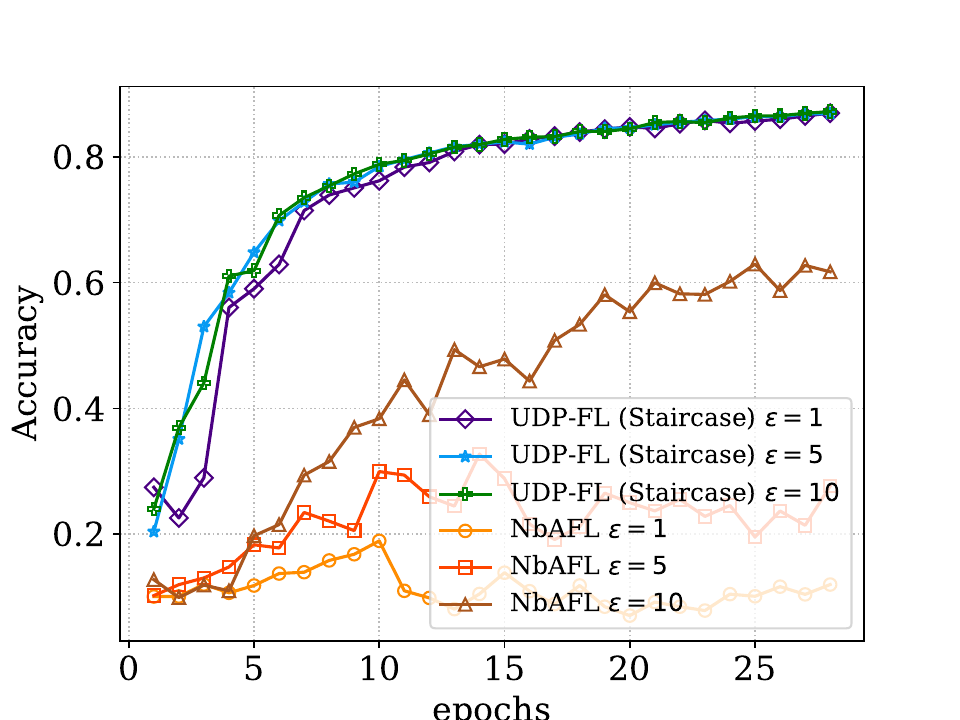}
		\label{fig:mnist_comparison_epochs} }
		\hspace{-0.2in}
	\subfigure[Medical: Accuracy vs epochs]{
		\includegraphics[angle=0, width=0.25\linewidth]{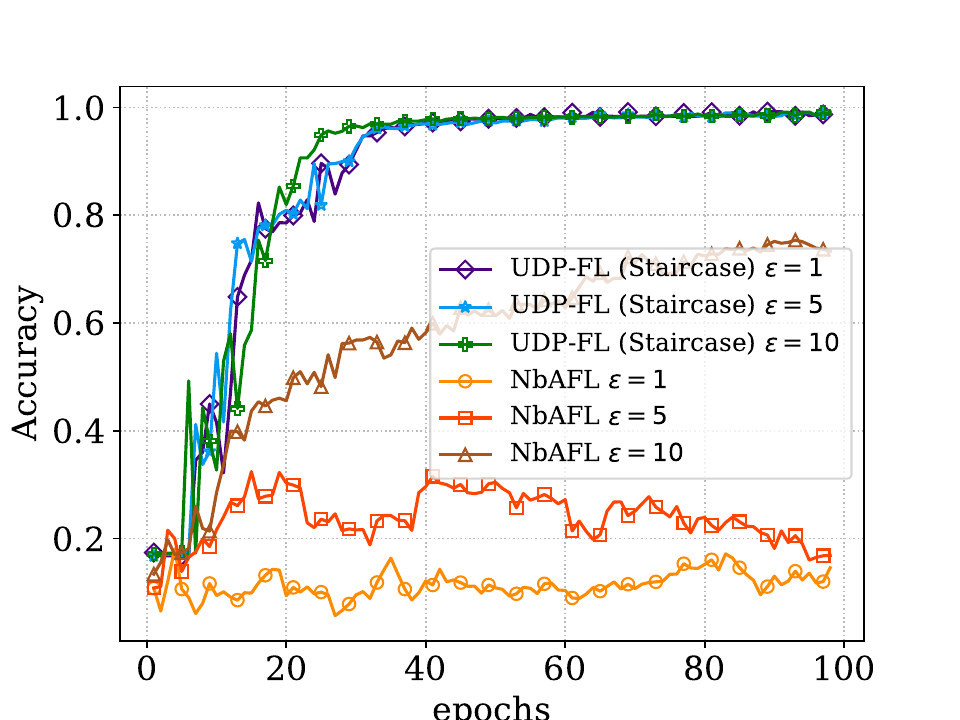}
		\label{fig:medicine_comparison_epochs} }
	\caption{Accuracy and convergence results of  \sys and the baselines. 1) among the three mechanisms, the Staircase always performs the best with the same privacy budget; 2) \sys obtains significantly better privacy-utility tradeoff and faster convergence than the baseline; and 3) \sys (Staircase) even has a comparable accuracy with FedAvg (No DP).}
	\label{fig:related_comparison}
\end{figure*}

\subsection{Accuracy Comparison vs Baseline Methods}
\label{sec:comparison}
Due to the diverse settings for DP guarantee, trust model, noise injection, and model architecture in DP-FL, there is no universally accepted benchmark for evaluating DP-FL methods. Therefore, in this work, we will compare UDP-FL with NbAFL~\cite{9069945} and DP-SGD applied to FedAvg (equivalent to UDP-FL with Gaussian noise), as they share similar settings (sample-level DP within each client and local noise injection before aggregation). We also apply the classic FedAvg~ \cite{mcmahan_2018a} without DP guarantees as the baseline in the experiments.  
Figure \ref{fig:related_comparison} shows the comparison results. 
We can observe from Figure \ref{fig:mnist_epsilon_comparison} and \ref{fig:medical_epsilon_comparison} that \sys, when using the Staircase mechanism,  
obtains higher accuracy and faster convergence rates compared to other methods. For instance, on the MNIST dataset, UDP-FL achieves 90\% accuracy in about 25 epochs, while other methods struggle to reach this level even after 100 epochs. This faster convergence is particularly evident in the Medical MNIST dataset, where UDP-FL converges in approximately 30 epochs, while other methods fail to converge even after 100 epochs.
Furthermore, \sys achieves nearly the same accuracy as FedAvg without DP guarantee. The primary reason is that we evenly distribute the datasets to each client so that their local datasets are unique. 

Moreover, each time we randomly select some clients to update their models, this may cause the global model to take more time to converge because the data distribution from clients is more heterogeneous. Thus, the noise generated from \sys helps to balance the unbiased distribution and contributes to faster convergence. 
\sys requires fewer training epochs to reach the optimal performance (as shown in Figure \ref{fig:mnist_comparison_epochs} and \ref{fig:medicine_comparison_epochs}) whereas other methods take more training epochs but still result in lower accuracy.

\subsection{Convergence Results}
Figure \ref{fig:mnist_comparison_epochs} and \ref{fig:medicine_comparison_epochs} further prove that even when the data distributions from different clients are heterogeneous, \sys can still preserve good performance and converge faster than other baselines. For instance, in Figure \ref{fig:mnist_comparison_epochs}, it takes about $25$ epochs for \sys to achieve an accuracy of $0.9$ and the accuracy tends to be stable since then. However, the accuracy of other baselines is low and fluctuated. Similar results can be seen in Figure \ref{fig:medicine_comparison_epochs}, where it only takes about $30$ epochs for \sys to converge on the medical dataset. In contrast, other baselines did not converge even with $100$ epochs. Thus, \sys converges faster and has better performance. 

\vspace{-0.2in}

\begin{figure}[!ht]
	\centering
	\subfigure[Accuracy vs epochs]{
    \includegraphics[angle=0, width=0.5\linewidth]{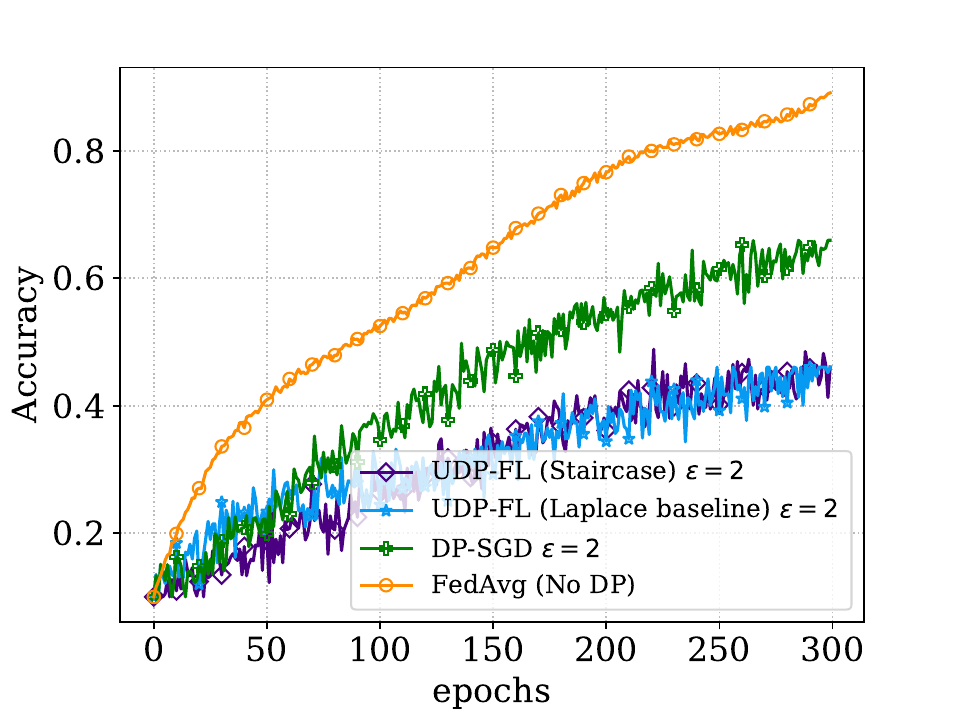}
    \label{fig:cifar10_comparison_epochs_epsilon2}}
    \hspace{-0.21in}
    \subfigure[Accuracy vs epochs]{
    \includegraphics[angle=0, width=0.5\linewidth]{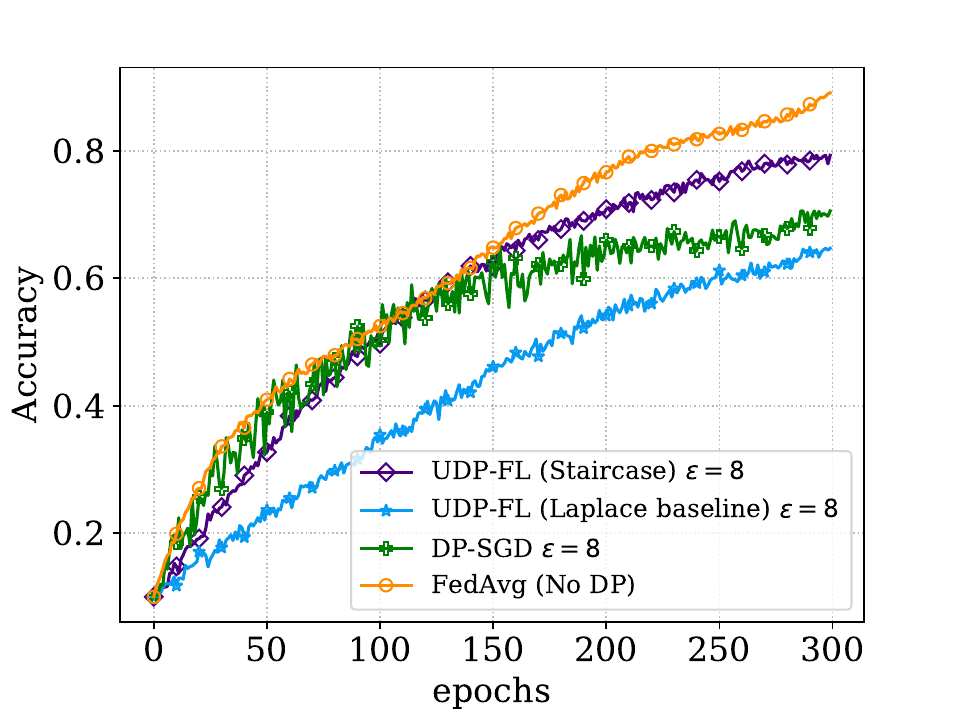}
    \label{fig:cifar10_comparison_epochs_epsilon8}}
\caption[Optional caption for list of figures]
{\sys on CIFAR-10 when (a) $\epsilon = 2$ and (b) $\epsilon = 8$. For a small privacy budget ($\epsilon = 2$), DP-SGD yields better performance, while at a larger privacy budget ($\epsilon = 8$), \sys with Staircase mechanism outperforms DP-SGD and Laplace.
}
\label{fig:cifarconverge}
\end{figure}

Figure \ref{fig:cifarconverge} for \sys on CIFAR-10 shows that with a small privacy budget, \sys with Staircase and Laplace mechanism converge more smoothly than DP-SGD. At a larger privacy budget, \sys achieves rapid convergence comparable to the non-private baseline, matching the theoretical results on faster convergence. 

\subsection{\sys Evaluation}
\noindent\textbf{The Number of Clients}. As discussed in Section \ref{sec:privacy}, the enhanced privacy is related to a number of clients $N$ and sampling rate $q$. Analyzing the impact of these hyperparameters allows us to gain insights into the scalability and adaptability of \sys, ensuring optimal performance across different settings. We use \sys with the Staircase mechanism to represent the optimal randomization (w.r.t. a range of $\epsilon$) and Laplace mechanism as another baseline besides DP-SGD. We will present the performance of \sys with other mechanisms in Section \ref{sec:comparison}.

The number of clients significantly impacts the performance, communication overhead, model convergence, and privacy-utility tradeoff in FL frameworks. We experimented with $50, 100, 150, 200$ clients (as shown in Figure \ref{fig:hyperparameters}), selecting $10\%$ of all the clients randomly per training round. Each chosen client trains locally on $5\%$ of their data for $2$ epochs. As shown in Figures \ref{fig:mnist_hyper_clientnum} and \ref{fig:medical_hyper_clientnum}, the accuracy slightly decreases with more clients, likely due to the increased complexity in aggregating diverse model updates. This variance affects convergence and generalization but, notably, the performance drop is not substantial, showing \sys's effectiveness in handling scalable, heterogeneous client scenarios in FL.

\vspace{0.05in}

\noindent\textbf{Sampling Rate}. The sampling rate can significantly impact the convergence speed, model performance, and privacy-utility tradeoff in \sys. A higher sampling rate means more data samples are used in each training epoch, which can lead to faster convergence and better model performance. However, this increased usage may cause higher privacy loss. Thus, more noise will be used to preserve privacy. Figure \ref{fig:mnist_hyper_samplerate} and \ref{fig:medical_hyper_samplerate} confirm the results in the theoretical analysis (as discussed in Section \ref{sec:privacy}). Our experiments showed that as the sampling rate increases, the model accuracy will be improved, and the convergence becomes faster. At a low sampling rate (e.g., $0.01$), the model accuracy will be lower, and it will take more rounds to achieve convergence. Conversely, at a high sampling rate (e.g., $0.5$), the model can achieve higher accuracy and converge more quickly. These results indicate that including more data samples in each training round can lead to better model performance.

\begin{figure*}[!t]
	\centering
		\subfigure[MNIST: Acc vs client \#]{
		\includegraphics[angle=0, width=0.25\linewidth]{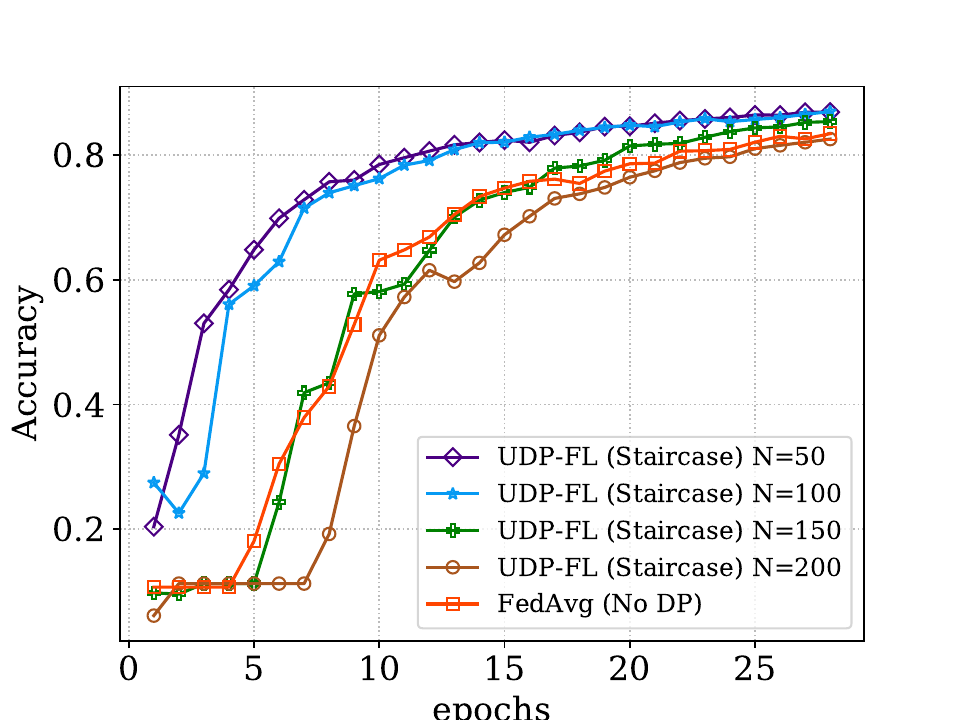}
		\label{fig:mnist_hyper_clientnum} }
		\hspace{-0.2in}
	\subfigure[MNIST: Acc vs sampling rate]{
		\includegraphics[angle=0, width=0.25\linewidth]{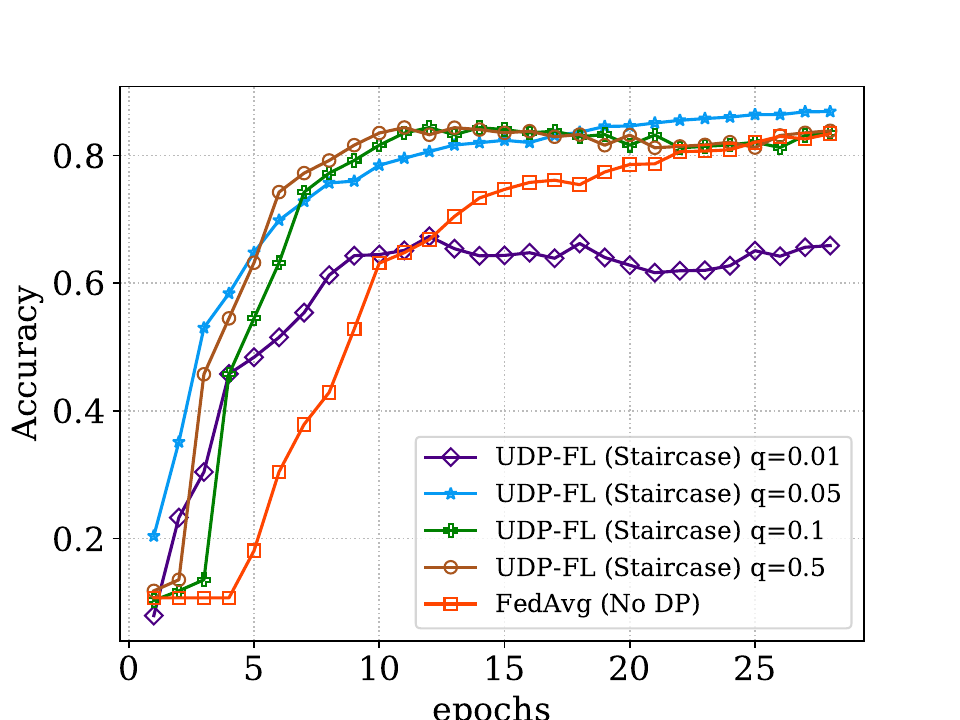}
		\label{fig:mnist_hyper_samplerate} }
		\hspace{-0.2in}
	\subfigure[Medical: Acc vs client \#]{
		\includegraphics[angle=0, width=0.25\linewidth]{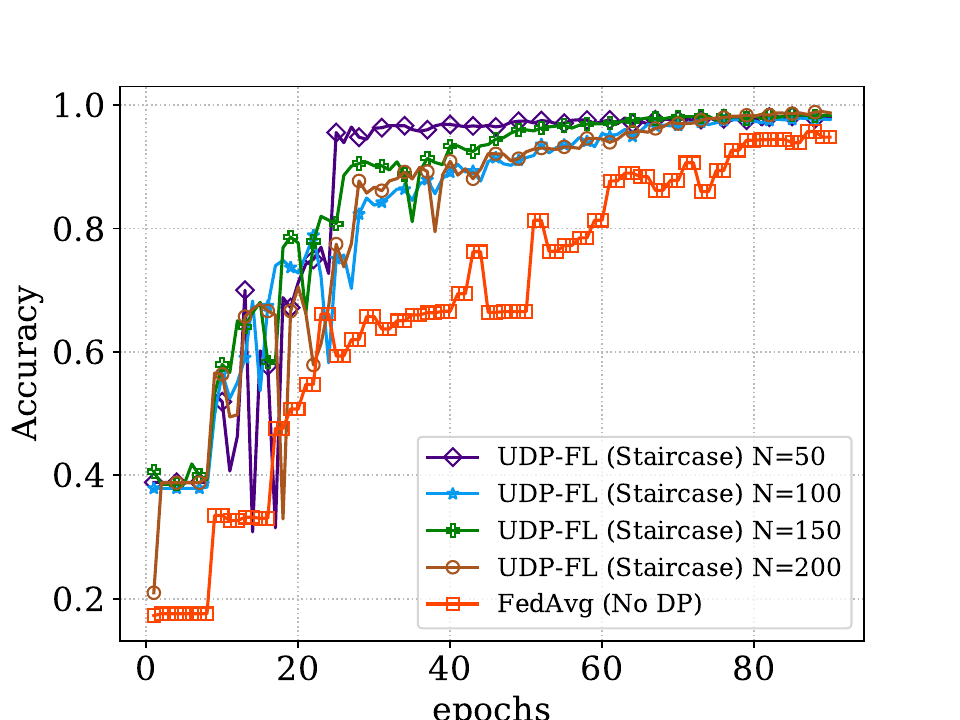}
		\label{fig:medical_hyper_clientnum} }
		\hspace{-0.2in}
	\subfigure[Medical: Acc vs sampling rate]{
		\includegraphics[angle=0, width=0.25\linewidth]{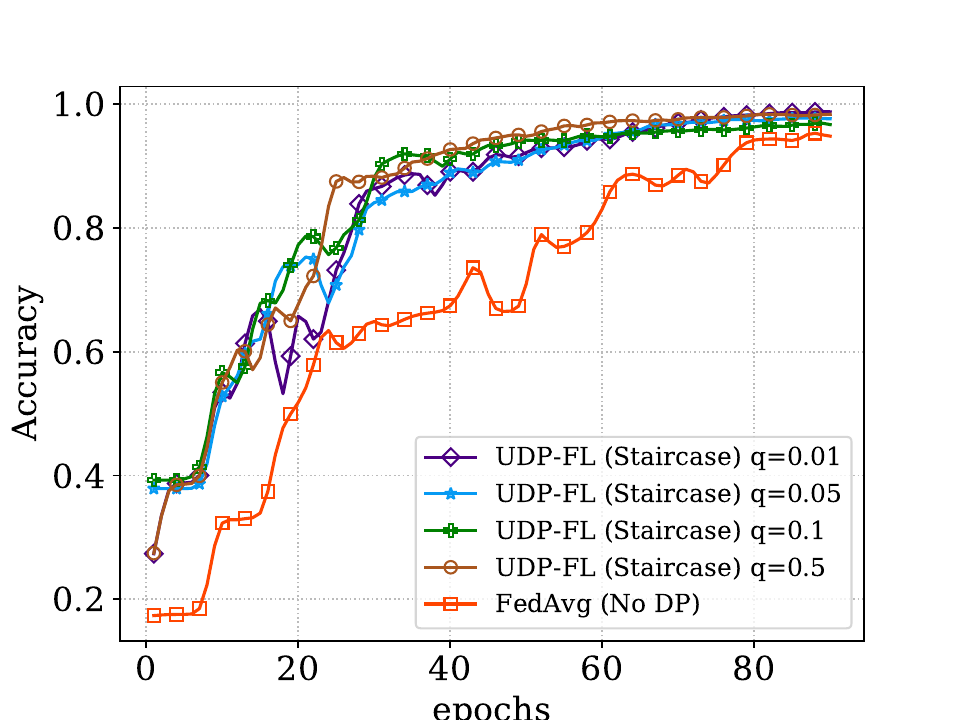}
		\label{fig:medical_hyper_samplerate}}
	\caption{Impact of the number of clients and sampling rate on \sys. We observe that: 1) when the number of clients increases, shown in Figures (a) and (c), \sys needs more epochs to converge; 2) with the increase of data sampling rate shown in Figures (b) and (d), \sys converges faster. 
 }
	\label{fig:hyperparameters}
\end{figure*}

\begin{figure*}[!tbh]
	\centering
		\subfigure[MNIST: Accuracy vs $\epsilon$]{
		\includegraphics[angle=0, width=0.25\linewidth]{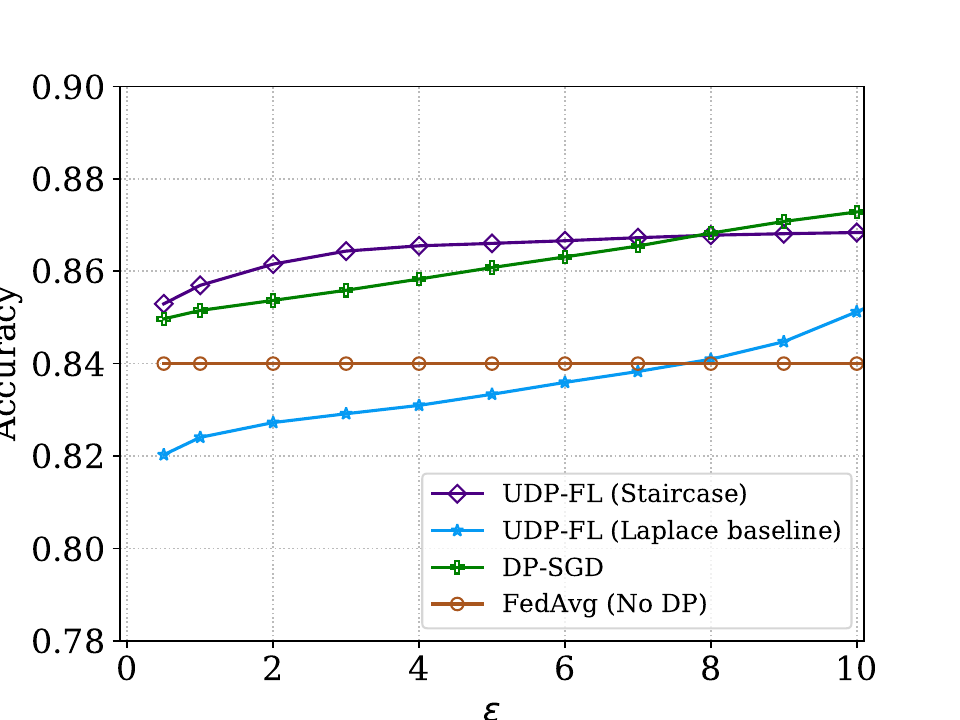}
		\label{fig:mnist_epsilon_ablation_origin} }
		\hspace{-0.2in}
	\subfigure[MNIST: Accuracy vs epochs]{
		\includegraphics[angle=0, width=0.25\linewidth]{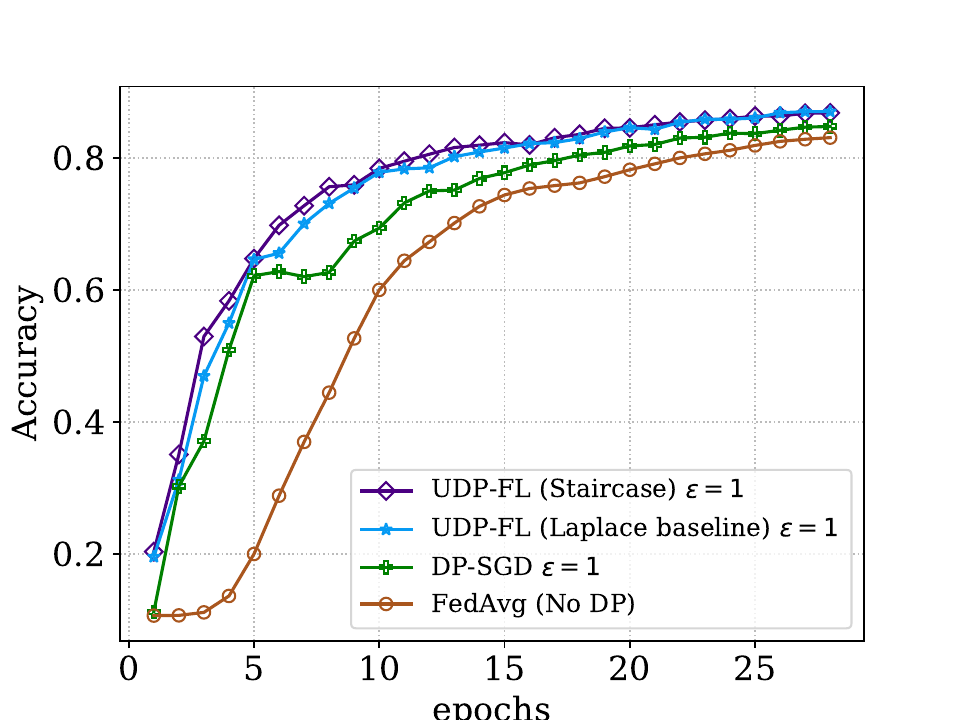}
		\label{fig:mnist_ablation_epochs_origin} }
		\hspace{-0.2in}
		\subfigure[Medical: Accuracy vs $\epsilon$]{
		\includegraphics[angle=0, width=0.25\linewidth]{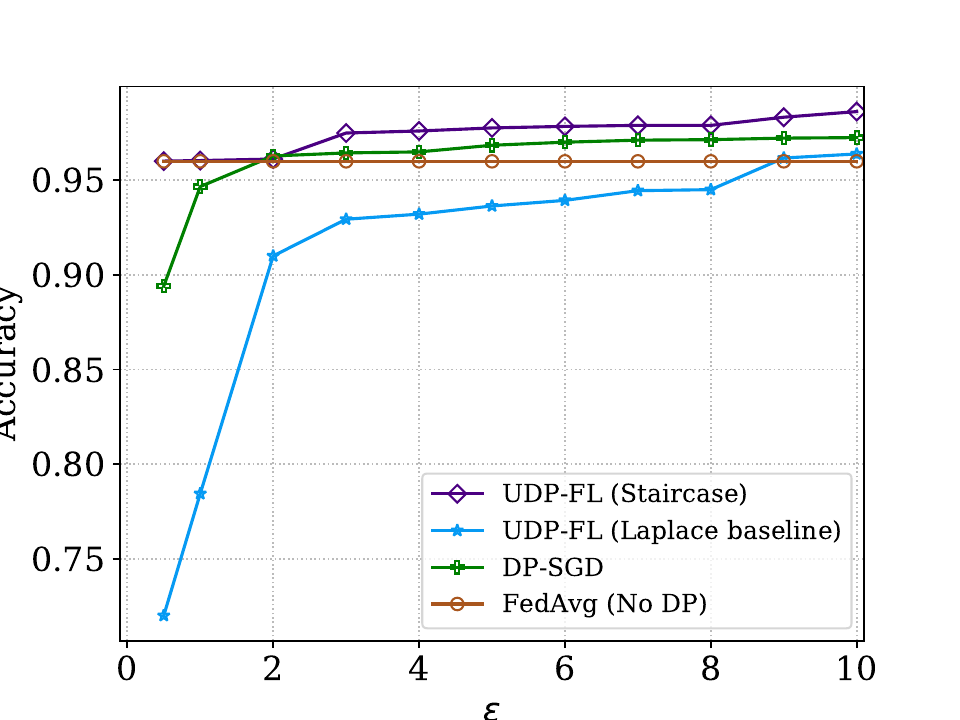}
		\label{fig:medical_epsilon_ablation_origin} }
		\hspace{-0.2in}
	\subfigure[Medical: Accuracy vs epochs]{
		\includegraphics[angle=0, width=0.25\linewidth]{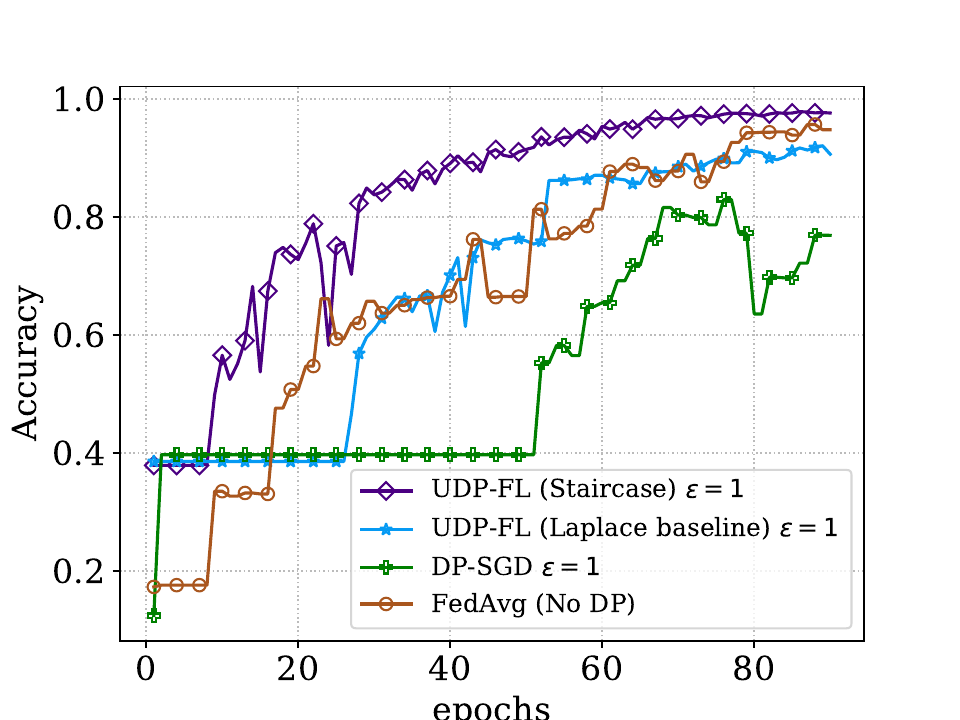}
		\label{fig:medical_ablation_epochs_origin} }
	\caption{Performance evaluation of \sys on MNIST and Medical datasets. (a) and (c) present \sys's accuracy under various differential privacy noise mechanisms, compared to a non-private baseline, with varying $\epsilon$ values.  (b) and (d) illustrate the learning curves over training epochs for the MNIST and Medical datasets without privacy and with DP guarantees. }\vspace{0.05in}
    \label{fig:ablation}
    \end{figure*}

\vspace{0.05in}

\noindent\textbf{Performance across Privacy Settings}. In our comprehensive evaluation, we first delve into the privacy-utility trade-off on the MNIST and Medical datasets, as depicted in Figure \ref{fig:ablation}. \sys's performance with the Staircase mechanism exhibits a steady increase in accuracy as the privacy budget increases and outperforms other baseline methods. The accuracy versus epochs on both datasets reveals \sys's capability for consistent learning over time, even outperforming the non-private baseline (FedAvg) during early epochs on the Medical dataset. These results have validated the practicality of UDP-FL in scenarios where stringent privacy is required without substantially compromising model performance.

Subsequently, we extend our evaluation to the CIFAR-10 dataset, which offers a more complex challenge due to its higher dimensionality and diverse image representations. This further examination on CIFAR-10 aims to validate \sys's robustness and scalability in more intricate visual data scenarios. From the experimental results on CIFAR-10 dataset in Table \ref{tab:cifarprivacy}, we observed that DP-SGD has shown consistent moderate accuracy across varying privacy levels, peaking when no privacy constraint was applied ($\epsilon = \infty$). \sys with the Laplace mechanism, while less accurate at stricter privacy settings, improved as privacy constraints were relaxed. Notably, \sys with the Staircase mechanism initially underperformed at lower $\epsilon$ values but significantly improved with relaxed privacy, surpassing Gaussian. This observation demonstrates that when the privacy protection is medium and relatively weaker, \sys with the Staircase mechanism can effectively improve the trade-off between privacy and utility, even in the FL setting. 

\vspace{0.05in}

\begin{table}[!ht]
\centering
\footnotesize
\caption{\sys accuracy vs. privacy guarantees.}
\label{tab:cifarprivacy}
\resizebox{\columnwidth}{!}{
\begin{tabular}{|l|l|ccccc|}

\hline
Datasets                  & Mechanisms & $\epsilon = 2$ & $\epsilon = 4$ & $\epsilon = 6$ & $\epsilon = 8$ & $\epsilon = \infty$ \\ \hline
\multirow{3}{*}{CIFAR-10}  & Gaussian   & {\cellcolor{green!20}\textbf{0.704}}         & {\cellcolor{green!20}\textbf{0.714}}         & 0.719         & 0.742         & 0.871          \\
                          & Laplace    & 0.475          & 0.461         & 0.506         & 0.638         & 0.871           \\
                          & Staircase  & 0.633         & 0.691         & {\cellcolor{green!20}\textbf{0.733}}         & {\cellcolor{green!20}\textbf{0.780}}         & 0.871          \\
\hline
\end{tabular}}
\end{table}

\begin{table}[!ht]
\centering
\footnotesize
\caption{Runtime of \sys (sec) vs \# training iterations.} 
\resizebox{\columnwidth}{!}{
\begin{tabular}{|c|cccc|}
\hline
Iterations                    & 50      & 100     & 500      & 1000     \\ \hline
MNIST  (50 clients)    & 1049.71 & 2056.34 & 10500.80 & 20803.10 \\
Medical  (50 clients)  & 644.97  & 1328.23 & 6482.50  & 13177.98 \\
CIFAR-10  (50 clients) & 1808.67 & 3515.08 & 16443.23 & 31956.76 \\
MNIST  (100 clients)   & 1108.91 & 2196.13 & 10934.62 & 21579.99 \\
Medical  (100 clients) & 714.02  & 1445.75 & 7064.78  & 14353.64 \\
CIFAR-10  (100 clients) & 1873.93 & 3653.36 & 17215.17 & 33562.24 \\
Computing Parameters          & 50.50    & 50.60    & 51.60     & 54.30     \\ \hline
\end{tabular}}
\label{table:computation} 
\end{table}

\noindent\textbf{Computation and Communication Overheads}. The efficiency of a DP-FL framework is a critical aspect to be considered, where the computation and communication overheads can accurately reflect the overall system performance. In this section, we evaluate the computation and communication overheads of \sys. Specifically, we will present the total local training time and noise multiplier computation time in \sys, which are executed on the Flower platform \cite{beutel2020flower}. All the results are shown in Table \ref{table:computation} (can greatly reduce the training time due to faster convergence). 

Moreover, since each client sends a local model to the server with a size of $\sim$2MB in each communication round, with a faster convergence by \sys, the total bandwidth consumption can be reduced by more than 50\%, e.g., 40GB bandwidth reduction for $10$ clients involved in the FL.

\subsection{Defense against Privacy Attacks}
\label{sec:defense}
In the following, we will evaluate the performance of our \sys against several common privacy attacks in the domain of federated learning, specifically Membership Inference Attack (MIA) \cite{shokri2017membership}, Data Reconstruction Attack (DRA) \cite{fredrikson2015model, zhu2019deep}, and Attribute Inference Attack (AIA) \cite{yeom2018privacy, attrinf}. Notably, we underscore the significance of DP's core advantage lies in its ability to offer plausible deniability \cite{BindschaedlerSG17}, maintaining the privacy defenses of \sys against these attacks with strong indistinguishability. This fundamental attribute ensures that, regardless of an attack's sophistication, the indistinguishability introduced by DP mechanisms significantly complicates the accurate reconstruction or direct association of any data with individual participants. Since these privacy attacks are primarily developed based on deterministic results, the evaluation against these attacks can only be based on several sampled random results instead of the entire output space. Thus, given the nature of randomized output by DP mechanisms, \emph{the clients and data owners can still deny their inferred results} \cite{BindschaedlerSG17}, even if the privacy attacks achieve high accuracy on some sampled random results.

\vspace{0.05in}

\noindent\textbf{Membership Inference Attacks.} In our investigation, we meticulously assess the resilience of \sys against three advanced Membership Inference Attacks (MIAs) using the CIFAR-10 dataset. The first MIA method we used is Shokri et al.\cite{shokri2017membership} (We use Shokri to name this attack), which assesses how machine learning models reveal information about their training datasets. The second, the Likelihood Ratio Attack (LiRA)~\cite{carlini2022membership}, utilizes shadow models to statistically ascertain if a data point was used in training. The third, the Canary attack, employs synthetic images, refined through iteration, to probe the model's disclosure of training data characteristics. Our goal is to identify and mitigate potential privacy risks in the model's outputs. We adopt the evaluation setting from Canary \cite{wen2022canary} that maintains the True Positive Rate (TPR) at a stringent 0.01 to evaluate the False Positive Rate (FPR), Area Under the Curve (AUC), and Accuracy (ACC). This conservative approach aims to minimize false positives, reflecting the grave legal and ethical implications of erroneous membership inferences, underscoring the importance of a defense mechanism that accurately prevents unauthorized inferences while minimizing errors.

\vspace{0.05in}

\begin{table}[!ht]
\centering
\caption{Evaluation of the Shokri et al. \cite{shokri2017membership}, SOTA LiRA~\cite{carlini2022membership} and Canary~\cite{wen2022canary} MIAs on CIFAR-10. TPR$^*$ denotes the TPR when FPR=0.01. The TPR$^*$, ACC and AUC for Shokri et al. are 0.053, 0.710, and 0.757. The TPR$^*$, ACC and AUC LiRA are 0.126, 0.651, and 0.716. The TPR$^*$, ACC and AUC LiRA are 0.137, 0.649, and 0.719.}
\label{tab:mia_fl}
\footnotesize\addtolength{\tabcolsep}{-3pt}
\label{subtab:lira_results}
\resizebox{\columnwidth}{!}{
\begin{tabular}{|c|c|ccc|ccc|ccc|}
\hline
\multirow{3}{*}{Attack} & \multirow{3}{*}{$\epsilon$} &
  \multicolumn{3}{c|}{DP-SGD} &
  \multicolumn{3}{c|}{\begin{tabular}[c]{@{}c@{}}\sys \\ (Laplace baseline)\end{tabular}} &
  \multicolumn{3}{c|}{\begin{tabular}[c]{@{}c@{}}\sys \\ (Staircase)\end{tabular}}  \\ \cline{3-11} 
& & TPR$^*$ & ACC& AUC & TPR$^*$ & ACC& AUC & TPR$^*$ & ACC& AUC  \\
\hline
\multirow{4}{*}{Shokri} & 2 & \cellcolor{green!20}\textbf{0.009} & \cellcolor{green!20}\textbf{0.502} & \cellcolor{green!20}\textbf{0.493} & 0.011 & 0.509 & 0.508 & \cellcolor{green!20}\textbf{0.009} & 0.506 & 0.501 \\
& 4 & \cellcolor{green!20}\textbf{0.009} & \cellcolor{green!20}\textbf{0.505} & \cellcolor{green!20}\textbf{0.499} & 0.013 & 0.512 & 0.509 & \cellcolor{green!20}\textbf{0.009} & \cellcolor{green!20}\textbf{0.505} & 0.502 \\
& 6 & 0.013 & 0.510 & 0.509 & 0.014 & 0.505 & 0.501 & \cellcolor{green!20}\textbf{0.009} & \cellcolor{green!20}\textbf{0.504} & \cellcolor{green!20}\textbf{0.499} \\
& 8 & 0.011 & 0.503 & 0.499 & \cellcolor{green!20}\textbf{0.007} & 0.505 & 0.500 & 0.008 & \cellcolor{green!20}\textbf{0.504} & \cellcolor{green!20}\textbf{0.496} \\
\hline
\multirow{4}{*}{LiRA} & 2 & 0.013 & 0.506 & 0.497 & 0.009 & 0.506 & 0.501 & \cellcolor{green!20}\textbf{0.008} & \cellcolor{green!20}\textbf{0.504} & \cellcolor{green!20}\textbf{0.491}  \\
& 4 & 0.014 & 0.513 & 0.505 & 0.013 & \cellcolor{green!20}\textbf{0.505} & \cellcolor{green!20}\textbf{0.495} & \cellcolor{green!20}\textbf{0.008} & 0.510 & 0.501 \\
& 6 & 0.017 & 0.514 & 0.511 & 0.011 & \cellcolor{green!20}\textbf{0.505} & \cellcolor{green!20}\textbf{0.493} & \cellcolor{green!20}\textbf{0.009} & 0.510 & 0.504 \\
& 8 & 0.011 & \cellcolor{green!20}\textbf{0.504} & 0.497 & \cellcolor{green!20}\textbf{0.009} & 0.513 & 0.505 & \cellcolor{green!20}\textbf{0.009} & 0.507 & \cellcolor{green!20}\textbf{0.492} \\
\hline
\multirow{4}{*}{Canary} & 2 & 0.016 & 0.519 & 0.513 & 0.011 & 0.504 & 0.497 & \cellcolor{green!20}\textbf{0.009} & 0.506 & \cellcolor{green!20}\textbf{0.495}  \\
& 4 & 0.016 & \cellcolor{green!20}\textbf{0.510} & 0.504 & 0.020 & 0.512 & \cellcolor{green!20}\textbf{0.498} & \cellcolor{green!20}\textbf{0.013} & 0.512 & 0.509 \\
& 6 & 0.015 & 0.523 & 0.528 & 0.011 & 0.510 &0.507 & \cellcolor{green!20}\textbf{0.009} & \cellcolor{green!20}\textbf{0.507} & \cellcolor{green!20}\textbf{0.500} \\
& 8 & \cellcolor{green!20}\textbf{0.010} & 0.521 & 0.518 & 0.015 & 0.516 & 0.507 & 0.012 & \cellcolor{green!20}\textbf{0.511} & \cellcolor{green!20}\textbf{0.509} \\
\hline
\end{tabular}}
\end{table}

Table \ref{tab:mia_fl} provides valuable insights into the effectiveness of various noise mechanisms in mitigating membership inference attacks. Both attacks show that the Staircase noise addition under \sys consistently yields the lowest False Positive Rate (FPR), indicating superior privacy preservation against MIAs. The Laplace baseline also effectively reduces FPR, although not as consistently low as the Staircase, suggesting good but variable privacy protection. In terms of Accuracy and AUC, the Staircase and Laplace mechanisms demonstrate moderate success in maintaining model utility while ensuring privacy.

\vspace{0.05in}

\noindent\textbf{Data Reconstruction Attacks.} We evaluate the data reconstruction attacks on \sys on CIFAR-10. These attacks aim to reconstruct training data points from a target model. Instead of training a separate reconstruction model, we directly optimize synthetic inputs to match the gradient information obtained from the target model, following \cite{geiping2020inverting}. To evaluate multi-image attacks, we average the gradients from batches of up to $100$ images before running the reconstruction. We assess the attack's efficacy by measuring the mean squared error (MSE) and Structural Similarity (SSIM) between the reconstructed and original images.

The evaluation of data reconstruction attacks on CIFAR-10 demonstrates the efficacy of \sys against DRA threats. Notably, across varying privacy budgets ($\epsilon$ values), Staircase noise consistently results in higher MSE and lower SSIM, substantially reducing the attackers' ability to reconstruct original images accurately. While both DP-SGD and \sys with Laplace baseline offer comparable levels of protection, evidenced by similar MSE and SSIM values. More evaluations on DRA are presented in the Appendix \ref{sec:DRA}.

\vspace{0.05in}

\begin{table}[!h]
\centering
\footnotesize
\addtolength{\tabcolsep}{-2pt}
\caption{Evaluation on the SOTA InvGrad DRA~\cite{geiping2020inverting} on CIFAR-10. The MSE, PSNR and SSIM for the Non-private method are 1.7104, 9.79, and 0.0751, respectively.}
\label{tab:DR_fl}
\small
\resizebox{\columnwidth}{!}{
\begin{tabular}{|c|ccc|ccc|ccc|}
\hline
\multirow{2}{*}{$\epsilon$} &
  \multicolumn{3}{c|}{DP-SGD} &
  \multicolumn{3}{c|}{\begin{tabular}[c]{@{}c@{}}\sys \\ (Laplace baseline)\end{tabular}} &
  \multicolumn{3}{c|}{\begin{tabular}[c]{@{}c@{}}\sys \\ (Staircase)\end{tabular}}  \\ \cline{2-10} 
  & MSE & PSNR & SSIM & MSE & F1 & SSIM & MSE & PSNR & SSIM  \\ \hline
2 & 2.2646 & 8.51 & 0.0195 & 2.2686 & 8.63 & 0.0573 & \cellcolor{green!20}\textbf{2.3399} & \cellcolor{green!20}\textbf{8.37} & \cellcolor{green!20}\textbf{0.0096} \\
4 & 2.2058 & 8.69 & 0.0414 & 2.1840 & 8.68 & 0.0629 & \cellcolor{green!20}\textbf{2.2405} & \cellcolor{green!20}\textbf{8.39} & \cellcolor{green!20}\textbf{0.0204} \\
6 & 2.1532 & 8.76 & 0.0417 & 2.1532 & 8.83 & 0.0692 & \cellcolor{green!20}\textbf{2.1910} & \cellcolor{green!20}\textbf{8.54} & \cellcolor{green!20}\textbf{0.0207} \\
8 & 2.1463 & 8.78 & 0.0519 & 2.1290 & 8.95 & 0.0746 & \cellcolor{green!20}\textbf{2.1832} & \cellcolor{green!20}\textbf{8.73} & \cellcolor{green!20}\textbf{0.0225} \\ \hline
\end{tabular}}
\vspace{-0.05in}
\end{table}

\begin{table}[!ht]
\centering
\footnotesize
\caption{Evaluation on the SOTA AIA~\cite{attrinf} on UTKFace. The Accuracy and F1-score for the Non-private method are 0.86 and 0.85, respectively.}
\label{tab:AIA}
\setlength{\tabcolsep}{3pt}
\resizebox{0.9\columnwidth}{!}{
\begin{tabular}{|c|cc|cc|cc|}
\hline
\multirow{2}{*}{$\epsilon$} &
  \multicolumn{2}{c|}{DP-SGD} &
  
  \multicolumn{2}{c|}{\begin{tabular}[c]{@{}c@{}}\sys \\ (Laplace baseline)\end{tabular}} &
  \multicolumn{2}{c|}{\begin{tabular}[c]{@{}c@{}}\sys \\ (Staircase)\end{tabular}} \\ \cline{2-7} 
  & Acc & F1 & Acc & F1 & Acc & F1  \\ \hline
2 & 0.8392  & 0.8239 & 0.8297  & 0.7987 & \cellcolor{green!20}\textbf{0.8267}  &\cellcolor{green!20}\textbf{0.8124}    \\
4 & 0.8497  & 0.8408 & \cellcolor{green!20}\textbf{0.8285}  & \cellcolor{green!20}\textbf{0.8248} & 0.8383  & 0.8260  \\
6 & 0.8519  & 0.8439 & 0.8390  & \cellcolor{green!20}\textbf{0.8096} & \cellcolor{green!20}\textbf{0.8322}  & 0.8131  \\
8 & 0.8615  & 0.8544 & 0.8320  & \cellcolor{green!20}\textbf{0.8169} & \cellcolor{green!20}\textbf{0.8317}  & 0.8174  \\ \hline
\end{tabular}}
\label{tab:attributeinference}
\end{table}

\noindent\textbf{Attribute Inference Attacks.} While DP is primarily designed to protect against privacy leaks like MIAs, its efficacy against AIAs is not empirically explored. We adopted the methodology outlined in \cite{attrinf} on UTKFace dataset which collects over 20,000 facial images with annotations for age, gender, and ethnicity \cite{zhifei2017cvpr}. In AIAs, the adversary possesses complete knowledge about the target model, including its internal architecture and parameters. This allows for a more in-depth exploration of the model's vulnerabilities, particularly in inferring sensitive attributes from the model's outputs in a federated learning context. We extracted embeddings from the penultimate layer of our target model, utilizing them as inputs for a $2$-layer MLP attack classifier. This classifier was trained with embeddings and known gender attributes from an auxiliary dataset. Based on the standard F1-score and accuracy, the evaluation focused on the attack model's ability to predict gender accurately. From the results in Table \ref{tab:AIA}, we can tell that \sys with Staircase slightly reduces the attack inference accuracy compared to the other two baseline methods and non-private methods. Our findings validate the interplay between MIAs and AIAs in \cite{salem2023sok} that even if a system is resistant to direct MIA, it may still be vulnerable to AI attacks if the adversary has additional knowledge or capabilities, such as adversary's ability to access the entire model or the ability to reconstruct data. However, such comprehensive adversary knowledge is typically unrealistic, as it assumes access levels that are difficult to achieve. As shown in Table \ref{tab:DR_fl}, successful data reconstruction by adversaries is hard, reinforcing the strength of our approach against common attack vectors.

\section{Discussion}
\label{sec:discussion}

\noindent\textbf{Diverse DP mechanisms}. Our experiments mainly evaluate the Staircase, Laplace and Gaussian mechanisms with \sys. As a universal DP-FL work, \sys is flexible and can be extended to incorporate more advanced DP mechanisms, such as the Matrix Variate Gaussian (MVG) mechanism \cite{chanyaswad2018mvg}, R$^2$DP mechanism \cite{mohammady2020r2dp}, DP Boosting \cite{dwork2010boosting,dpi}, and more.
Moreover, \sys is designed to be flexible and adaptable to various FL settings. One of the key aspects of this flexibility is the compatibility of our framework with other aggregation functions commonly used in FL, beyond the widely-used FedAvg and FedSGD algorithm \cite{mcmahan2017communication, mcmahan_2018a}.

\vspace{0.05in}

\noindent\textbf{Support Diverse Aggregation Functions}. Several aggregation functions have been proposed to address the limitations of FedAvg, such as dealing with non-IID data, mitigating the effects of stragglers, or improving convergence rates. Some of these alternative aggregation functions include Scaffold \cite{karimireddy2020scaffold}, FedMed \cite{wu2020fedmed}, FedProx \cite{yuan2022convergence}.
To show the possibility of integrating alternative aggregation functions into \sys, we have conducted another experiments to evaluate it. We use a wind forecasting dataset \cite{article}, and train a simple CNN to predict hourly power generation up to 48 hours ahead at 7 wind farms. The baselines use NON-DP Scaffold aggregation functions in the FL frameworks. The sampling rate is $0.05$, and the client number is $10$, and clipped gradient value is $10$.

\vspace{-0.2in}

\begin{figure}[!ht]
	\centering
	\subfigure[MAE vs $\epsilon$]{
    \includegraphics[angle=0, width=0.5\linewidth]{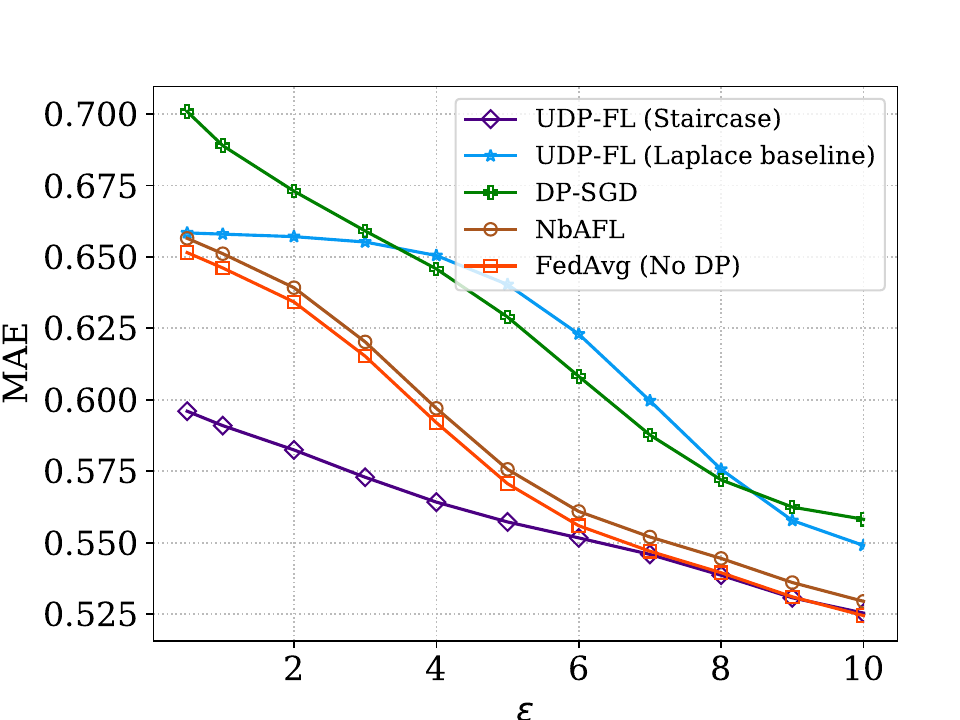}
    \label{fig:power_mae}}\vspace{-0.00in}
    \hspace{-0.21in}
    \subfigure[MSE vs $\epsilon$]{
    \includegraphics[angle=0, width=0.5\linewidth]{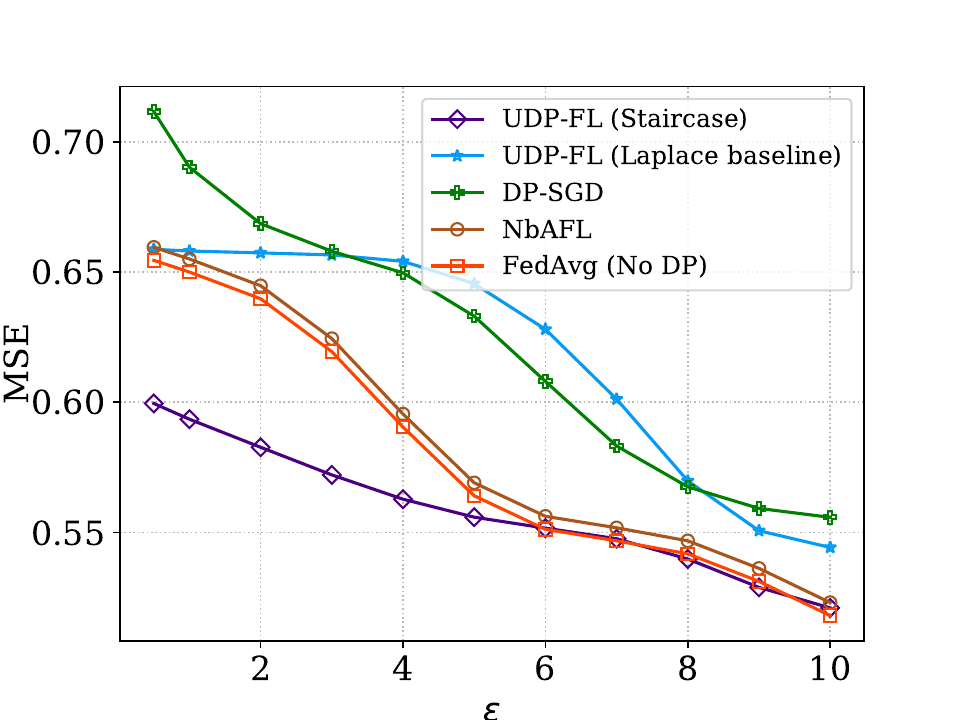}
    \label{fig:power_mse}}\vspace{-0.00in}
    \vspace{-0.05in}
\caption[Optional caption for list of figures]
{Accuracy results on FL with the Scaffold aggregation \cite{karimireddy2020scaffold}. On small training epochs, \sys with the Staircase mechanism can achieve better accuracy more quickly. 
}
\label{fig:discussion}
\end{figure}

From Figure \ref{fig:discussion}, we can see \sys with the Staircase mechanism still outperforms the STOA DP-FL-based NbAFL~\cite{9069945}. Moreover, when $\epsilon$ is smaller than $5$, the performance of \sys is better than FedAvg without DP. One possible reason is that each class of noise is optimal for one specific metric and Staircase is designed for $\ell_1$ and $\ell_2$ metrics, and the noise may serve as a regularization to mitigate the unbiased distribution of clients' data.  

\vspace{0.05in}

\noindent\textbf{DP Accounting Extensions}. \sys can also be extended for other accounting of differential privacy. Recent work has proposed using characteristic functions of the privacy loss random variable as an alternative approach for optimal privacy accounting~\cite{pmlr-v151-zhu22c, wang2022analytical, wang2019subsampled}. This technique provides a natural composition similar to RDP while avoiding RDP's limitations. \sys's architecture is flexible enough that it could potentially be extended to incorporate characteristic functions instead of just R\'enyi divergence. Specifically, the Harmonizer could be adapted to compute and track characteristic functions for each mechanism. The analytic Fourier accountant method could also replace the MA for conversion to ($\epsilon, \delta$)-DP. With these modifications, \sys could achieve tighter accounting and flexibility by leveraging characteristic functions. The modularity of \sys makes these extensions possible without changing the overall framework.

\vspace{0.05in}

\noindent\textbf{Future Works.} The development of UDP-FL opens up several promising avenues for future research in DP-FL. For instance, adaptive privacy budget allocation strategies represent a significant area for improvement. Such approaches could dynamically adjust to changing data distributions and client behaviors during the FL process. Another crucial direction is the integration of \sys with other privacy-preserving techniques. Combining DP with secure multi-party computation or homomorphic encryption could provide multi-layered privacy protection \cite{HongVLKG15}. This integration could address a broader range of privacy concerns and potentially mitigate some of the accuracy loss associated with DP alone.

\section{Related Works}
\label{sec:related}
\noindent \textbf{Differential Privacy and R\'enyi Differential Privacy.} Differential privacy \cite{dwork2006differential,dwork2006calibrating} has been extensively studied and applied to various data analysis and machine learning tasks to ensure privacy protection. With the proposal of DP-SGD \cite{abadi2016deep}, more works focus on tightly tracking the privacy loss during a large number of training. With the formal definition of R\'enyi DP \cite{mironov2017renyi}, tightly quantifying the privacy loss becomes more accessible. Works \cite{wang2019subsampled, mironov2019r, geumlek2017renyi, mohammady2020r2dp, wang2022model} discussing applying different mechanisms to preserve RDP. 

In the context of FL, RDP has been employed to achieve more robust privacy protection. More works \cite{triastcyn2019federated, agarwal2021skellam} explored the use of RDP and shuffler \cite{girgis2021shuffled, girgis2021renyi, feldman2023stronger} in FL systems to provide stronger privacy guarantees. This approach demonstrated improved privacy protection and utility tradeoffs compared to traditional DP-based methods. Geyer et. al. \cite{geyer2017differentially} propose an algorithm for client-sided differential privacy preserving federated optimization. Bhowmick et. al.\cite{bhowmick2018protection} present practicable approaches to large-scale locally private model training that were previously impossible, showing theoretically and empirically that Bhowmick et. al.\cite{bhowmick2018protection} can fit large-scale image classification and language models with little degradation in utility. Existing FL approaches either use secure multiparty computation (MPC) techniques, which are vulnerable to inference, or differential privacy, which can lead to low accuracy given a large number of parties with relatively small amounts of data each. Truex et al. \cite{truex2019hybrid} present an alternative approach that utilizes both differential privacy and MPC to balance these tradeoffs. Li et al. \cite{li2021survey} investigate the feasibility of applying differential privacy techniques to protect patient data in an FL setup. To avoid privacy leakage, Wu et al. \cite{wu2020fedmed} propose to add R\'enyi differential privacy (RDP) into FL-MAC. Girgis et al. \cite{girgis2021renyi,girgis2021renyishuffle,girgis2021shuffled} study privacy in a distributed learning framework, where clients collaboratively build a learning model through interactions with a server from whom needs privacy. 

 \vspace{0.05in}

\noindent \textbf{Differentially Private Federated Learning.}
DP-FL continues to evolve rapidly, with recent works addressing various challenges and proposing innovative solutions. Recent advancements in DP-FL have explored novel privacy mechanisms and frameworks.  Building on this, Li et al. \cite{li2023federated} developed FedMask, which preserves both data and model privacy through gradient masking and perturbation. The challenge of heterogeneity in FL has been addressed in recent DP-FL works. Wei et al. \cite{10177379} proposed a personalized DP-FL framework that adapts to individual client characteristics while maintaining strong privacy guarantees. Similarly, Xu et al. \cite{xu2022federated} introduced FedCORP, a communication-efficient one-shot robust personalized FL framework that incorporates differential privacy. Zhu et al. \cite{zhu2023efficient} proposed an efficient DP-FL algorithm achieving optimal sample complexity with theoretical guarantees. The intersection of DP-FL with other privacy-preserving technologies has also been explored. Chen et al. \cite{chen2023federated} combined differential privacy with secure multi-party computation for enhanced privacy in collaborative learning. Fort et al. \cite{fort2023privacy} investigated privacy amplification by iteration in FL, providing new insights into how iterative training processes can enhance privacy guarantees. Theoretical advancements in DP-FL have also been made. Liu et al. \cite{liu2023tighter} derived tighter privacy bounds for subsampled mechanisms in FL, improving upon existing privacy accounting methods and enabling more efficient use of privacy budgets. 
Ding et al.\cite{ding2023differentially} proposed a novel approach that leverages LDP to achieve both communication efficiency and enhanced privacy in FL. Their method demonstrates improved model accuracy while maintaining strong privacy guarantees. Recent DP-FL research has focused on enhancing utility, efficiency, and privacy guarantees. Ding et al. \cite{ding2023differentially} and Varun et al. \cite{VarunFWSH24} proposed local differential privacy approaches that improve communication effiicency, model accuracy, robustness against attacks, respectively. Addressing data heterogeneity, Luo et al. \cite{luo2023federated} combined differential privacy with multi-task learning for personalized models across diverse client data. Zhou et al. \cite{zhou2023tight} developed tighter privacy composition bounds for FL, enabling more accurate privacy loss tracking over multiple training rounds. In the domain of large language models, Dagan et al. \cite{dagan2023federated} proposed a DP-FL framework addressing the unique challenges of high-dimensional models. Sun et al. \cite{sun2023federated} introduced an adaptive differential privacy mechanism that dynamically adjusts privacy levels to optimize the privacy-utility trade-off during training. Zheng et al. \cite{zheng2021federated} introduce federated $f$-DP, a new notion tailored explicitly to the federated setting, based on the framework of Gaussian differential privacy. Khanna et al. \cite{khanna2022privacy} outline a simple FL algorithm implementing DP to ensure privacy when training a machine learning model on data spread across different institutions.

\vspace{0.05in}

\noindent \textbf{Adversarial Attacks in Federated Learning.} The evolution of DP-FL has been influenced by several key studies. Chen et al. \cite{chen2018differentially} proposed generative models for private data generation, focusing on the effectiveness of DP in defending against model inversion and GAN-based attacks. In the context of 5G networks, Liu et al. \cite{liu2020secure} developed a blockchain-based secure FL framework, enhancing privacy preservation for participants. Naseri et al. \cite{naseri2020local} conducted a comprehensive evaluation of Local and Central DP in FL, assessing their impact on privacy and robustness. Sun and Lyu \cite{sun2020federated} proposed FEDMD-NFDP, a federated model distillation framework incorporating a Noise-Free DP mechanism, effectively eliminating the risk of white-box inference attacks. Kerkouche et al. \cite{kerkouche2020federated} introduced a new FL scheme, offering a balance between robustness, privacy, bandwidth efficiency, and model accuracy. Chen et al. \cite{chen2021ppt} developed a decentralized, privacy-preserving global model training protocol for FL in P2P networks. Hossain et al. \cite{hossain2021desmp} demonstrated how DP could be exploited for stealthy and persistent model poisoning attacks in FL. Feng et al. \cite{feng2022user} evaluated user-level DP in FL, specifically in the context of speech-emotion recognition systems. Lastly, Wang et al. \cite{wang2022platform} proposed a platform-free proof of FL consensus mechanism, focusing on sustainable blockchains and privacy protection in FL models. Salem et al. \cite{salem2023sok} proposed a game-based framework to unify definitions and analysis of privacy inference risks. They use reductions between games to relate notions like membership inference and attribute inference. 

\vspace{-0.05in}

\section{Conclusion}
\label{sec:conclusion}
In this paper, we introduced \sys, a novel framework for DP-FL that addresses the critical challenge of optimizing the tradeoff between privacy and accuracy. By harmonizing various DP mechanisms with the R\'enyi Accountant, \sys achieves tighter privacy bounds and faster convergence compared to SOTA methods. Our experimental results demonstrate the superior performance of \sys in terms of both privacy guarantees and model accuracy. Furthermore, we proposed a mode connectivity-based method for analyzing the convergence of DP-FL models, providing valuable insights into the faster convergence. Through extensive evaluations, we also showed that \sys exhibits substantial resilience against advanced privacy attacks, further validating the significant advancement in data protection in FL environments.

\section*{Acknowledgments}
This work is supported in part by the National Science Foundation (NSF) under Grants No. CNS-2308730, CNS-2302689, CNS-2319277, CMMI-2326341, ECCS-2216926, CNS-2241713, CNS-2331302 and CNS-2339686. It is also partially supported by the Cisco Research Award, the Synchrony Fellowship. 

\clearpage
{\footnotesize \bibliographystyle{acm}
\bibliography{bib}}

\appendix
\section*{Appendix}

\section{Proofs}

\subsection{\textbf{Proof of Theorem}~\ref{theorem:stairrivergence}}
\label{prv:stairrivergence}
\begin{proof}
We first consider the define the distribution of the Staircase Mechanism.
\begin{align*}
    f=
\begin{cases}
 e^{-\rho\lambda}y & ||x||_1 \in [\rho\Delta,(\rho+\nu)\Delta] \\ e^{-(\rho+1)\lambda}y & ||x||_1 \in [(\rho+\nu)\Delta,(\rho+1)\Delta]
\end{cases} 
\end{align*}
\normalsize
From the Staircase definition, when $x < 0$, we will have $f(x - \Delta) = e^{-\lambda} f(x)$. By direct computation, we will get the following results.
\small
\begin{align*}
\int_{-\infty}^{0} &g(x)dx=\int_{-\infty}^{0}f_\nu^\alpha(x)f_\nu^{(1-\alpha)}(x - \Delta)dx \\
&= \int_{-\infty}^{0} f_\nu^\alpha(x) f_\nu^{(1-\alpha)}(x) e^{-(1 - \alpha) \lambda} dx = e^{-(1 - \alpha) \lambda} \int_{-\infty}^{0}f(x)dx \\ 
&=\frac{1}{2}e^{-(1 - \alpha) \lambda}
\end{align*}
\normalsize
when $x > \Delta$, we will have $f(x - \Delta) = e^{\lambda} f(x)$. Thus, $g(x) = f_\nu^\alpha(x)f_\nu^{(1-\alpha)}(x - \Delta)$.
\small
\begin{align*}
\\
&\int_{\Delta}^{+\infty} g(x)dx =\int_{\Delta}^{+\infty}f_\nu^\alpha(x)f_\nu^{(1-\alpha)}(x - \Delta)dx \\ 
&=\int_{\Delta}^{+\infty} f_\nu^\alpha(x) f^{(1-\alpha)}(x) e^{(1 - \alpha) \lambda} dx =e^{(1 - \alpha) \lambda} \int_{\Delta}^{+\infty}f_\nu(x) dx\\ 
&= e^{(1 - \alpha) \lambda}\int_{0}^{+\infty}f_\nu(x) dx- e^{(1 - \alpha) \lambda}\int_{0}^{\Delta}f_\nu(x) dx\\ 
&= \frac{1}{2}e^{-\lambda}e^{(1 - \alpha) \lambda}
\end{align*}
\normalsize
The last case will be $0 < x < \Delta$. We will define the cases below. 
\begin{equation}
    sgn(\frac{1}{2}-\nu)=
    \begin{cases}
     0 & \nu<\frac{1}{2}\\
     1 & \nu>=\frac{1}{2}
    \end{cases}
\end{equation}
\small
\begin{align*}
&f_\nu(x - \Delta) = f_\nu(\Delta - x)\int_{0}^{\Delta} g(x)dx \\
&
= \int_{0}^{\Delta}f_\nu^\alpha(x)f_\nu^{(1-\alpha)}(\Delta - x) dx\\
&= \{ (e^{(\alpha - 1)\lambda} + e^{- \alpha\lambda}) (1 - \nu) + |2\nu - 1| e^{- sgn(\frac{1}{2}-\nu)\lambda}\} \Delta \beta(\nu)\\
&=\frac{1}{2}e^{(\alpha - 1) \lambda} + \frac{1}{2}e^{-\alpha \lambda} + \{ (e^{(\alpha - 1)\lambda} + e^{- \alpha\lambda}) (1 - \nu) \\
&+ |2\nu - 1|e^{- sgn(\frac{1}{2}-\nu)\lambda} \} \frac{1-e^{-1}}{2(\nu+e^{-\lambda}(1-\nu))}
\end{align*}
Thus, this completes the proof.
\end{proof}

\subsection{Proof of Theorem~\ref{theo:gaussianutilitydist}}
\label{proof:gaussianutilitydist}
\begin{proof}
Given loss function whose length is $m$ after T training iteration, the true count and noisy count of parameter as $w_i$ and $w_i^*$, respectively, we have:
\begin{align*}
&T\mathbb{E}(\sum_{i=1}^m |w^*_{i}-w_{i}|)
= Tm \mathbb{E}|w_{i}+\mathcal{N}(0,\sigma^2)-w_{i}|\\ 
=&Tm \int_{-\infty}^{\infty}|x|f(x)dx
=Tm\int_{0}^{\infty}2x\frac{1}{\sigma\sqrt{2\pi}}e^{-\frac{x^2}{2\sigma^2}}dx\\
=& {\sqrt{\frac{2 m^2 \sigma^2 T^2}{\pi}}}
\end{align*}
where $f(x)$ is the PDF of the Gaussian distribution, and $sigma$ is the noise multiplier for Gaussian mechanism computed by \sys.

Thus, this completes the proof.
\end{proof}

\subsection{Proof of Theorem~\ref{theorem:laplaceu}}
\label{proof:laplaceu}
\begin{proof}
Similarly to Proof in \ref{proof:gaussianutilitydist}, we will calculate 
\begin{align*}
&T\mathbb{E}(\sum_{i=1}^m |w^*_{i}-w_{i}|)=Tm\int_{-\infty}^{\infty}|x|f(x)dx\\
&=Tm\int_{-\infty}^{\infty}|x|-2b e^{\frac{-|x|}{b}}dx=Tmb
\end{align*}
where $b$ is the noise multiplier for Laplace mechanism computed by \sys. Thus, this completes the proof.
\end{proof}

\subsection{\textbf{Proof of Theorem}~\ref{theorem:stairrdp}}
\label{proof:stairrdprivacy}

\begin{proof}
Per Theorem \ref{theorem:stairrivergence}, the upper bound of the RDP is $\frac{1}{2}e^{(\alpha - 1) \lambda} + \frac{1}{2}e^{-\alpha \lambda} + \{ (e^{(\alpha - 1)\lambda} + e^{- \alpha\lambda}) (1 - \nu)+ |2\nu - 1|e^{- sgn(\frac{1}{2}-\nu)\lambda} \} \frac{1-e^{-1}}{2(\nu+e^{-\lambda}(1-\nu))})$. Also, we know that the optimal value of $\nu$ is $\frac{1}{1+e^{\lambda/2}}$. Thus, we will set the upper bound to be less than the privacy budget $\gamma$.
\begin{align*}
    L(x)&=\frac{1}{2}e^{(\alpha - 1) \lambda} + \frac{1}{2}e^{-\alpha \lambda} + \{ (e^{(\alpha - 1)\lambda} + e^{- \alpha\lambda}) (1 - \nu)\\ 
    &+|2\nu - 1|e^{- sgn(\frac{1}{2}-\nu)\lambda} \} \frac{1-e^{-1}}{2(\nu+e^{-\lambda}(1-\nu))})
\end{align*}
\normalsize
To tightly preserve the privacy, we generate the variable from $f(\lambda,\Delta,1/2)$, we have 
\begin{align*}
    L(x)&=\frac{1}{2}e^{(\alpha - 1) \lambda} + \frac{1}{2}e^{-\alpha \lambda} + \{ (e^{(\alpha - 1)\lambda} + e^{- \alpha\lambda}) (1 - \frac{1}{2})\\ 
    &+|2\frac{1}{2} - 1|e^{- sgn(\frac{1}{2}-\frac{1}{2})\lambda} \} \frac{1-e^{-1}}{2(\frac{1}{2}+e^{-\lambda}(1-\frac{1}{2}))}) \\
    &=\frac{1}{2}e^{(\alpha - 1) \lambda} + \frac{1}{2}e^{-\alpha \lambda}+(\frac{1}{2}e^{(\alpha - 1) \lambda} + \frac{1}{2}e^{-\alpha \lambda})\frac{1-\frac{1}{e}}{1+\frac{1}{e^\lambda}}\\
    &<(\frac{1}{2}e^{(\alpha - 1) \lambda} + \frac{1}{2}e^{-\alpha \lambda})(\frac{2-1/e+e^{-\lambda}}{1+e^{-\lambda}}) \\
    &<e^{(\alpha - 1) \lambda}+ e^{-\alpha \lambda} \\
    &<e^{(\alpha - 1) \lambda}+1
\end{align*}
Let $L(x) = \gamma$, we have $\lambda = \frac{\log(\gamma-1)}{\alpha-1}$.

Thus, this completes the proof.
\end{proof}

\subsection{Proof of Theorem~\ref{theorem:staircaseu}}
\label{proof:staircaseu}

\begin{proof}
Similar to the proof in Appendix \ref{proof:gaussianutilitydist}, we will calculate 
\small
\begin{align*}
&\int_{0}^{+\infty}xf(x)dx = \int_{0}^{\Delta}xf(x)dx + \int_{\Delta}^{+\infty}xf(x)dx \\
&= \int_{0}^{\Delta}xf(x)dx + \int_{0}^{+\infty}(x + \Delta)f(x + \Delta)dx \\ 
&= \int_{0}^{\Delta}xf(x)dx + e^{-\lambda }\int_{0}^{+\infty}xf(x)dx + \Delta e^{-\lambda }\int_{0}^{+\infty}f(x)dx
\end{align*}

\begin{align*}
&mT\mathbb{E}[|x|] = 2\int_{0}^{+\infty}xf(x)dx \\
&= \frac{mT}{1 - e^{-\lambda}}(
\nu^2 \Delta^2 + e^{-\lambda}\Delta^2 - e^{-\lambda }\nu^2 \Delta^2 + \Delta e^{-\lambda})\\
\end{align*}
\normalsize
where $f(x)$ is the PDF of the Staircase mechanism, and $\rho,\nu$ is the noise multiplier for Staircase mechanism computed by \sys.

Thus, this completes the proof.    
\end{proof}

\subsection{Proof of Theorem~\ref{theorem:mode2dp}}
\label{proof:mode2dp}
\begin{proof}
We will prove that if the original system converges in $K$ rounds the expectation of the number of rounds $\bar{K}$ needed for $\tilde{\theta}_k$ to approach 1 ($w_n \rightarrow w$) increases by $O(\sigma^2)$ where for the staircase noise $\sigma=
\Delta \cdot \frac{e^{\epsilon/2}}{e^{\epsilon}-1}$~\cite{staircase}. We'll assume that the non-perturbed version converges in $K$ rounds and find $k$ in terms of $K$. Since \sys on client machines exerts the mode connectivity algorithm when searching for optima, we need to invetigate diffusion mechanism of the perturbed parameter $\tilde{\theta}$ and investigating how it affects the curve function $\phi_{\tilde{\theta}}(p)$. 

Given the perturbed gradient update:
\[
\tilde{\theta}_{t+1} = \tilde{\theta}_t - \eta (\nabla_{\theta} L(\tilde{\theta}_t) + \xi_t)
\]
where $\xi_t$ is the noise added by the staircase mechanism. The curve function with perturbed parameter $\tilde{\theta}$ is:
\[
\phi_{\tilde{\theta}}(p) =
\begin{cases}
2(p\tilde{\theta} + (0.5 - p) w_1), & 0 \le p \le 0.5 \\
2((p - 0.5) w_2 + (1 - p) \tilde{\theta}), & 0.5 \le p \le 1
\end{cases}
\]The loss function evaluated at the curve with perturbed parameter:
\[
L(\tilde{\theta}) = \mathbb{E}_{p \sim U(0, 1)} [\ell(\phi_{\tilde{\theta}}(p))]
\]Let's analyze the expected number of rounds $k$ for the noisy parameter $\tilde{\theta}_k$ to approach the target value (similar to the convergence of $\theta$ in $K$ rounds). Gradient Descent with Noise:
\[
\tilde{\theta}_{t+1} = \tilde{\theta}_t - \eta (\nabla_{\theta} L(\tilde{\theta}_t) + \xi_t)
\]
We know the non-perturbed algorithm converges in $K$ rounds. For the noisy version, the convergence will be slower due to the added noise. The noise impact can be understood by examining how the noise affects the variance of the parameter updates over iterations. Let's denote the variance of the noise $\xi_t$ as $\sigma^2$. The variance of the parameter updates accumulates over iterations:
\[
\text{Var}(\tilde{\theta}_k) = \sum_{t=1}^{k} \eta^2 \sigma^2
\]
The noise introduces a diffusion term that slows down convergence, i.e., the number of iterations $k$ needs to compensate for this diffusion. In the following we will show that extra term is bounded by$ 
\frac{C \sigma^2}{\eta^2}$ 
where $C$ is a constant that depends on the specific noise characteristics and the gradient landscape. The proof requires the following assumptions on the clients' loss functions $\{F_k\}$ (standard assumption \cite{li2019convergence}). 

\begin{assumption} \label{asm:smooth}
	$\{F_k\}'s$ are $L$-smooth:
	$F_k({v})  \leq F_k({w}) + ({v} - {w})^T \nabla F_k({w}) + \frac{L}{2} \| {v} - {w}\|_2^2, \forall  {v}, {w}$.
\end{assumption}

\begin{assumption} \label{asm:strong_cvx}
	$\{F_k\}'s$ are $\mu$-strongly convex:
	 $F_k({v})  \geq F_k({w}) + ({v} - {w})^T \nabla F_k({w}) + \frac{\mu}{2} \| {v} - {w}\|_2^2, \forall  {v}, {w}$.
\end{assumption}

\begin{assumption} \label{asm:sgd_var}
	Let $\xi_t^k$ be sampled from the $k$-th device's data uniformly at random.
	The variance of stochastic gradients in each device is bounded: $E \left\| \nabla F_k({w}_t^k,\xi_t^k) - \nabla F_k({w}_t^k) \right\|^2 \le \sigma_k^2, \, \forall k$.
\end{assumption}

\begin{assumption} \label{asm:sgd_norm}
	The expected squared norm of stochastic gradients is uniformly bounded, i.e., $E \left\| \nabla F_k({w}_t^k,\xi_t^k) \right\|^2  \le G^2, \forall k, t$. 
\end{assumption}

Based on the findings from \cite{li2019convergence}, the FedAvg algorithm's updates with only a subset of devices active can be explained as follows: Let \( w^k_t \) represent the model parameters on the \( k \)-th device at time step \( t \), and let \( \mathcal{I}_E \) denote the set of global synchronization steps, defined as \( \mathcal{I}_E = \{nE \mid n = 1,2,\dots\} \). If \( t + 1 \) is in \( \mathcal{I}_E \), indicating a communication round, FedAvg engages all devices. For partial device participation, the updates proceed as:

\[
v_k = w_k - \eta \nabla F(w_k, \xi_k), \quad \text{if } t + 1 \notin \mathcal{I}_E,
\]

\vspace{-0.15in}

\[
w_{t+1} = \sum_{k=1}^{N} p_k v_k, \quad \text{if } t + 1 \in \mathcal{I}_E.
\]

Here, the variable \( v_k \) represents the immediate result of a single SGD step from \( w_k \). The parameter \( w^k_{t+1} \) is viewed as the result after synchronization steps. In this context, we define two aggregated sequences: \( v_t = \sum_{k=1}^{N} p_k v^k_t \) and \( w_t = \sum_{k=1}^{N} p_k w^k_t \). The sequence \( v_{t+1} \) is derived from performing one SGD step on \( w_t \). When \( t + 1 \notin \mathcal{I}_E \), both sequences are unavailable. If \( t + 1 \in \mathcal{I}_E \), we can access \( w_{t+1} \). For simplicity, we define \( g_t = \sum_{k=1}^{N} p_k \nabla F_k(w^k_t) \) and \( g_t = \sum_{k=1}^{N} p_k \nabla F_k(w^k_t, \xi^k_t) \). Consequently, \( v_{t+1} = w_t - \eta g_t \) and \( \mathbb{E}[g_t] = g_t \). Assuming that Assumptions 1 and 2 hold, and if \( \eta_t \leq 1 \), the following inequality is satisfied:

\[
\mathbb{E}\left[\|v_{t+1} - w^*\|^2\right]\]

\vspace{-0.15in}

\[\leq (1 - \eta_t \mu) \mathbb{E}\left[\|w_t - w^*\|^2\right] + \eta_t^2 \mathbb{E}\left[\|\bar{g}_t - g_t\|^2\right] + 6L \eta_t^2 \Gamma + 2 \mathbb{E}\left[w_t-w^k_{t}\right],
\]

where \( \Gamma = F^* - \sum_{k=1}^{N} p_k F_k^* \geq 0 \).

Assuming convergence of non-perturbed model when $v_{t+1}$ follows the mode connectivity path (along $\theta_t$), the path of \sys will follow 

\vspace{-0.15in}

\[
\mathbb{E}\left[\|\tilde{\theta}_{t+1} - w^*\|^2\right]\]

\vspace{-0.15in}

\[\leq (1 - \eta_t \mu) \mathbb{E}\left[\|\tilde{\theta}_t - w^*\|^2\right] + \eta_t^2 \mathbb{E}\left[\|g_t - \bar{g}_t\|^2\right] + 6L \eta_t^2 \Gamma + 2 \mathbb{E}\left[\tilde{\theta}_t-\tilde{\theta}^k_t\right],
\]

where \( \Gamma = F^* - \sum_{k=1}^{N} p_k \mathbb{E}_{p \sim U(0, 1)} [\ell(\phi_{\tilde{\theta}^k_t}(p))] \geq 0 \). Only first term and last term will generate additional term when movig from non-perturbed to the perturbed path. Additional terms in both terms are variance of the noisy residue $\zeta_t$ which is    $\Delta^2 \cdot \frac{e^{\epsilon}}{(e^{\epsilon}-1)^2}$.
\end{proof}

\begin{algorithm}[!ht]
\caption{Mode Connectivity Curve Finding Procedure}
\label{algm:find_multiplier}
\SetAlgoLined
\DontPrintSemicolon
\LinesNumbered
\SetKwInput{KwInput}{Input}
\SetKwInput{KwOutput}{Output}
\SetKwRepeat{Do}{do}{while}

\KwInput{$w_{1}$, $w_{2}$: Model parameters for two local minima}
\KwOutput{Optimal curve parameters $\theta$}

Initialize $\theta$ as the midpoint: $\theta = 0.5(w_{1} + w_{2})$ 

Define loss function $\ell(w)$ (e.g., cross-entropy loss) 

Define curve function $\phi_{\theta}(p)$:
\[ \phi_{\theta}(p) = \begin{cases} 
2(p\theta + (0.5 - p)w_{1}) & \text{for } 0 \leq p \leq 0.5, \\
2((p - 0.5)w_{2} + (1 - p)\theta) & \text{for } 0.5 < p \leq 1.
\end{cases} \]

\While{$\theta$ not converges}{
    Sample $\hat{p}$ uniformly from $[0, 1]$ 
    
    Compute loss: $L(\theta) = \ell(\phi_{\theta}(\hat{p}))$ 
    
    Compute gradient: $\nabla_{\theta}L(\theta)$ 
    
    Update $\theta$ using a gradient step: $\theta \leftarrow \theta - \eta \nabla_{\theta}L(\theta)$ 
}

\Return{Optimal parameters $\theta$} 
\end{algorithm}

\section{R\'enyi Privacy Accounting}
\label{App:Def}
The moment accountant technique, proposed by Abadi et al. \cite{abadi2016deep}, keeps track of the evaluations of the CGF at fixed locations. Leveraging Lemmas 5 and 6, it allows one to find the smallest $\epsilon$ given a desired $\delta$ or vice versa (Algorithm~\ref{algm:Harmonizer}). 

\begin{algorithm}
\caption{R\'enyi Accountant \cite{wang2019subsampled}}
\label{algm:Harmonizer}
\DontPrintSemicolon  
\SetAlgoLined  
\LinesNumbered  
\SetKwInput{KwInput}{Input}  
\SetKwInput{KwOutput}{Output}  
\KwInput{Training iteration $T$, privacy budget $\epsilon$, $\alpha \in [2,\infty)$}
\KwOutput{Noise multiplier}

Initialize noise multiplier\;
\While{noise multiplier is not optimal}{
    \ForAll{$\alpha$}{
        \ForAll{$t \in T$}{
            Compute R\'enyi divergence for current noise multiplier\;
            Sum of all $T$ rounds' R\'enyi divergence for current $\alpha$\;
            \uIf{the sum exceeds the privacy budget $\epsilon$}{
                Change a smaller noise multiplier\;
            }
            \uElseIf{the sum is within the privacy budget}{
                Continue\;
            }
            Save the current noise multiplier and try a smaller noise multiplier until it is optimal\;
        }
    }
}
\Return{the optimal noise multiplier}
\end{algorithm}

For instance, consider a mechanism $\mathcal{A}$ applying Gaussian noise with standard deviation $\sigma$ to the output of each operation in DPSGD. The privacy of $M$ is characterized by the parameter $\epsilon(\alpha, T)$, where $\alpha$ is the R\'enyi divergence order, and $T$ is the number of operations applied to the dataset.

For Gaussian DPSGD, the privacy parameter $\epsilon(\alpha, T)$ is given by:

\[
\epsilon(\alpha, T) = \frac{\alpha T}{2 \sigma^2}
\]

This equation shows how the privacy parameter $\epsilon$ is related to the R\'enyi divergence order $\alpha$, the number of operations $T$, and the standard deviation of the Gaussian noise $\sigma$. By analyzing the privacy parameter $\epsilon(\alpha, T)$, one can determine the level of privacy provided by the mechanism $M$ in Gaussian DPSGD. This analysis allows for the adjustment of parameters such as $\alpha$, $T$, and $\sigma$ to achieve the desired level of privacy, balancing privacy and utility in the learning process.

\section{On Convergence Assumptions}
A summary of notations is listed in Table~\ref{table:definitions}, which comprises constants typically employed in convergence analyses of gradient-based methods. These constants are considered as the key characteristics of the loss function $F$ and are commonly assumed in various widely used cost functions~\cite{bottou2018optimization}, including the cross-entropy. 

To establish the convergence of our non-Gaussian DP Federated Learning system (\sys), we leverage a well-established result from the machine learning convergence literature. The provided Lemma \cite{recht_2011a} ensures that, under certain conditions for the neural network, the model converges to $f_{*}$. These conditions, namely \emph{convexity} and \emph{smoothness}, are satisfied for the commonly used cross-entropy loss function in backpropagation neural networks.

The convexity of cross-entropy is elucidated by examining its formulation in terms of Kullback-Leibler (KL) divergence: $H(p,q) = H(p) + D_{KL}(p||q)$. As $H(p)$ remains constant, attention is directed to the convex nature of the KL divergence component. Notably, KL divergence exhibits convexity for discrete pairs $(p,q)$, as demonstrated by $(p_1,q_1)$ and $(p_2,q_2)$ \cite{joram_soch_2024_10495684}. The proof of this convexity, utilizing the log-sum inequality\cite{yang2008elements}, is provided here~\cite{examplekey}, offering a comprehensive understanding of KL divergence's convexity within the context of cross-entropy. Additionally, Berrada et al.~\cite{Berrada2018SmoothLF} established the smoothness of cross entropy.

\begin{figure*}[ht]
	\centering
		\subfigure[MNIST: Acc. vs $\epsilon$ w/o shuffle]{
		\includegraphics[angle=0, width=0.25\linewidth]{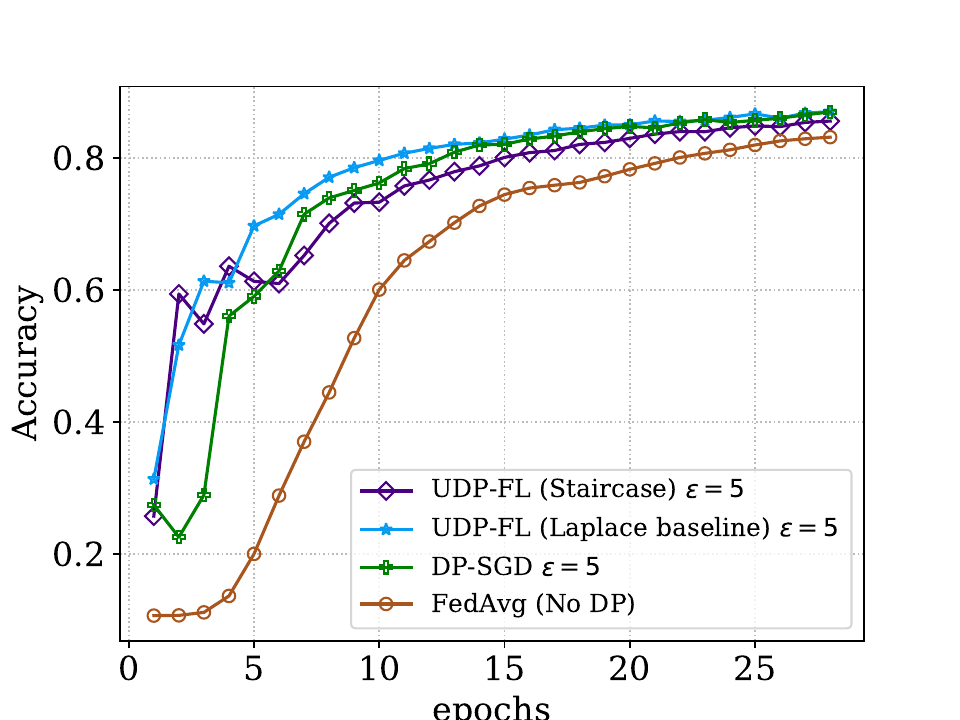}
		\label{fig:mnist_ablation_epochs_epsilon_5} }
		\hspace{-0.25in}
	\subfigure[MNIST: Accuracy vs $\epsilon$]{
		\includegraphics[angle=0, width=0.25\linewidth]{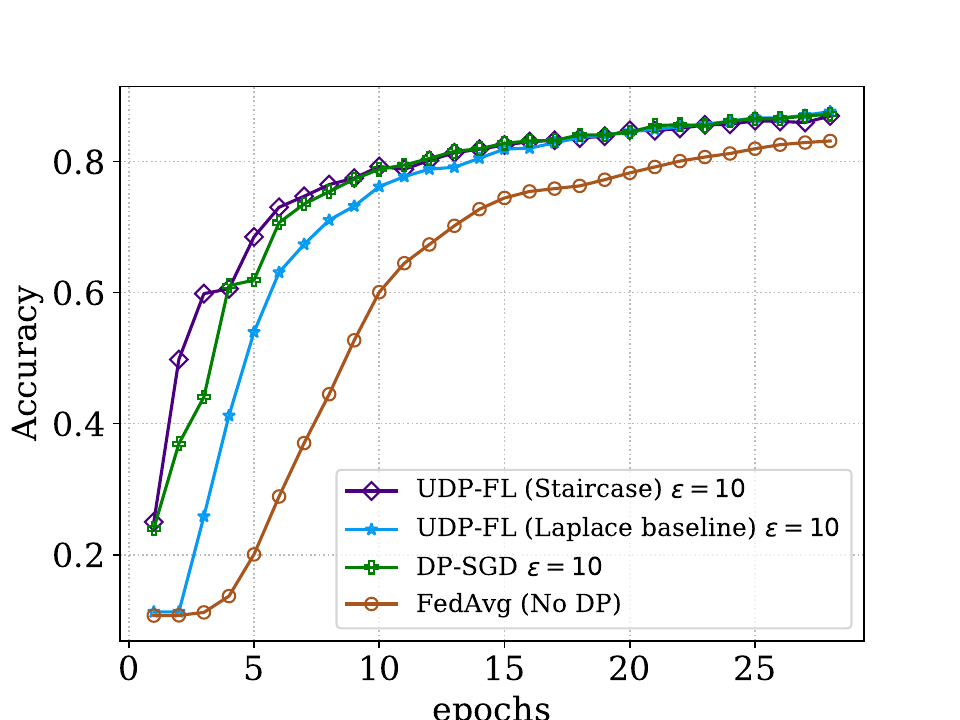}		\label{fig:mnist_ablation_epochs_epsilon_10} }
		\hspace{-0.25in}
	\subfigure[MNIST: Acc. vs epochs w/o shuffler]{
		\includegraphics[angle=0, width=0.25\linewidth]{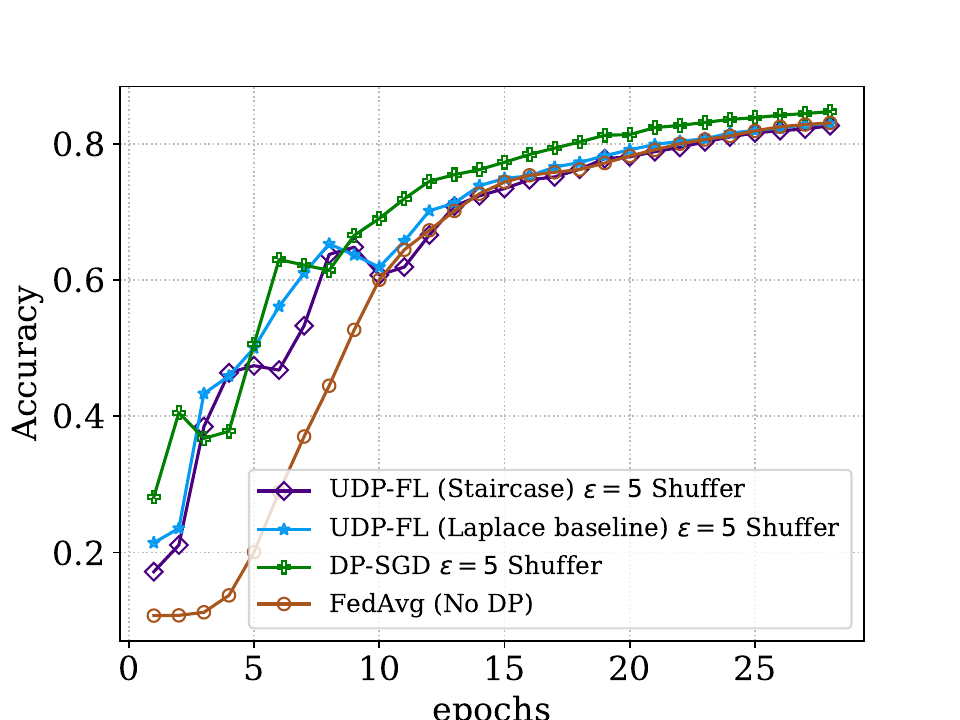}
		\label{fig:mnist_ablation_epochs_shuffler_5} }
		\hspace{-0.25in}
	\subfigure[MNIST: Accuracy vs epochs]{
		\includegraphics[angle=0, width=0.25\linewidth]{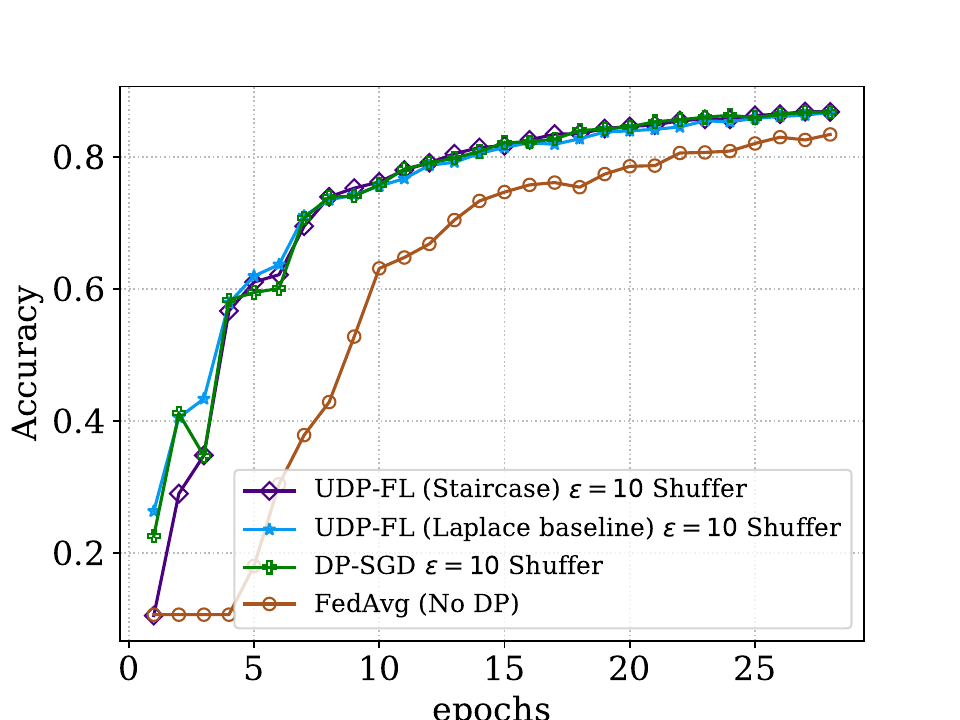}
\label{fig:mnist_ablation_epochs_shuffler_10}}
        \hspace{-0.25in}
		\subfigure[Medical: Acc. vs $\epsilon$ w/o shuffler]{
		\includegraphics[angle=0, width=0.25\linewidth]{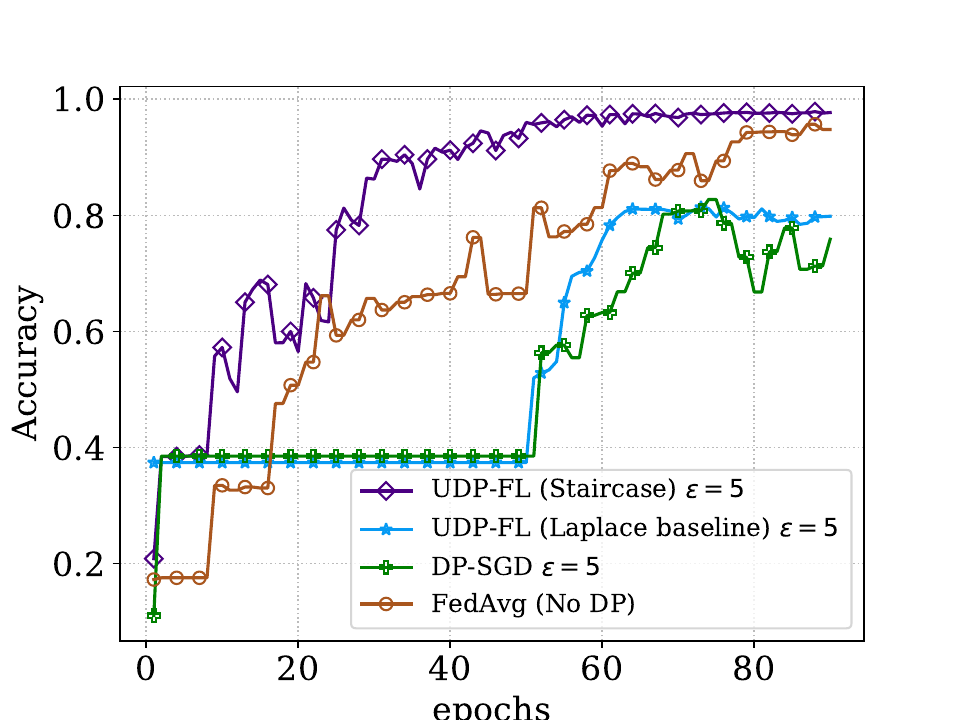}
		\label{fig:medical_ablation_epochs_origin_5} }
		\hspace{-0.25in}
	\subfigure[Medical: Accuracy vs $\epsilon$]{
		\includegraphics[angle=0, width=0.25\linewidth]{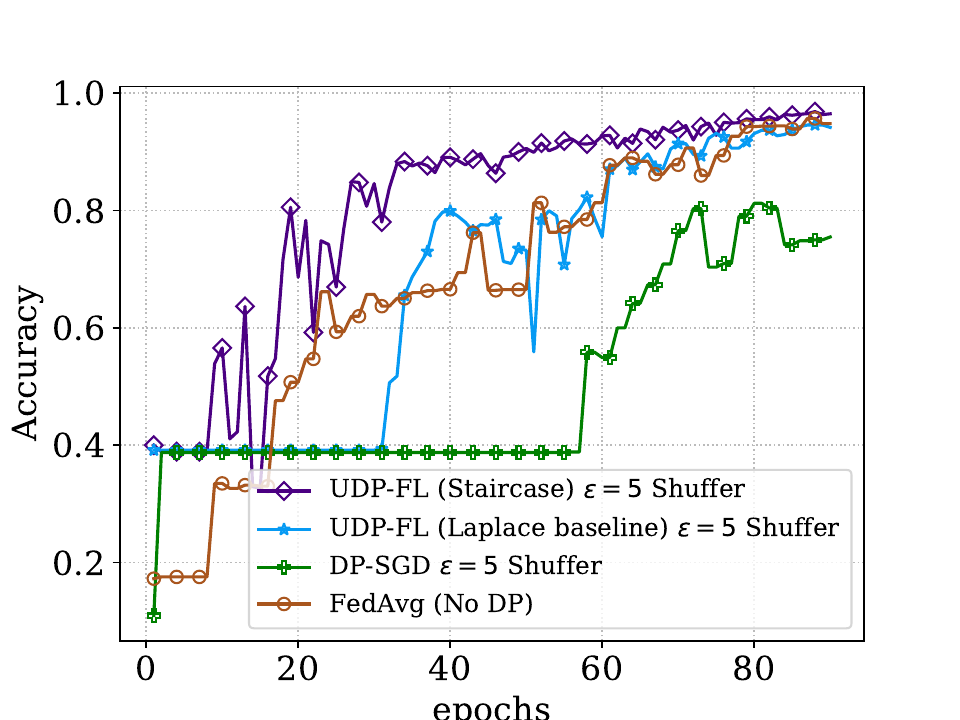}
		\label{fig:medical_ablation_epochs_shuffler_5} }
		\hspace{-0.25in}
	\subfigure[Medical: Acc. vs epochs w/o shuffler]{
		\includegraphics[angle=0, width=0.25\linewidth]{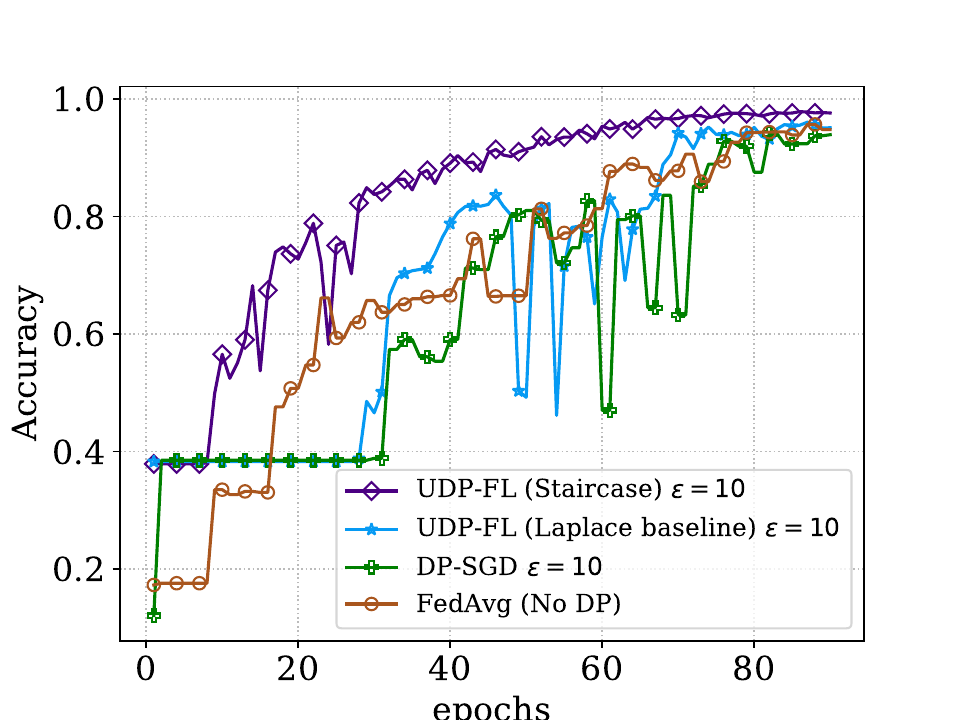}
		\label{fig:medical_ablation_epochs_origin_10} }
		\hspace{-0.25in}
	\subfigure[Medical: Accuracy vs epochs]{
		\includegraphics[angle=0, width=0.25\linewidth]{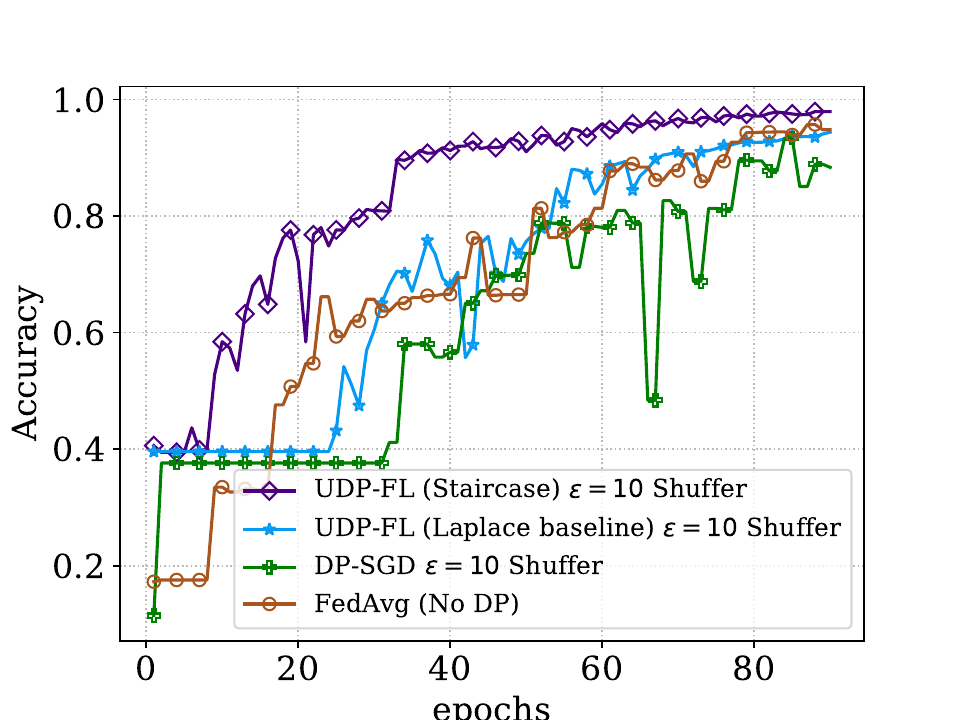}
		\label{fig:medical_ablation_epochs_shuffler_10}}
	\caption{
 Impact of Shuffler on \sys with the privacy guarantee as Figure \ref{fig:ablation}. We observe that: 1) the utility of \sys can be marginally improved with the shuffler; 2) \sys can converge slightly faster with the shuffler; and 3) \sys using the Staircase mechanism performs the best. 
 }
\label{fig:shufflerexp}
\end{figure*}

\section{Privacy and Utility Analysis for Gaussian and Laplace Mechanisms}
\label{sec:baselineprivacy}
Table \ref{table:rdpsummarize2} summarizes the noise charactristics. 
\noindent\textbf{Gaussian Mechanism}. The Gaussian mechanism is commonly used for achieving R\'enyi differential privacy \cite{mironov2017renyi,wang2022model,wang2019subsampled}. To generate Gaussian noise based on R\'enyi differential privacy for federate learning, we can adjust the added noise multiplier according to the order $\alpha$ of the R\'enyi divergence. Specifically, we can use a scaled Gaussian noise distribution whose variance is proportional to the order of the R\'enyi divergence. Based on Theorem \ref{theorem:renyidp}, the Harmonizer in \sys can transform the noise multiplier for the Gaussian mechanism to ensure tight privacy bound with the moments account for FL.

\begin{theorem}[Proven in \cite{wang2022model}]
Let $\epsilon>0,\delta>0, \alpha>1$, for $\sigma^2 \ge \frac{\alpha\Delta^2}{2\epsilon}$, Gaussian mechanism $\mathcal{N}(\Delta,\sigma^2)$ satisfies $(\alpha,\gamma)$-R\'enyi DP.
\label{theorem:renyidp}
\end{theorem}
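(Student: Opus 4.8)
The plan is to prove the bound directly from the definition of R\'enyi divergence given in Section~\ref{Def}, specializing it to the pair of output distributions that the Gaussian mechanism induces on two adjacent datasets. Fix adjacent $d, d'$ and let $f$ denote the underlying query; by construction $\mathcal{A}(d) = \mathcal{N}(f(d), \sigma^2 I)$ and $\mathcal{A}(d') = \mathcal{N}(f(d'), \sigma^2 I)$ are two isotropic Gaussians sharing a common covariance with mean gap $\|f(d) - f(d')\|_2 \le \Delta$, where $\Delta$ is the $\ell_2$ sensitivity. First I would use the rotational invariance of the isotropic Gaussian to reduce the computation to one dimension along the direction $f(d) - f(d')$: the R\'enyi divergence between two product Gaussians depends only on the separation of their means, so the multivariate problem collapses to comparing $\mathcal{N}(0,\sigma^2)$ with $\mathcal{N}(\Delta', \sigma^2)$ for $\Delta' = \|f(d)-f(d')\|_2$. (Note that $\delta$ plays no role in the RDP claim itself; it enters only later, through the RDP-to-DP conversion.)

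Next I would carry out the one-dimensional calculation. Writing $P = \mathcal{N}(\Delta', \sigma^2)$ and $Q = \mathcal{N}(0, \sigma^2)$, the likelihood ratio simplifies to $\bigl(P(x)/Q(x)\bigr)^{\alpha} = \exp\!\bigl(\tfrac{\alpha \Delta'(2x - \Delta')}{2\sigma^2}\bigr)$, so that $\mathbf{E}_{x \sim Q}\bigl[(P(x)/Q(x))^{\alpha}\bigr]$ is exactly the moment generating function of $Q$ evaluated at $\alpha\Delta'/\sigma^2$, up to the deterministic factor $\exp(-\alpha (\Delta')^2/(2\sigma^2))$. Completing the square (a standard Gaussian integral) yields $\mathbf{E}_{x \sim Q}\bigl[(P(x)/Q(x))^{\alpha}\bigr] = \exp\!\bigl(\tfrac{\alpha(\alpha-1)(\Delta')^2}{2\sigma^2}\bigr)$, and hence, by the definition $\mathcal{D}_\alpha = \tfrac{1}{\alpha-1}\log(\cdot)$,
\begin{equation*}
\mathcal{D}_\alpha\bigl(\mathcal{A}(d)\,\|\,\mathcal{A}(d')\bigr) = \frac{\alpha (\Delta')^2}{2\sigma^2} \le \frac{\alpha \Delta^2}{2\sigma^2},
\end{equation*}
where the inequality uses $\Delta' \le \Delta$ together with monotonicity of the right-hand side in the mean gap.

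Finally I would close the argument by imposing the hypothesis $\sigma^2 \ge \alpha\Delta^2/(2\epsilon)$, which forces $\tfrac{\alpha\Delta^2}{2\sigma^2} \le \epsilon$, so that the divergence is bounded by $\gamma := \epsilon$ uniformly over all adjacent pairs; this is precisely the $(\alpha,\gamma)$-RDP condition of Definition~\ref{def:rdp}. I expect the only genuinely delicate step to be the reduction to the worst case: one must argue that the supremum of $\mathcal{D}_\alpha$ over all adjacent $d,d'$ is attained when the mean gap equals the sensitivity $\Delta$, which follows from the explicit monotonic formula above, and that the isotropic-Gaussian symmetry legitimately reduces the general $\ell_2$ constraint to a scalar shift. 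The Gaussian integral itself is routine, so the technical weight of the proof sits in this sensitivity-and-symmetry reduction rather than in the evaluation.
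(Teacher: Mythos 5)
Your proof is correct and is the standard Gaussian R\'enyi-divergence computation; the paper itself gives no proof of this theorem, deferring entirely to the citation \cite{wang2022model}, and your argument reproduces exactly the calculation that such references rely on (reduce to one dimension by rotational invariance, evaluate the Gaussian MGF to get $\mathcal{D}_\alpha = \alpha(\Delta')^2/(2\sigma^2)$, and take the worst case $\Delta'=\Delta$). The only point worth flagging is notational: the theorem's conclusion is stated with $\gamma$ while the hypothesis involves $\epsilon$, and you resolve this correctly by setting $\gamma=\epsilon$ and observing that $\delta$ is vacuous at the RDP stage.
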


\noindent\textbf{Laplace Mechanism}. Similarly, the Laplace mechanism can also be used in \sys to support moments accountant for federated learning via R\'enyi differential privacy (via Theorem \ref{theorem:laplacerdp}). Then, we first prove the RDP guarantee, and then state this new setting of moments accountant. 

\begin{theorem}
For any $\alpha >1$,$\gamma>0$, Laplace mechanism $Lap(\frac{\Delta}{b})$, satisfies $(\alpha,\gamma)$-R\'enyi differential privacy, where $$\gamma=\frac{1}{b} \ge \frac{\Delta(\alpha-1)}{\log(\frac{(2\alpha-1)exp(\gamma(\alpha-1))+\alpha-1}{\alpha})}.$$
\label{theorem:laplacerdp}
\end{theorem}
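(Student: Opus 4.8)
The plan is to compute the R\'enyi divergence of order $\alpha$ between the two output distributions of the Laplace mechanism on adjacent inputs, and then impose the RDP requirement to solve for the admissible noise scale. Write the two output densities as $p(x)=\tfrac{1}{2s}e^{-|x|/s}$ and $q(x)=\tfrac{1}{2s}e^{-|x-\Delta|/s}$, where $s$ is the noise scale and, by the $\ell_1$-sensitivity bound, the two means differ by at most $\Delta$. Since the divergence is monotone in the separation of the means (as the closed form below confirms), it suffices to treat the worst case where the means differ by exactly $\Delta$, and by the symmetry of the Laplace density the two orderings give identical divergences, so only $\mathcal{D}_\alpha(p\|q)$ needs to be evaluated.

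First I would reduce the divergence to the moment integral $\mathbb{E}_{x\sim q}\big[(p(x)/q(x))^\alpha\big]=\int p(x)^\alpha q(x)^{1-\alpha}\,dx$, and after the substitution $y=x/s$, $t=\Delta/s$, this becomes $\tfrac12\int e^{-[\alpha|y|+(1-\alpha)|y-t|]}\,dy$. The integrand is piecewise exponential, so I would split the real line into the three regions $y<0$, $0\le y\le t$, and $y>t$, integrate each in closed form, and collect terms. This yields
\begin{equation*}
\mathbb{E}_{x\sim q}\big[(p/q)^\alpha\big]=\frac{\alpha}{2\alpha-1}e^{(\alpha-1)t}+\frac{\alpha-1}{2\alpha-1}e^{-\alpha t},
\end{equation*}
which is exactly the Laplace entry of Table~\ref{table:rdpsummarize2} and gives $\mathcal{D}_\alpha=\tfrac{1}{\alpha-1}\log\big[\,\cdots\,\big]$. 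Differentiating the right-hand side in $t$ confirms it is increasing for $t\ge 0$, justifying the worst-case claim above.

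To finish, I would impose $\mathcal{D}_\alpha\le\gamma$, i.e.\ $\alpha e^{(\alpha-1)t}+(\alpha-1)e^{-\alpha t}\le(2\alpha-1)e^{(\alpha-1)\gamma}$, and invert it for $t$ (equivalently for the scale, since $t=\Delta/s$). Because the subdominant term $(\alpha-1)e^{-\alpha t}$ is bounded for $t\ge 0$, I would replace it by a crude bound to decouple the inequality, reducing it to a single exponential in $t$ that solves explicitly; taking logarithms then produces a closed-form threshold on $t$, and hence on the scale, of the form $\Delta(\alpha-1)/\log\!\big(((2\alpha-1)e^{(\alpha-1)\gamma}+\alpha-1)/\alpha\big)$ stated in the theorem.

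The main obstacle is this last step: the RDP inequality mixes a growing $e^{(\alpha-1)t}$ term with a decaying $e^{-\alpha t}$ term, so it admits no exact closed-form inversion, and the clean threshold emerges only after bounding the decaying term. The delicate point is to verify that this simplification is taken in the conservative direction, so that the resulting scale genuinely forces $\mathcal{D}_\alpha\le\gamma$ rather than merely approximating it, and to confirm that the argument of the logarithm stays positive over the relevant range of $\alpha$ and $\gamma$ so the bound is well defined.
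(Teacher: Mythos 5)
Your core computation is exactly the paper's: both of you evaluate $\int p(x)^\alpha q(x)^{1-\alpha}\,dx$ for two Laplace densities of common scale separated by $\Delta$, split the line at $0$ and at $\Delta$, integrate each exponential piece, and arrive at the closed form $\mathcal{D}_\alpha=\frac{1}{\alpha-1}\log\bigl(\frac{\alpha}{2\alpha-1}e^{(\alpha-1)\Delta/b}+\frac{\alpha-1}{2\alpha-1}e^{-\alpha\Delta/b}\bigr)$, which is correct. Where you go further is the final inversion: the paper's proof in fact stops at this divergence formula and declares the proof complete, so the displayed threshold on the scale is never actually derived there. Your closing concern about the direction of the simplification is the substantive issue, and it is warranted. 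The natural way to decouple the inequality is $e^{-\alpha\Delta/b}\le 1$, which gives the sufficient condition $\alpha e^{(\alpha-1)\Delta/b}\le(2\alpha-1)e^{(\alpha-1)\gamma}-(\alpha-1)$ and hence a logarithm of $\bigl((2\alpha-1)e^{(\alpha-1)\gamma}-(\alpha-1)\bigr)/\alpha$; the theorem instead states $+(\alpha-1)$ in that numerator. With the $+$ sign, the chosen scale already makes the dominant term alone equal to $(2\alpha-1)e^{(\alpha-1)\gamma}+(\alpha-1)$, so adding the strictly positive decaying term pushes the divergence above $\gamma$ --- i.e.\ the stated threshold is in the non-conservative direction (it is also notationally inconsistent, conflating $\gamma$ with $1/b$ and reversing the expected direction of the inequality on the noise scale). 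So your plan is sound, reproduces the paper's actual argument, and is more complete than it; to land on a genuine guarantee you should carry out the inversion with the $-(\alpha-1)$ correction (or simply present the exact implicit condition $\alpha e^{(\alpha-1)\Delta/b}+(\alpha-1)e^{-\alpha\Delta/b}\le(2\alpha-1)e^{(\alpha-1)\gamma}$), rather than the formula as printed.
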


\begin{proof}
\label{laplaceproof}
\normalsize
Suppose we use the Laplace mechanism to get noisy query result. We firstly define Laplace distribution as $\Lambda(u,b)=\frac{1}{2b}exp(\frac{-|x-u|}{b})$. Next, we will calculate the error bound of R\'enyi divergence between $\Lambda(0,b)$ and $\Lambda(\Delta,b)$.
\footnotesize
\begin{equation}
\begin{split}
&\mathcal{D}_\alpha(\Lambda(0,b)||\Lambda(0,b))=\frac{1}{\alpha-1}\log \int_{-\infty}^{\infty} p(x)^\alpha q(x)^{1-\alpha} dx\\
&=\frac{1}{(\alpha-1)(2\lambda)}\log \int_{-\infty}^{0} exp(\frac{\alpha x}{\lambda}+\frac{(1-\alpha)(x-\Delta)}{\lambda}) dx\\
&+\frac{1}{(\alpha-1)(2\lambda)}\log\int_{0}^{s} exp(\frac{-\alpha x}{\lambda}+\frac{(1-\alpha)(x-\Delta)}{\lambda}) dx\\
&+\frac{1}{(\alpha-1)(2\lambda)}\log\int_{s}^{\infty} exp(\frac{\alpha x}{\lambda}+\frac{(1-\alpha)(x-\Delta)}{\lambda}) dx\\
&=\frac{1}{(\alpha-1)}\log\{\frac{\alpha}{2\alpha-1}exp(\frac{\Delta(\alpha-1)}{\lambda})+
\frac{\alpha-1}{2\alpha-1}exp(\frac{-\Delta\alpha}{\lambda})\}\\
\end{split}
\end{equation}
\normalsize
Thus, this completes the proof.
\end{proof}
This theorem will be used in Section \ref{sec:buildingblock} to check if the current noise multiplier will preserve the privacy budget or not.

\begin{lemma}
\label{theo:gaussianutility}
The expectation of noise amplitude for \sys with Gaussian mechanism is $\frac{2\sigma}{\sqrt2\pi}$ where $\sigma = \sqrt{\frac{\alpha\Delta_s^2}{2\epsilon_R}}$.
\end{lemma}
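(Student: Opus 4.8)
The plan is to observe that the quantity called the ``noise amplitude'' is nothing but the expected magnitude $\mathbb{E}[|X|]$ of one coordinate of the zero-mean Gaussian perturbation $X \sim \mathcal{N}(0,\sigma^2)$ that \sys adds to each clipped gradient entry, so the statement splits into two independent pieces: (i) evaluating this half-normal expectation in closed form, and (ii) substituting the standard deviation $\sigma$ that the Harmonizer must choose to certify the target order-$\alpha$ R\'enyi guarantee.

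First I would compute the expectation directly against the Gaussian density, using symmetry to fold the integral onto the positive axis:
\begin{align*}
\mathbb{E}[|X|] = \int_{-\infty}^{\infty} |x|\,\frac{1}{\sigma\sqrt{2\pi}}\,e^{-\frac{x^2}{2\sigma^2}}\,dx = \frac{2}{\sigma\sqrt{2\pi}}\int_{0}^{\infty} x\,e^{-\frac{x^2}{2\sigma^2}}\,dx.
\end{align*}
The remaining one-dimensional integral is elementary: the substitution $u = x^2/(2\sigma^2)$, for which $x\,dx = \sigma^2\,du$, reduces it to $\sigma^2\int_0^\infty e^{-u}\,du = \sigma^2$, and therefore $\mathbb{E}[|X|] = \frac{2\sigma^2}{\sigma\sqrt{2\pi}} = \frac{2\sigma}{\sqrt{2\pi}}$ (equivalently $\sigma\sqrt{2/\pi}$). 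This is exactly the per-coordinate half-normal factor that already appears inside the proof of Theorem~\ref{theo:gaussianutilitydist}, where it is then multiplied by the number of coordinates $m$ and the number of rounds $T$; here I simply keep the single-sample version.

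The second step fixes $\sigma$ in terms of the privacy parameters. By Theorem~\ref{theorem:renyidp}, the Gaussian mechanism with sensitivity $\Delta_s$ satisfies $(\alpha,\epsilon_R)$-RDP precisely when $\sigma^2 \ge \frac{\alpha\Delta_s^2}{2\epsilon_R}$, so the tightest admissible choice the Harmonizer makes is $\sigma = \sqrt{\frac{\alpha\Delta_s^2}{2\epsilon_R}}$. Plugging this value into the amplitude formula from the first step produces the claimed expression, completing the argument.

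There is no genuine analytic obstacle: the integral is a textbook half-normal moment. The only points requiring care are bookkeeping ones. I would state explicitly that $\Delta_s$ is the $\ell_2$ sensitivity of the clipped gradient (upper bounded by the clip norm $c$) and that $\epsilon_R$ is the order-$\alpha$ R\'enyi budget the accountant allots per step, so that the $\sigma$ entering the amplitude is consistent with the one certified by Theorem~\ref{theorem:renyidp}; and I would make sure the final normalization constant ($\tfrac{2}{\sqrt{2\pi}}$, equivalently $\sqrt{2/\pi}$) is reported in a form consistent with the lemma's stated constant.
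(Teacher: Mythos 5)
Your proposal is correct and follows essentially the same route as the paper: the half-normal moment computation $\int_{-\infty}^{\infty}|x|\,\tfrac{1}{\sigma\sqrt{2\pi}}e^{-x^2/(2\sigma^2)}\,dx=\sigma\sqrt{2/\pi}$ is exactly the integral the paper evaluates (scaled by $Tm$) in the proof of Theorem~\ref{theo:gaussianutilitydist}, and the choice of $\sigma$ comes from Theorem~\ref{theorem:renyidp} just as you state. Your care about the normalization is warranted: the lemma's displayed constant $\frac{2\sigma}{\sqrt2\pi}$ reads with $\pi$ outside the radical, whereas the correct value you derive is $\frac{2\sigma}{\sqrt{2\pi}}=\sigma\sqrt{2/\pi}$, consistent with the paper's own integral.
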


\begin{lemma}
The expectation of noise amplitude for \sys with Laplace mechanism $f(\Delta,b)$ is $\Delta/b$.
\label{theorem:laplaceutility}
\end{lemma}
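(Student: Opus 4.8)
The plan is to recognize this as a routine first-absolute-moment computation for a zero-mean Laplace random variable, entirely parallel to the Gaussian case in Lemma~\ref{theo:gaussianutility} and to the per-round argument in Theorem~\ref{theorem:laplaceu}. First I would fix the parametrization convention: following the mechanism notation $Lap(\tfrac{\Delta}{b})$ used in Theorem~\ref{theorem:laplacerdp}, the mechanism $f(\Delta,b)$ injects noise $X$ drawn from a Laplace distribution centered at $0$ with scale $s=\Delta/b$, so its density is $p(x)=\frac{b}{2\Delta}\exp\!\left(-\frac{b|x|}{\Delta}\right)$. Since $X$ is symmetric about $0$, its mean is zero, and the quantity of interest is the expected \emph{amplitude} $\mathbb{E}[|X|]$, not $\mathbb{E}[X]$.

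Next I would compute $\mathbb{E}[|X|]=\int_{-\infty}^{\infty}|x|\,p(x)\,dx$. Exploiting the even symmetry of $|x|\,p(x)$, this reduces to $2\int_{0}^{\infty} x\,\frac{b}{2\Delta}\,e^{-bx/\Delta}\,dx=\frac{b}{\Delta}\int_{0}^{\infty} x\,e^{-bx/\Delta}\,dx$. Evaluating the remaining integral via the standard identity $\int_0^\infty x\,e^{-x/s}\,dx=s^2$ with $s=\Delta/b$ (or by a single integration by parts) yields $\frac{b}{\Delta}\cdot\frac{\Delta^2}{b^2}=\frac{\Delta}{b}$. Equivalently, one can simply invoke the well-known fact that the mean absolute deviation of a zero-mean Laplace variable equals its scale $s$, and substitute $s=\Delta/b$.

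There is no genuine obstacle here beyond bookkeeping: the integrand is elementary and the integral converges for every $b>0$. The only point requiring care --- and the one I would state explicitly to forestall ambiguity --- is the scale convention, namely confirming that $b$ plays the role of an inverse scale so that the effective Laplace scale is $\Delta/b$. This is precisely what makes the answer $\Delta/b$ rather than $b$, distinguishing it from the scale-$b$ computation that produces $Tmb$ in Theorem~\ref{theorem:laplaceu}. With the convention pinned down, the claim follows immediately from the closed-form integral above.
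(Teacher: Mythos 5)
Your computation is correct and is exactly the first-absolute-moment integral that the paper itself performs for the closely related Theorem~\ref{theorem:laplaceu}; the paper states Lemma~\ref{theorem:laplaceutility} without an explicit proof, treating it as this standard fact about the mean absolute deviation of a Laplace variable. Your explicit attention to the scale convention is well placed, since the paper is not internally consistent on whether the Laplace scale is $b$ (as in the proof of Theorem~\ref{theorem:laplaceu}, which yields $Tmb$) or $\Delta/b$ (as in the $Lap(\Delta/b)$ notation of Theorem~\ref{theorem:laplacerdp}, which is the convention required for the stated answer $\Delta/b$).
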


\begin{theorem}
\label{theo:gaussianutilitydist}
The expectation of the $\ell_1$ distance for the output model parameters preserved by \sys with Gaussian mechanism after $T$ training round is ${\sqrt{\frac{2 m^2 \sigma^2}{\pi}}}$ where $m$ is the length of the loss function, $\sigma$ is the noise multiplier computed by \sys. 
\end{theorem}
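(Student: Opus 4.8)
The plan is to reduce the computation to a single one-dimensional Gaussian absolute-moment integral and then lift it to all $m$ coordinates and all $T$ rounds by linearity of expectation. First I would model the effect of the Gaussian mechanism: at each coordinate $i \in \{1,\dots,m\}$ the released parameter is $w_i^* = w_i + Z_i$ where $Z_i \sim \mathcal{N}(0,\sigma^2)$ is the injected noise with multiplier $\sigma$ supplied by the Harmonizer, so that the per-coordinate error $w_i^* - w_i$ is exactly a centered Gaussian. The $\ell_1$ distance is $\sum_{i=1}^m |w_i^* - w_i|$, and since expectation is linear and the coordinates are identically distributed, $\mathbb{E}\big[\sum_{i=1}^m |w_i^*-w_i|\big] = m\,\mathbb{E}|Z|$ for a single $Z \sim \mathcal{N}(0,\sigma^2)$.

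The core calculation is the first absolute moment of a centered Gaussian (the mean of the associated half-normal / folded-normal law). I would write $\mathbb{E}|Z| = \int_{-\infty}^{\infty} |x|\, \frac{1}{\sigma\sqrt{2\pi}} e^{-x^2/(2\sigma^2)}\,dx = 2\int_{0}^{\infty} x\, \frac{1}{\sigma\sqrt{2\pi}} e^{-x^2/(2\sigma^2)}\,dx$, exploiting the evenness of the integrand to fold the real line onto $[0,\infty)$. The substitution $u = x^2/(2\sigma^2)$ turns this into an elementary $\int_0^\infty e^{-u}\,du$ integral and yields $\mathbb{E}|Z| = \sigma\sqrt{2/\pi}$.

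Assembling the pieces gives a per-round expected $\ell_1$ distance of $m\sigma\sqrt{2/\pi}$, and summing the per-round contributions across the $T$ training iterations by linearity delivers $Tm\sigma\sqrt{2/\pi} = \sqrt{2m^2\sigma^2 T^2 / \pi}$, matching the claimed bound up to how the round index is absorbed into the radical. The main obstacle I anticipate is not the integral, which is routine, but justifying the accumulation step: the statement measures the total expected $\ell_1$ perturbation aggregated over all $T$ rounds rather than the $\ell_1$ norm of a single composed noise draw, so the $T$ appears linearly (one additive contribution per round) rather than as $\sqrt{T}$ from variance addition. I would make this modeling assumption explicit — each round injects fresh, independent noise and we sum the per-round expected errors — and would also flag that the coordinates are treated as i.i.d.\ with a common multiplier $\sigma$, which is precisely what lets the sum collapse to $m$ copies of the scalar half-normal mean.
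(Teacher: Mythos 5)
Your proof is correct and follows essentially the same route as the paper's: linearity of expectation over the $m$ coordinates, the half-normal first absolute moment $\mathbb{E}|Z|=\sigma\sqrt{2/\pi}$, and a linear accumulation over the $T$ rounds, arriving at $Tm\sigma\sqrt{2/\pi}=\sqrt{2m^2\sigma^2T^2/\pi}$. Note that this is exactly what the paper's own proof computes as well, so the $T^2$ you flag as "absorbed into the radical" is genuinely present in the derivation and is simply dropped from the theorem statement as written.
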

\begin{proof}
See detailed proof in Appendix \ref{proof:gaussianutilitydist}.
\end{proof}

\begin{theorem}
\label{theorem:laplaceu}
The expectation of the $\ell_1$ distance for the output model parameters preserved by \sys with Laplace mechanism after $T$ training round is $Tmb$ where $m$ is the length of the loss function, $b$ is the noise multiplier computed by \sys. 
\end{theorem}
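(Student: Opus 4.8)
The plan is to mirror the computation used for the Gaussian case in Theorem~\ref{theo:gaussianutilitydist}, replacing the Gaussian density by the Laplace density. The key observation is that \sys with the Laplace mechanism perturbs each coordinate of the model parameters by independent additive noise $X \sim \mathrm{Lap}(0,b)$, so the per-coordinate error $|w_i^* - w_i|$ is exactly $|X|$, the absolute value of a zero-mean Laplace variate with scale $b$ (the noise multiplier supplied by the Harmonizer via Theorem~\ref{theorem:laplacerdp}).

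First I would write out the total expected $\ell_1$ distance and use linearity of expectation together with the fact that the $m$ coordinates, across each of the $T$ rounds, carry i.i.d. noise, to factor it as
\begin{equation*}
T\,\mathbb{E}\!\left(\sum_{i=1}^m |w_i^* - w_i|\right) = Tm\,\mathbb{E}|X|,
\end{equation*}
where $X \sim \mathrm{Lap}(0,b)$ has density $f(x) = \frac{1}{2b} e^{-|x|/b}$.

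Next I would evaluate the single-variable expectation $\mathbb{E}|X| = \int_{-\infty}^{\infty} |x| f(x)\, dx$. By symmetry of the density about the origin this reduces to $2\int_0^\infty x\,\frac{1}{2b} e^{-x/b}\, dx = \frac{1}{b}\int_0^\infty x\, e^{-x/b}\, dx$, and recognizing the remaining integral as the first moment of an exponential distribution with rate $1/b$ (equivalently, the substitution $u = x/b$ together with $\Gamma(2)=1$) gives $\frac{1}{b}\cdot b^2 = b$. Substituting back yields $Tmb$, which completes the argument.

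The computation itself is routine; the only point demanding care is the normalization and parameterization of the Laplace density, that is, scale $b$ versus rate $1/b$, since the clipped sensitivity $\Delta$ and the RDP-derived scale from Theorem~\ref{theorem:laplacerdp} must be reconciled so that the $b$ appearing in the bound is precisely the noise multiplier computed by the Harmonizer. One should also confirm that the per-round noise is injected independently, so that the $T$-fold accumulation is a clean multiplicative factor rather than introducing cross-round correlations that would invalidate the simple factorization above.
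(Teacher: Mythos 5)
Your proposal is correct and follows essentially the same route as the paper's proof: factor the total expected $\ell_1$ error as $Tm\,\mathbb{E}|X|$ for $X\sim\mathrm{Lap}(0,b)$ and evaluate $\mathbb{E}|X|=b$ via the first moment of the exponential. Your write-up is in fact cleaner, since you use the correct normalization $\frac{1}{2b}e^{-|x|/b}$ where the paper's displayed integrand contains a typo ($-2b$ in place of $\frac{1}{2b}$), and you explicitly flag the independence across rounds that justifies the factor of $T$.
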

\begin{proof}
See detailed proof in Appendix \ref{proof:laplaceu}.
\end{proof}

\section{Supplementary DRA Evaluations}
\label{sec:DRA}

We evaluate the data reconstruction attacks \cite{fredrikson2015model} on \sys on CIFAR-10. These attacks aim to reconstruct training data points from a target model. The adversary first trains a separate reconstruction model on a dataset from a similar distribution as the target model's training data. The reconstruction model learns to generate synthetic inputs that closely match real samples, using the predictions from the target model as feedback. By optimizing the synthetic inputs to minimally change the target model's outputs, the reconstruction attack extracts information about the original training data. We assess the attack's efficacy by measuring the mean squared error (MSE) and mean absolute error (MAE) between the reconstructed and original images. 

\vspace{0.15in}

\begin{table}[!ht]
\centering
\footnotesize
\caption{Evaluation for data reconstruction attacks.}
\label{tab:DR_ml}
\resizebox{\columnwidth}{!}{
\begin{tabular}{l|cc|cc|cc|cc}
\hline
\multirow{2}{*}{$\epsilon$} &
  \multicolumn{2}{c|}{DP-SGD} &
  \multicolumn{2}{c|}{\begin{tabular}[c]{@{}c@{}}\sys \\ (Laplace baseline)\end{tabular}} &
  \multicolumn{2}{c|}{\begin{tabular}[c]{@{}c@{}}\sys \\ (Staircase)\end{tabular}} &
  \multicolumn{2}{c}{Non-Private} \\ \cline{2-9} 
  & MSE  & MAE  & MSE  & MAE  & MSE  & MAE  & MSE    & MAE    \\ \hline
2 & 2.40 & 1.26 & 2.42 & 1.27 & \cellcolor{green!20}\textbf{2.44} & \cellcolor{green!20}\textbf{1.28} & 1.23 & 0.65 \\
4 & 2.35 & 1.24 & 2.38 & 1.25 & \cellcolor{green!20}\textbf{2.40} & \cellcolor{green!20}\textbf{1.26} & 1.23 & 0.65 \\
6 & 2.30 & 1.22 & 2.33 & 1.23 & \cellcolor{green!20}\textbf{2.35} & \cellcolor{green!20}\textbf{1.24} & 1.23 & 0.65 \\
8 & 2.25 & 1.20 & 2.28 & 1.21 & \cellcolor{green!20}\textbf{2.30} & \cellcolor{green!20}\textbf{1.22} & 1.23 & 0.65 \\ \hline
\end{tabular}}
\end{table}

Table \ref{tab:DR_ml} emphasizes the effectiveness of the \sys framework, particularly with its Staircase mechanism, in mitigating data reconstruction attacks. This configuration consistently exhibits slightly higher MSE and MAE compared to both DP-SGD and \sys with Laplace baseline, suggesting a more robust defense against reconstruction attacks.

\end{document}